\newcommand{\norm}[1]{\left\lVert#1\right\rVert}
\newtheorem*{rep@theorem}{\rep@title}
\newcommand{\newreptheorem}[2]{%
\newenvironment{rep#1}[1]{%
 \def\rep@title{#2 \ref{##1}}%
 \begin{rep@theorem}}%
 {\end{rep@theorem}}}
\newtheorem{theorem}{Theorem}
\newtheorem{lemma}{Lemma}
\newtheorem{definition}{Definition}
\newtheorem{proposition}{Proposition}
\newtheorem{assumption}{Assumption}
\newcommand{\mP}{\mathbb P}
\newcommand{\mE}{\mathbb E}
\newcommand{\mR}{\mathbb R}
\newcommand{\mcF}{\mathcal F}
\newcommand{\mcs}{\mathcal S}
\newcommand{\mca}{\mathcal A}
\newcommand{\mcT}{\mathcal T}
\title{Finite-Time Analysis for Double Q-learning}
\begin{document}

\author[1]{Huaqing Xiong}
\author[2]{Lin Zhao}
\author[1]{Yingbin Liang}
\author[ .3,4]{Wei Zhang\thanks{Corresponding author}}
\affil[1]{The Ohio State University}
\affil[2]{National University of Singapore}
\affil[3]{Southern University of Science and Technology}
\affil[4]{Peng Cheng Laboratory}
\affil[ ]{\textsuperscript{1}\{xiong.309, liang.889\}@osu.edu;\quad \textsuperscript{2}elezhli@nus.edu.sg;\quad \textsuperscript{3,4}zhangw3@sustech.edu.cn}

\maketitle

\begin{abstract}
Although Q-learning is one of the most successful algorithms for finding the best action-value function (and thus the optimal policy) in reinforcement learning, its implementation often suffers from large overestimation of Q-function values incurred by random sampling. The double Q-learning algorithm proposed in~\citet{hasselt2010double} overcomes such an overestimation issue by randomly switching the update between two Q-estimators, and has thus gained significant popularity in practice. However, the theoretical understanding of double Q-learning is rather limited. So far only the asymptotic convergence has been established, which does not characterize how fast the algorithm converges. In this paper, we provide the first non-asymptotic (i.e., finite-time) analysis for double Q-learning. We show that both synchronous and asynchronous double Q-learning are guaranteed to converge to an $\epsilon$-accurate neighborhood of the global optimum by taking
$\tilde{\Omega}\left(\left( \frac{1}{(1-\gamma)^6\epsilon^2}\right)^{\frac{1}{\omega}} +\left(\frac{1}{1-\gamma}\right)^{\frac{1}{1-\omega}}\right)$ iterations, where $\omega\in(0,1)$ is the decay parameter of the learning rate, and $\gamma$ is the discount factor.
Our analysis develops novel techniques to derive finite-time bounds on the difference between two inter-connected stochastic processes, which is new to the literature of stochastic approximation.
\end{abstract}

\section{Introduction}

Q-learning is one of the most successful classes of reinforcement learning (RL) algorithms, which aims at finding the optimal action-value function or Q-function (and thus the associated optimal policy) via off-policy data samples. The Q-learning algorithm was first proposed by \citet{watkins1992q}, 
and since then, it has been widely used in various applications including robotics~\citep{tai2016robot}, autonomous driving~\citep{okuyama2018autonomous}, video games~\citep{mnih2015human}, to name a few. Theoretical performance of Q-learning has also been intensively explored. The asymptotic convergence has been established in \citet{tsitsiklis1994asynchronous,jaakkola1994convergence,borkar2000ode,melo2001convergence,Lee2019Switch}. The non-asymptotic (i.e., finite-time) convergence rate of Q-learning was firstly obtained in \citet{szepesvari1998asymptotic}, and has been further studied in~\citep{even2003learning,shah2018q,wainwright2019stochastic,beck2012error,Chen2020finiteSample} for synchronous Q-learning and in ~\citep{even2003learning,qu2020finite} for asynchoronous Q-learning. 



One major weakness of Q-learning arises in practice due to the large overestimation of the action-value function~\citep{hasselt2010double,van2016deep}. Practical implementation of Q-learning involves using the maximum {\em sampled} Q-function to estimate the maximum \textit{expected} Q-function (where the expectation is taken over the randomness of reward). Such an estimation often yields a large positive bias error~\citep{hasselt2010double}, and causes Q-learning to perform rather poorly.
To address this issue, double Q-learning was proposed in~\citet{hasselt2010double}, which keeps two Q-estimators (i.e., estimators for Q-functions), one for estimating the maximum Q-function value and the other one for update, and continuously changes the roles of the two Q-estimators in a random manner. It was shown in \citet{hasselt2010double} that such an algorithm effectively overcomes the overestimation issue of the vanilla Q-learning. In \citet{van2016deep}, double Q-learning was further demonstrated to substantially improve the performance of Q-learning with deep neural networks (DQNs) for playing Atari 2600 games. It inspired many variants~\citep{zhang2017weighted,abed2018double}, received a lot of applications~\citep{zhang2018double,zhang2018human}, and have become one of the most common techniques for applying Q-learning type of algorithms~\citep{hessel2018rainbow}.

Despite its tremendous empirical success and popularity in practice, theoretical understanding of double Q-learning is rather limited. Only the asymptotic convergence was provided in \citet{hasselt2010double,weng2020provably}. There has been no non-asymptotic result on how fast double Q-learning converges. From the technical standpoint, such finite-time analysis for double Q-learning does not follow readily from those for the vanilla Q-learning, because it involves two randomly updated Q-estimators, and the coupling between these two random paths significantly complicates the analysis. This goes much more beyond the existing techniques for analyzing the vanilla Q-learning that handles the random update of a single Q-estimator. Thus, {\em the goal of this paper is to develop new finite-time analysis techniques that handle the inter-connected two random path updates in double Q-learning and provide the convergence rate.}

\vspace{-1em}
\subsection{Our contributions}

The main contribution of this paper lies in providing the first finite-time analysis for double Q-learning with both the synchronous and asynchronous implementations. 
\begin{list}{$\bullet$}{\topsep=0.ex \leftmargin=0.3in \rightmargin=0.in \itemsep =0.02in}

\item We show that synchronous double Q-learning with a learning rate $\alpha_t = 1/t^\omega$ (where $\omega\in(0,1)$) attains an $\epsilon$-accurate global optimum with at least the probability of $1-\delta$ by taking $\Omega\left( \left( \frac{1}{(1-\gamma)^6\epsilon^2}\ln \frac{|\mcs||\mca|}{(1-\gamma)^7\epsilon^2\delta} \right)^{\frac{1}{\omega}} + \left(\frac{1}{1-\gamma} \ln\frac{ 1}{(1-\gamma)^2\epsilon} \right)^{\frac{1}{1-\omega}} \right)$ iterations, where $\gamma\in(0,1)$ is the discount factor, 
$|\mcs|$ and $|\mca|$ are the sizes of the state space and action space, respectively. 

\item We further show that under the same accuracy and high probability requirements, asynchronous double Q-learning takes $\Omega\left( \left( \frac{L^4}{(1-\gamma)^6\epsilon^2}\ln \frac{|\mcs||\mca|L^4}{(1-\gamma)^7\epsilon^2\delta} \right)^{\frac{1}{\omega}} + \left(\frac{L^2}{1-\gamma} \ln\frac{ 1}{(1-\gamma)^2\epsilon} \right)^{\frac{1}{1-\omega}} \right)$ iterations, where $L$ is the covering number specified by the exploration strategy. 
\end{list}

Our results corroborate the design goal of double Q-learning, which opts for better accuracy by making less aggressive progress during the execution in order to avoid overestimation. Specifically, our results imply that in the high accuracy regime, double Q-learning achieves the same convergence rate as vanilla Q-learning in terms of the order-level dependence on $\epsilon$, which further indicates that the high accuracy design of double Q-learning dominates the less aggressive progress in such a regime. In the low-accuracy regime, which is not what double Q-learning is designed for, the cautious progress of double Q-learning yields a slightly weaker convergence rate than Q-learning in terms of the dependence on $1-\gamma$. 

From the technical standpoint, our proof develops new techniques beyond the existing finite-time analysis of the vanilla Q-learning with a single random iteration path. 
More specifically, we model the double Q-learning algorithm as two alternating stochastic approximation (SA) problems, where one SA captures the error propagation between the two Q-estimators, and the other captures the error dynamics between the Q-estimator and the global optimum. For the first SA, we develop new techniques to provide the finite-time bounds on the two inter-related stochastic iterations of Q-functions. Then we 
develop new tools to bound the convergence of Bernoulli-controlled stochastic iterations of the second SA conditioned on the first SA. 


\subsection{Related work}

Due to the rapidly growing literature on Q-learning, we review only the theoretical results that are highly relevant to our work.

Q-learning was first proposed in \citet{watkins1992q} under finite state-action space. Its asymptotic convergence has been established in \citet{tsitsiklis1994asynchronous,jaakkola1994convergence,borkar2000ode,melo2001convergence} through studying various general SA algorithms that include Q-learning as a special case. Along this line, \citet{Lee2019Switch} characterized Q-learning as a switched linear system and applied the results of~\citet{borkar2000ode} to show the asymptotic convergence, which was also extended to other Q-learning variants. Another line of research focuses on the finite-time analysis of Q-learning which can capture the convergence rate. Such non-asymptotic results were firstly obtained in \citet{szepesvari1998asymptotic}. A more comprehensive work~\citep{even2003learning} provided finite-time results for both synchronous and asynchoronous Q-learning. Both \citet{szepesvari1998asymptotic} and \citet{even2003learning} showed that with linear learning rates, the convergence rate of Q-learning can be exponentially slow as a function of $\frac{1}{1-\gamma}$. To handle this, the so-called rescaled linear learning rate was introduced to avoid such an exponential dependence in synchronous Q-learning~\citep{wainwright2019stochastic,Chen2020finiteSample} and asynchronous Q-learning~\citep{qu2020finite}. The finite-time convergence of Q-learning was also analyzed with constant step sizes~\citep{beck2012error,Chen2020finiteSample,li2020sample}. Moreover, the polynomial learning rate, which is also the focus of this work, was investigated for both synchronous~\citep{even2003learning,wainwright2019stochastic} and asynchronous Q-learning~\citep{even2003learning}. In addition, it is worth mentioning that \cite{shah2018q} applied the nearest neighbor approach to handle MDPs on infinite state space.

Differently from the above extensive studies of vanilla Q-learning, theoretical understanding of double Q-learning is limited. The only theoretical guarantee was on the asymptotic convergence provided by \citet{hasselt2010double,weng2020provably}, which do not provide the non-asymptotic (i.e., finite-time) analysis on how fast double Q-learning converges. 
This paper provides the first finite-time analysis for double Q-learning.

The vanilla Q-learning algorithm has also been studied for the function approximation case, i.e., the Q-function is approximated by a class of parameterized functions. In contrast to the tabular case, even with linear function approximation, Q-learning has been shown not to converge in general \citep{baird1995residual}. Strong assumptions are typically imposed to guarantee the convergence of Q-learning with function approximation \citep{bertsekas1996neuro,zou2019finite,Chen2019finiteQ,du2019provably,xu2019deepQ,cai2019neural,weng2020analysis,weng2020momentum}. Regarding double Q-learning, it is still an open topic on how to design double Q-learning algorithms under function approximation and under what conditions they have theoretically guaranteed convergence.

\section{Preliminaries on Q-learning and Double Q-learning}

In this section, we introduce the Q-learning and the double Q-learning algorithms.

\subsection{Q-learning}

We consider a $\gamma$-discounted Markov decision process (MDP) with a finite state space $\mcs$ and a finite action space $\mca$. The transition probability of the MDP is given by $P:\mcs \times \mca \times \mcs \rightarrow [0,1]$, that is, $\mathbb{P}(\cdot|s, a)$ denotes the probability distribution of the next state given the current state $s$ and action $a$. We consider a random reward function $R_t$ at time $t$ drawn from a fixed distribution $\phi: \mathcal{S}\times \mathcal{A}\times \mathcal{S} \mapsto \mathbb{R}$, where $\mathbb{E}\{R_t(s,a,s')\}=R_{sa}^{s'}$ and $s'$ denotes the next state starting from $(s,a)$. In addition, we assume $|R_t|\leq R_{\max}$.
A policy $\pi:=\pi(\cdot|s)$ characterizes the conditional probability distribution over the action space $\mathcal{A}$ given each state $s\in\mathcal{S}$. 

The action-value function (i.e., Q-function) $Q^{\pi}\in\mR^{|\mcs|\times |\mca|}$ for a given policy $\pi$ is defined as
\begin{align}\label{eq:Qfunction}
    Q^{\pi}(s,a):=&\mE\left[\sum_{t=0}^{\infty}\gamma^t R_t(s,\pi(s),s')\Big|s_0=s,a_0=a \right] \nonumber \\
    =& \mE_{\substack{s'\sim P(\cdot|s,a)\\a'\sim\pi(\cdot|s')}} \left[R_{sa}^{s'}+\gamma Q^{\pi}(s',a')\right],
\end{align}
where $\gamma\in(0,1)$ is the discount factor. Q-learning aims to find the Q-function of an optimal policy $\pi^*$ that maximizes the accumulated reward. The existence of such a $\pi^*$ has been proved in the classical MDP theory~\citep{bertsekas1996neuro}. The corresponding optimal Q-function, denoted as $Q^*$, is known as the unique fixed point of the Bellman operator $\mcT$ given by
\begin{equation}\label{eq:BellmanOperator}
    \mcT Q(s,a) = \mE_{s'\sim P(\cdot|s,a)} \left[R_{sa}^{s'}+\gamma \underset{a' \in U(s')}{\max}Q(s',a')\right],
\end{equation}
where $ U(s')\subset\mca$ is the admissible set of actions at state $s'$. It can be shown that the Bellman operator $\mcT$ is $\gamma$-contractive in the supremum norm $\norm{Q}:=\max_{s,a}|Q(s,a)|$, i.e., it satisfies
\begin{equation}
    \label{eq:Contraction}
    \norm{\mcT Q - \mcT Q'} \leq \gamma\norm{Q - Q'}.
\end{equation}
The goal of Q-learning is to find $Q^*$, which further yields $\pi^*(s) = \arg\max_{a\in U(s)}Q^*(s,a)$.
In practice, however, exact evaluation of the Bellman operator~\eqref{eq:BellmanOperator} is usually infeasible due to the lack of knowledge of the transition kernel of MDP and the randomness of the reward. Instead, Q-learning draws random samples to estimate the Bellman operator and iteratively learns $Q^*$ as
\begin{equation}\label{eq:qlearning}
    Q_{t+1}(s,a) = (1-\alpha_t(s,a)) Q_t(s,a) + \alpha_t(s,a) \left( R_t(s,a,s') + \gamma\underset{a'\in U(s')}{\max}Q_t(s',a') \right),
\end{equation}
where $R_t$ is the sampled reward, $s'$ is sampled by the transition probability given $(s,a)$, and $\alpha_t(s,a)\in(0,1]$ denotes the learning rate.

\subsection{Double Q-learning}

Although Q-learning is a commonly used RL algorithm to find the optimal policy, it can suffer from overestimation in practice~\citep{smith2006optimizer}. To overcome this issue, \citet{hasselt2010double} proposed double Q-learning given in Algorithm~\ref{alg:doubleQ}.

\begin{algorithm}[H]
 	\caption{Synchronous Double Q-learning~\citep{hasselt2010double}} \label{alg:doubleQ} 
 	\begin{algorithmic}[1]
 		\STATE 	{\bf Input:} Initial $Q^A_1, Q^B_1$.
		\FOR{$t=1,2,\dots,T$ }
		\STATE Assign learning rate $\alpha_t$.
		\STATE Randomly choose either UPDATE(A) or UPDATE(B) with probability 0.5, respectively.
		\FOR{each $(s,a)$}
		\STATE observe $ s'\sim P(\cdot|s,a)$, and  sample $R_t(s,a,s')$.
		\IF{UPDATE(A)} 
		\STATE Obtain $a^* = \arg\max_{a'}Q^A_t(s',a')$
		\STATE $Q^A_{t+1}(s,a) = Q^A_t(s,a) + \alpha_t(s,a) (R_t(s,a,s') + \gamma Q^B_t(s',a^*) - Q^A_t(s,a))$
		\ELSIF{UPDATE(B)} 
		\STATE Obtain $b^* = \arg\max_{b'}Q^B_t(s',b')$
		\STATE $Q^B_{t+1}(s,a) = Q^B_t(s,a) + \alpha_t(s,a) (R_t(s,a,s') + \gamma Q^A_t(s',b^*) - Q^B_t(s,a))$
		\ENDIF
		\ENDFOR
		\ENDFOR
		\STATE 	{\bf Output:} $Q^A_T$ (or $Q^B_T$).
 	\end{algorithmic}
\end{algorithm}

Double Q-learning maintains two Q-estimators (i.e., Q-tables): $Q^A$ and $Q^B$. At each iteration of Algorithm \ref{alg:doubleQ}, one Q-table is randomly chosen to be updated. Then this chosen Q-table generates a greedy optimal action, and the other Q-table is used for estimating the corresponding Bellman operator for updating the chosen table. Specifically, if $Q^A$ is chosen to be updated, we use $Q^A$ to obtain the optimal action $a^*$ and then estimate the corresponding Bellman operator using $Q^B$. As shown in ~\citet{hasselt2010double}, $\mathbb{E}[Q^B(s',a^*)]$ is likely smaller than $\max_a \mathbb{E}[Q^A(s',a)]$, where the expectation is taken over the randomness of the reward for the same state-action pair. In this way, such a two-estimator framework of double Q-learning can effectively reduce the overestimation. 

{\bf Synchronous and asynchronous double Q-learning:} In this paper, we study the finite-time convergence rate of double Q-learning in two different settings: synchronous and asynchronous implementations. For synchronous double Q-learning (as shown in Algorithm \ref{alg:doubleQ}), all the state-action pairs of the chosen Q-estimator are visited simultaneously at each iteration. For the asynchronous case, only one state-action pair is updated in the chosen Q-table. Specifically, in the latter case, we sample a trajectory $\{(s_t,a_t,R_t,i_t)\}_{t=0}^\infty$ under a certain exploration strategy, where $i_t\in\{A,B\}$ denotes the index of the chosen Q-table at time $t$. Then the two Q-tables are updated based on the following rule:
\begin{multline*}
    Q^i_{t+1}(s,a) \\
    = \left\{\begin{aligned}
    & Q^i_t(s,a), \qquad (s,a) \neq (s_t,a_t) \text{ or } i\neq i_t;\\
    & (1-\alpha_t(s,a)) Q^i_t(s,a) + \alpha_t(s,a) \Big( R_t(s,a,s') + \gamma Q^{i^c}_t(s',\underset{a'\in U(s')}{\arg\max}Q^i_t(s',a') \Big), \  \text{otherwise},
    \end{aligned}
    \right.
\end{multline*}
where $i^c = \{A,B\}\setminus i$.

We next provide the boundedness property of the Q-estimators and the errors in the following lemma, which is typically necessary for the finite-time analysis. 
\begin{lemma}\label{lem:uniformBound}
For either synchronous or asynchronous double Q-learning, let $Q^i_t(s,a)$ be the value of either Q table corresponding to a state-action pair $(s,a)$ at iteration $t$. Suppose $\norm{Q_0^i}\leq \frac{R_{\max}}{1-\gamma}$. Then we have $\norm{Q^i_t}\leq \frac{R_{\max}}{1-\gamma}$ and $\norm{Q^i_t-Q^*}\leq V_{\max}$ for all $t\geq0$, where $V_{\max} := \frac{2R_{\max}}{1-\gamma}$.
\end{lemma}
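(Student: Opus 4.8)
The plan is to prove the bound $\norm{Q^i_t}\leq \frac{R_{\max}}{1-\gamma}$ by induction on $t$, treating the synchronous and asynchronous cases together since the asynchronous update leaves most entries unchanged and applies the synchronous-style update to the rest. The base case $t=0$ is the hypothesis $\norm{Q^i_0}\leq\frac{R_{\max}}{1-\gamma}$. For the inductive step, fix any $(s,a)$ and any $i\in\{A,B\}$; if the entry is not updated at iteration $t$ then $Q^i_{t+1}(s,a)=Q^i_t(s,a)$ and there is nothing to do, so assume it is updated. Writing $a^*=\arg\max_{a'}Q^i_t(s',a')$, the update has the convex-combination form
\begin{equation*}
Q^i_{t+1}(s,a) = (1-\alpha_t(s,a))\,Q^i_t(s,a) + \alpha_t(s,a)\bigl(R_t(s,a,s') + \gamma\, Q^{i^c}_t(s',a^*)\bigr),
\end{equation*}
with $\alpha_t(s,a)\in(0,1]$.

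The key step is to bound the "target" term. Using $|R_t|\leq R_{\max}$ and the inductive hypothesis applied to the \emph{other} table $\norm{Q^{i^c}_t}\leq\frac{R_{\max}}{1-\gamma}$, we get
\begin{equation*}
\bigl|R_t(s,a,s') + \gamma\, Q^{i^c}_t(s',a^*)\bigr| \leq R_{\max} + \gamma\cdot\frac{R_{\max}}{1-\gamma} = \frac{R_{\max}}{1-\gamma}.
\end{equation*}
Then $Q^i_{t+1}(s,a)$ is a convex combination of $Q^i_t(s,a)$ (bounded by $\frac{R_{\max}}{1-\gamma}$ by the inductive hypothesis on table $i$) and a quantity bounded by $\frac{R_{\max}}{1-\gamma}$, hence $|Q^i_{t+1}(s,a)|\leq\frac{R_{\max}}{1-\gamma}$. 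Taking the max over $(s,a)$ and over $i$ closes the induction. Finally, the second claim follows immediately by the triangle inequality: since $Q^*$ is the fixed point of the $\gamma$-contraction $\mcT$ with reward bounded by $R_{\max}$, one has $\norm{Q^*}\leq\frac{R_{\max}}{1-\gamma}$ (e.g.\ by the same convex-combination/fixed-point argument, or by unrolling~\eqref{eq:Qfunction}), so $\norm{Q^i_t-Q^*}\leq\norm{Q^i_t}+\norm{Q^*}\leq\frac{2R_{\max}}{1-\gamma}=V_{\max}$.

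There is no real obstacle here; the only mild subtlety is that the two tables are coupled, so the induction must be carried out \emph{simultaneously} over both $Q^A$ and $Q^B$ — the bound on $Q^i_{t+1}$ uses the iteration-$t$ bound on $Q^{i^c}$, which is fine because both are available at step $t$. Everything else is the standard observation that the Q-learning update is a convex combination of the previous iterate and a reward-plus-discounted-value target, both of which live in $[-\frac{R_{\max}}{1-\gamma},\frac{R_{\max}}{1-\gamma}]$.
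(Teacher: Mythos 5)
Your proof is correct and follows essentially the same route as the paper: a simultaneous induction over both tables, using that the update is a convex combination of the previous entry and a target bounded by $R_{\max}+\gamma\frac{R_{\max}}{1-\gamma}=\frac{R_{\max}}{1-\gamma}$, then the triangle inequality with $\norm{Q^*}\leq\frac{R_{\max}}{1-\gamma}$ for the second claim. If anything, you spell out two details the paper leaves implicit (the unchanged entries in the asynchronous case and the bound $\norm{Q^i_t-Q^*}\leq V_{\max}$), so nothing is missing.
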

Lemma \ref{lem:uniformBound} can be proved by induction arguments using the triangle inequality and the uniform boundedness of the reward function, which is seen in \Cref{sec:proofOfLemma1}.


\section{Main results}

We present our finite-time analysis for the synchronous and asynchronous double Q-learning in this section, followed by a sketch of the proof for the synchronous case which captures our main techniques. The detailed proofs of all the results are provided in the Supplementary Materials.

\subsection{Synchronous double Q-learning}
Since the update of the two Q-estimators is symmetric, we can characterize the convergence rate of either Q-estimator, e.g., $Q^A$, to the global optimum $Q^*$. To this end, we first derive two important properties of double Q-learning that are crucial to our finite-time convergence analysis.

The first property captures the stochastic error $\norm{Q^B_t - Q^A_t}$ between the two Q-estimators. Since double Q-learning updates alternatingly between these two estimators, such an error process must decay to zero in order for double Q-learning to converge. Furthermore, how fast such an error converges determines the overall convergence rate of double Q-learning. The following proposition (which is an informal restatement of~\Cref{lem:Gq} in~\Cref{subsec:PartI}) shows that such an error process can be \textit{block-wisely} bounded by an exponentially decreasing sequence $G_q = (1-\xi)^q V_{\max}$ for $q=0,1,2,\dots,$ and some $\xi\in(0,1)$. 
Conceptually, as illustrated in \Cref{fig:DkGk}, such an error process is upper-bounded by the blue-colored piece-wise linear curve.


\begin{repproposition}{lem:Gq}({\bf\em Informal})
 Consider synchronous double Q-learning under a polynomial learning rate $\alpha_t = \frac{1}{t^\omega}$ with $\omega\in(0,1)$. We divide the time horizon into blocks $[\hat{\tau}_{q},\hat{\tau}_{q+1})$ for $q\geq0$, where $\hat{\tau}_0=0$ and $\hat{\tau}_{q+1} = \hat{\tau}_q + c_1\hat{\tau}_q^\omega$ with some $c_1>0$. Fix $\hat{\epsilon}>0$. Then for any $n$ such that $G_n \geq \hat{\epsilon}$ and under certain conditions on $\hat{\tau}_1$ (see~\Cref{subsec:PartI}), we have
\begin{equation*}
    \mP\left[ \forall q\in [0,n], \forall t\in[\hat{\tau}_{q+1},\hat{\tau}_{q+2}), \norm{Q^B_t - Q^A_t}\leq G_{q+1} \right]\geq 1- c_2 n \exp\left( -\frac{c_3\hat{\tau}_1^{\omega}\hat{\epsilon}^2}{V_{\max}^2} \right),
\end{equation*}
where the positive constants $c_2$ and $c_3$ are specified in~\Cref{subsec:PartI}.
\end{repproposition}
\begin{figure}[h]
    \centering
    \includegraphics[width=0.70\textwidth]{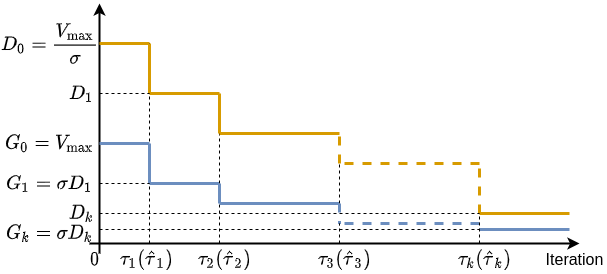}
    \caption{Illustration of sequence $\{G_k\}_{k\geq0}$ as a block-wise upper bound on $\norm{Q^B_t-Q^A_t}$, and sequence $\{D_k\}_{k\geq0}$ as a block-wise upper bound on $\norm{Q^A_t-Q^*}$ conditioned on the first upper bound event.}
    \label{fig:DkGk}
\end{figure}
\Cref{lem:Gq} implies that the two Q-estimators approach each other asymptotically, but does not necessarily imply that they converge to the optimal action-value function $Q^*$. Then the next proposition (which is an informal restatement of \Cref{lem:conditionalBound} in~\Cref{subsec:PartII}) shows that as long as the high probability event in~\Cref{lem:Gq} holds, the error process  $\norm{Q^A_t-Q^*}$ between either Q-estimator (say $Q^A$) and the optimal Q-function can be \textit{block-wisely} bounded by an exponentially decreasing sequence $D_k = (1-\beta)^k\frac{V_{\max}}{\sigma}$ for $k=0,1,2,\dots,$ and $\beta\in(0,1)$. Conceptually, as illustrated in \Cref{fig:DkGk}, such an error process is upper-bounded by the yellow-colored piece-wise linear curve.


\begin{repproposition}{lem:conditionalBound}
({\bf\em Informal}) Consider synchronous double Q-learning using a polynomial learning rate $\alpha_t = \frac{1}{t^\omega}$ with $\omega\in(0,1)$. We divide the time horizon into blocks $[\tau_{k},\!\tau_{k+1})$ for $k\geq0$, where ${\tau_{0}}=0$ and ${\tau}_{k+1} = {\tau}_k + c_4{\tau}_k^\omega$ with some $c_4>0$. Fix $\tilde{\epsilon}>0$. Then for any $m$ such that $D_m \geq \tilde{\epsilon}$ and under certain conditions on $\tau_1$ (see~\Cref{subsec:PartII}), we have
\begin{equation*}
    \mP\left[ \forall k\in [0,m], \forall t\in[\tau_{k+1},\tau_{k+2}), \norm{Q^A_t - Q^*}\leq D_{k+1} |E,F \right] \geq 1 - c_5 m \exp\left( -\frac{c_6\tau_1^{\omega}\tilde{\epsilon}^2}{V_{\max}^2} \right),
\end{equation*}
where $E$ and $F$ denote certain events defined in~\eqref{eq:eventA} and \eqref{eq:eventB} in~\Cref{subsec:PartII}, and the positive constants $c_4,c_5$, and $c_6$ are specified~\Cref{subsec:PartII}.
\end{repproposition}


As illustrated in~\Cref{fig:DkGk}, the two block sequences $\{\hat{\tau}_q\}_{q\geq0}$ in~\Cref{lem:Gq} and $\{{\tau_q}\}_{q\geq0}$ in~\Cref{lem:conditionalBound} can be chosen to coincide with each other. Then combining the above two properties followed by further mathematical arguments yields the following main theorem that characterizes the convergence rate of double Q-learning. We will provide a proof sketch for~\Cref{thm:syncDQ} in~\Cref{subsec:proofOutline}, which explains the main steps to obtain the supporting properties of~\Cref{lem:Gq} and~\ref{lem:conditionalBound} and how they further yield the main theorem.
\begin{theorem}\label{thm:syncDQ}
Fix $\epsilon>0$ and $\gamma\in(1/3,1)$. Consider synchronous double Q-learning using a polynomial learning rate $\alpha_t = \frac{1}{t^\omega}$ with $\omega\in(0,1)$. Let $Q^A_T(s,a)$ be the value of $Q^A$ for a state-action pair $(s,a)$ at time $T$. Then we have $\mP(\norm{Q^A_T - Q^*} \leq \epsilon)\geq 1-\delta$, given that
\begin{equation} \label{thm1T}
    T=\Omega\left( \left( \frac{V_{\max}^2}{(1-\gamma)^4\epsilon^2}\ln \frac{|\mcs||\mca|V_{\max}^2}{(1-\gamma)^5\epsilon^2\delta} \right)^{\frac{1}{\omega}} + \left(\frac{1}{1-\gamma} \ln\frac{ V_{\max}}{(1-\gamma)\epsilon} \right)^{\frac{1}{1-\omega}} \right),
\end{equation}
where $V_{\max} = \frac{2R_{\max}}{1-\gamma}$.
\end{theorem}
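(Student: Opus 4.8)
The plan is to combine the two block-wise bounds from \Cref{lem:Gq} and \Cref{lem:conditionalBound} on a common block partition, tune the free constants so that both high-probability events hold, and then solve for the time index $T$ at which the target accuracy $\epsilon$ is reached. First I would fix a single sequence of blocks $\tau_{k+1}=\tau_k+c\tau_k^\omega$ that serves simultaneously as the $\hat\tau$-partition of \Cref{lem:Gq} and the $\tau$-partition of \Cref{lem:conditionalBound}; this is legitimate since the two propositions leave $c_1,c_4$ free, and the figure already suggests this identification. On this partition I would run \Cref{lem:Gq} with a tolerance $\hat\epsilon$ chosen proportional to the eventual target (say $\hat\epsilon\asymp (1-\gamma)\epsilon$, up to the $V_{\max}$ and $\sigma$ factors that appear in the definition of $D_k$), so that once $\norm{Q^B_t-Q^A_t}\le G_{q+1}$ holds blockwise, the ``driving noise'' for the second SA — the one tracking $\norm{Q^A_t-Q^*}$ — is small enough that the contraction in \Cref{lem:conditionalBound} pushes $D_k$ down to $\epsilon$. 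Conditioning on the event $E$ (and the auxiliary event $F$) of \Cref{lem:conditionalBound} being exactly the event furnished by \Cref{lem:Gq}, I would then invoke \Cref{lem:conditionalBound} to get $\norm{Q^A_t-Q^*}\le D_{k+1}\le\epsilon$ on the terminal block.

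The second step is the bookkeeping that turns block indices into iteration counts. Since $D_k=(1-\beta)^k V_{\max}/\sigma$ decays geometrically, reaching $D_m\le\epsilon$ requires $m=\Theta\big(\frac{1}{\beta}\ln\frac{V_{\max}}{\sigma\epsilon}\big)$ blocks; the same holds for the number $n$ of $G$-blocks needed to make $G_n\le\hat\epsilon$, with $\xi$ in place of $\beta$. Both $\beta$ and $\xi$ will be of order $(1-\gamma)$ (this is where $\gamma>1/3$ enters — it guarantees $\beta,\xi$ stay bounded away from degeneracy and the geometric factors $1-\beta,1-\xi$ are genuinely contractive). Then I need to translate ``$m$ blocks'' into ``$T$ iterations'' by solving the recursion $\tau_{k+1}=\tau_k+c\tau_k^\omega$: standard arguments (comparison with the ODE $\dot\tau=c\tau^\omega$) give $\tau_k\asymp (ck(1-\omega))^{1/(1-\omega)}$, so after $m$ blocks one has consumed $T\asymp \tau_m \asymp \big(c\,m\big)^{1/(1-\omega)}$ iterations — but I must also respect the lower-bound conditions on $\hat\tau_1=\tau_1$ required by both propositions, which force $\tau_1^\omega \gtrsim \frac{V_{\max}^2}{c_3\hat\epsilon^2}\ln(\cdot)$ so that the failure probabilities $c_2 n\exp(-c_3\tau_1^\omega\hat\epsilon^2/V_{\max}^2)$ and $c_5 m\exp(-c_6\tau_1^\omega\tilde\epsilon^2/V_{\max}^2)$ are each at most $\delta/2$. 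Taking a union bound over these two events gives total failure probability $\le\delta$.

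Putting the pieces together: the requirement on $\tau_1$ produces the first term $\big(\frac{V_{\max}^2}{(1-\gamma)^4\epsilon^2}\ln\frac{|\mcs||\mca|V_{\max}^2}{(1-\gamma)^5\epsilon^2\delta}\big)^{1/\omega}$ — the $1/\omega$ exponent because $\tau_1^\omega$ rather than $\tau_1$ is what is pinned, the $(1-\gamma)^{-4}$ because $\hat\epsilon\asymp(1-\gamma)\epsilon$ contributes $(1-\gamma)^{-2}$ and the extra $(1-\gamma)^{-2}$ comes from $V_{\max}^2$ relative to $R_{\max}^2$ plus the $1/\sigma$ in $D_k$, and the logarithmic factor absorbs $n,m$ together with the $|\mcs||\mca|$ from the union bound over state-action pairs inside \Cref{lem:Gq}. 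The second term $\big(\frac{1}{1-\gamma}\ln\frac{V_{\max}}{(1-\gamma)\epsilon}\big)^{1/(1-\omega)}$ is exactly $\tau_m$ when $m=\Theta\big(\frac{1}{1-\gamma}\ln\frac{V_{\max}}{(1-\gamma)\epsilon}\big)$ blocks are needed regardless of how large $\tau_1$ is, i.e. the ``burn-in after warm-up'' cost; the $1/(1-\omega)$ exponent is the signature of the $\tau_{k+1}=\tau_k+c\tau_k^\omega$ recursion. Finally I would take $T$ to be the larger of the starting time $\tau_1$ needed for the concentration and the time $\tau_m$ needed for the geometric decay, and since $\max\{a,b\}\le a+b$, report the sum, which is the claimed bound.

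The main obstacle I anticipate is the coupling in the first step: \Cref{lem:conditionalBound} is only a statement \emph{conditioned} on events $E,F$, and I must verify that the event I get to condition on — namely the blockwise bound $\norm{Q^B_t-Q^A_t}\le G_{q+1}$ from \Cref{lem:Gq} — is precisely (or implies) $E\cap F$, and that conditioning does not corrupt the martingale/Azuma structure underlying the concentration in \Cref{lem:conditionalBound}. Concretely, the noise in the $Q^A\!-\!Q^*$ iteration decomposes into a genuine-sampling martingale difference plus a bias term controlled by $\gamma\norm{Q^B_t-Q^A_t}$; I need the conditioning to leave the former a martingale difference (so Azuma still applies) while the latter is deterministically $\le \gamma G_{q+1}$ on the conditioned event — this is the delicate interface between the two SAs and is exactly the ``new technique'' the abstract advertises. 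A secondary nuisance is choosing the common block constant $c$ (and $\tau_1$) so that \emph{both} propositions' side conditions on the first block hold at once; since those conditions are of the same form this should be a matter of taking the max of two lower bounds, not a genuine difficulty.
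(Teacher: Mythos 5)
Your plan follows essentially the same route as the paper: identify the two block partitions ($\tau_k=\hat\tau_k$), couple the thresholds so that the event of \Cref{lem:Gq} implies event $E$ (the paper does this in \Cref{lem:couple} by choosing $\sigma=\frac{1-\gamma}{2\gamma}$ so that $G_k=\sigma D_k$ and $\xi=\beta=\frac{1-\gamma}{4}$), invoke the conditional bound of \Cref{lem:conditionalBound}, count blocks via the geometric decay of $D_k$ (\Cref{lem:totalIter}), convert blocks to iterations via the recursion $\tau_{k+1}=\tau_k+\frac{2c}{\kappa}\tau_k^\omega$ (\Cref{lem:iteration}), and pin $\tau_1$ from the concentration requirements; this reproduces both terms of \eqref{thm1T} with the right exponents.

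Two points need repair. First, event $F$ is \emph{not} implied by the event furnished by \Cref{lem:Gq}, nor can you hope to "verify" that it is: $F$ concerns the Bernoulli coin flips that decide which estimator is updated, i.e.\ it requires $I^A_k\geq c\tau_k^\omega$ updates of $Q^A$ in every block, and is independent of any bound on $\norm{Q^B_t-Q^A_t}$. It needs its own concentration argument — in the paper, \Cref{lem:halfQA} applies Hoeffding's bound to $I^A_k\sim Binomial\bigl(\tfrac{2c}{\kappa}\tau_k^\omega,\tfrac12\bigr)$, yielding $\mP(F)\geq 1-m\exp\bigl(-\tfrac{(1-\kappa)^2c\tau_1^\omega}{\kappa}\bigr)$ — and the failure budget therefore has three terms, combined through $\mP(E\cap F)\geq \mP(E)+\mP(F)-1$ before multiplying by the conditional probability; your two-way $\delta/2$ split over "the two propositions" omits this third contribution (the fix is routine, but as written the union bound does not close). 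Second, the role of $\gamma\in(1/3,1)$ is misattributed: $\beta=\xi=\frac{1-\gamma}{4}$ is a genuine contraction factor for every $\gamma\in(0,1)$, so nothing degenerates there. The condition is needed so that $\sigma=\frac{1-\gamma}{2\gamma}\leq 1$, which guarantees $D_0=\frac{V_{\max}}{\sigma}\geq V_{\max}\geq\norm{Q^A_0-Q^*}$, i.e.\ the base case of the block-wise induction in Part II is valid. Your heuristic accounting of the $(1-\gamma)^{-4}$ factor is also slightly off (it comes from $\sigma^2\xi^2$, equivalently $\beta^2$ times the $\sigma^2$ in the threshold, multiplying $V_{\max}^2$ which the theorem keeps explicit), but this does not affect the order of the final bound.
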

\Cref{thm:syncDQ} provides the finite-time convergence guarantee in high probability sense for synchronous double Q-learning. Specifically, double Q-learning attains an $\epsilon$-accurate optimal Q-function with high probability with at most $\Omega\left( \left( \frac{1}{(1-\gamma)^6\epsilon^2}\ln \frac{1}{(1-\gamma)^7\epsilon^2} \right)^{\frac{1}{\omega}} + \left(\frac{1}{1-\gamma} \ln\frac{ 1}{(1-\gamma)^2\epsilon} \right)^{\frac{1}{1-\omega}} \right)$ iterations. Such a result can be further understood by considering the following two regimes. In the high accuracy regime, in which $\epsilon \ll 1-\gamma$, the dependence on $\epsilon$ dominates, and the time complexity is given by $\Omega\left( \left( \frac{1}{\epsilon^2}\ln \frac{1}{\epsilon^2} \right)^{\frac{1}{\omega}} + \left( \ln\frac{ 1}{\epsilon} \right)^{\frac{1}{1-\omega}} \right)$, which is optimized as $\omega$ approaches to 1. In the low accuracy regime, in which $\epsilon \gg 1-\gamma$, the dependence on $\frac{1}{1-\gamma}$ dominates, and 
the time complexity can be optimized at $\omega=\frac{6}{7}$, which yields $T=\tilde{\Omega} \left( \frac{1}{(1-\gamma)^7\epsilon^{7/3}} + \frac{1}{(1-\gamma)^7}  \right)=\tilde{\Omega} \left( \frac{1}{(1-\gamma)^7\epsilon^{7/3}} \right)$. 



Furthermore, \Cref{thm:syncDQ} corroborates the design effectiveness of double Q-learning, which overcomes the overestimation issue and hence achieves better accuracy by making less aggressive progress in each update. Specifically, comparison of \Cref{thm:syncDQ} with the time complexity bounds of vanilla synchronous Q-learning under a polynomial learning rate in \citet{even2003learning} and \citet{wainwright2019stochastic} indicates that in the high accuracy regime, double Q-learning achieves the same convergence rate as vanilla Q-learning in terms of the order-level dependence on $\epsilon$. Clearly, the design of double Q-learning for high accuracy dominates the performance. In the low-accuracy regime (which is not what double Q-learning is designed for), double Q-learning achieves a slightly weaker convergence rate than vanilla Q-learning in \citet{even2003learning,wainwright2019stochastic} in terms of the dependence on $1-\gamma$, because its nature of less aggressive progress dominates the performance. 

\subsection{Asynchronous Double Q-learning}

In this subsection, we study the asynchronous double Q-learning and provide its finite-time convergence result.

Differently from synchronous double Q-learning, in which all state-action pairs are visited for each update of the chosen Q-estimator, asynchronous double Q-learning visits only one state-action pair for each update of the chosen Q-estimator. Therefore, we make the following standard assumption on the exploration strategy~\citep{even2003learning}:

\begin{assumption}\label{asp:covering}
(Covering number) There exists a covering number $L$, such that in consecutive $L$ updates of either $Q^A$ or $Q^B$ estimator, all the state-action pairs of the chosen Q-estimator are visited at least once. 
\end{assumption}

The above conditions on the exploration are usually necessary for the finite-time analysis of asynchronous Q-learning. The same assumption has been taken in \citet{even2003learning}. \citet{qu2020finite} proposed a mixing time condition which is in the same spirit.  

\Cref{asp:covering} essentially requires the sampling strategy to have good visitation coverage over all state-action pairs. Specifically,
\Cref{asp:covering} guarantees that consecutive $L$ updates of $Q^A$ visit each state-action pair of $Q^A$ at least once, and the same holds for $Q^B$. Since $2L$ iterations of asynchronous double Q-learning must make at least $L$ updates for either $Q^A$ or $Q^B$, \Cref{asp:covering} further implies that any state-action pair $(s,a)$ must be visited at least once during $2L$ iterations of the algorithm. In fact, our analysis allows certain relaxation of~\Cref{asp:covering} by only requiring each state-action pair to be visited during an interval with a certain probability. In such a case, we can also derive a finite-time bound by additionally dealing with a conditional probability.

Next, we provide the finite-time result for asynchronous double Q-learning in the following theorem.

\begin{theorem}\label{thm:asyncDQ}
Fix $\epsilon>0,\gamma\in(1/3,1)$. Consider asynchronous double Q-learning under a polynomial learning rate $\alpha_t = \frac{1}{t^\omega}$ with $\omega\in(0,1)$. Suppose Assumption \ref{asp:covering} holds. Let $Q^A_T(s,a)$ be the value of $Q^A$ for a state-action pair $(s,a)$ at time $T$. Then we have $\mP(\norm{Q^A_T - Q^*} \leq \epsilon)\geq 1-\delta$, given that
\begin{equation} \label{thm2T}
    T=\Omega\left( \left(\frac{ L^4V_{\max}^2}{(1-\gamma)^4\epsilon^2}\ln \frac{|\mcs||\mca|L^4V_{\max}^2 }{(1-\gamma)^5\epsilon^2\delta} \right)^{\frac{1}{\omega}} + \left(\frac{L^2}{1-\gamma} \ln\frac{\gamma V_{\max}}{(1-\gamma)\epsilon} \right)^{\frac{1}{1-\omega}} \right).
\end{equation}
\end{theorem}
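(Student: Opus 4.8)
\textbf{Proof plan for Theorem~\ref{thm:asyncDQ}.}
The plan is to mirror the two-stage stochastic-approximation argument used for the synchronous case (Theorems~\ref{thm:syncDQ} via Propositions~\ref{lem:Gq} and~\ref{lem:conditionalBound}), and to absorb the asynchronous update structure into the block lengths and the concentration exponents. The starting observation is that, by Assumption~\ref{asp:covering}, any window of $2L$ consecutive iterations of the algorithm contains at least $L$ updates of the chosen estimator and hence at least one update of every state-action pair of that estimator; equivalently, a state-action pair that is ``current'' at iteration $t$ has been updated on average once per $\Theta(L)$ steps of its own table and once per $\Theta(L)$ steps of the global clock only up to the coarse-graining factor $L$. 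I would therefore re-derive the recursion for $\norm{Q^B_t - Q^A_t}$ and for $\norm{Q^A_t - Q^*}$ over blocks whose lengths are inflated: where the synchronous proof uses $\hat\tau_{q+1} = \hat\tau_q + c_1 \hat\tau_q^\omega$, the asynchronous proof should use blocks long enough that \emph{every} $(s,a)$ pair of each table is updated a comparable number of times within the block, which (because of the $1/t^\omega$ learning rate, whose effective per-pair decay is slowed by the factor $L$) forces $\tau_{q+1} = \tau_q + c_1 L^2 \tau_q^\omega$-type lengths. This is the source of the $L^2$ factor in the second term of~\eqref{thm2T} and, after it feeds through the concentration bounds, the $L^4$ factor in the first term.

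Concretely, the first step is an asynchronous analogue of Proposition~\ref{lem:Gq}: partition time into blocks of length $\Theta(L^2 \tau_q^\omega)$, and on each block split the error $Q^B_t - Q^A_t$ into a deterministic contraction part and a martingale noise part exactly as in the synchronous analysis, but now carefully tracking that within one block each pair is refreshed and that the accumulated learning-rate weight on any pair over the block is $\Theta(1)$ (bounded above and below). The contraction factor $1-\xi$ survives but with $\xi$ rescaled by a function of $L$; the Azuma--Hoeffding step then gives a tail of the form $c_2 n \exp(-c_3 \tau_1^\omega \hat\epsilon^2 / (L^2 V_{\max}^2))$, i.e.\ the effective sample size per block is divided by $L^2$. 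The second step is the asynchronous analogue of Proposition~\ref{lem:conditionalBound}: conditioned on the event that $\norm{Q^B_t-Q^A_t}$ is block-wise controlled by $G_q$, the iteration for $\norm{Q^A_t-Q^*}$ is a Bernoulli-gated stochastic approximation toward $Q^*$ driven by the Bellman contraction~\eqref{eq:Contraction}; here the gating probability $1/2$ of choosing table $A$ combines with the $1/(2L)$-ish per-pair visitation frequency, so again the block lengths scale like $L^2 \tau_k^\omega$ and the tail picks up a $1/L^2$ in the exponent. Combining these two as in the synchronous proof — choosing the two block partitions to coincide, summing the per-block failure probabilities via a union bound, and converting ``block index $n$ at which $G_n$ or $D_m$ first drops below $\epsilon$'' into a number of iterations through the block-length recursion — yields $\mathbb P(\norm{Q^A_T-Q^*}\le\epsilon)\ge 1-\delta$ once $T$ is as in~\eqref{thm2T}; the $\ln(|\mcs||\mca|L^4V_{\max}^2/((1-\gamma)^5\epsilon^2\delta))$ comes from the union bound over $|\mcs||\mca|$ pairs and over the $\Theta(L^4/(\cdots))$ blocks needed, and the logarithmic terms are otherwise unchanged from Theorem~\ref{thm:syncDQ}.

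The main obstacle, I expect, is the bookkeeping in the first step: in the synchronous case all entries of the chosen table move in lockstep, so $\norm{Q^B_t-Q^A_t}$ is a clean single scalar recursion, whereas asynchronously different coordinates are updated at different (data-dependent) times, so the supremum norm is a maximum over coordinates that are at different ``ages'' within a block. Handling this requires showing that over a block of length $\Theta(L^2\tau_q^\omega)$ the worst-lagged coordinate still enjoys enough contraction and that the noise terms for all coordinates can be simultaneously controlled — this is where Assumption~\ref{asp:covering} is used most delicately, to guarantee a uniform lower bound on how many times each coordinate is touched, and it is also what pins down the exact power of $L$. A secondary but routine difficulty is verifying the ``certain conditions on $\tau_1$'' (the analogues of those in Propositions~\ref{lem:Gq} and~\ref{lem:conditionalBound}) remain compatible after the $L^2$ inflation, and checking that the restriction $\gamma\in(1/3,1)$ still suffices for the constants $\xi,\beta,\sigma$ to lie in $(0,1)$; both are handled by the same elementary estimates as in the synchronous proof, now carried through with $L$ present.
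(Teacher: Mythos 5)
Your plan follows essentially the same route as the paper's proof: per-coordinate sandwich sequences that freeze when a pair is not visited, Assumption~\ref{asp:covering} used to lower-bound the number of visits of each $(s,a)$ per block, Azuma plus a union bound over pairs and blocks for the noise, a binomial tail for the event that $Q^A$ is updated often enough, and the same three-part combination and block-to-iteration conversion as in the synchronous case (cf.\ Propositions~\ref{lem:GqAsy} and~\ref{lem:conditionalBoundAsy}). The only discrepancies are in the bookkeeping of where the powers of $L$ enter — the paper keeps $\xi=\frac{1-\gamma}{4}$ unrescaled and instead takes $c\gtrsim L$ with block increments $\frac{2cL}{\kappa}\hat{\tau}_q^{\omega}$, so its concentration exponents carry $cL(cL+\kappa)\sim L^4$ rather than the $L^2$ you state — which affects constants/powers in intermediate bounds but not the validity of the architecture or the final claim.
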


Comparison of~\Cref{thm:syncDQ} and~\ref{thm:asyncDQ} indicates that the finite-time result of asynchronous double Q-learning matches that of synchronous double Q-learning in the order dependence on $\frac{1}{1-\gamma}$ and $\frac{1}{\epsilon}$. The difference lies in the extra dependence on the covering time $L$ in \Cref{thm:asyncDQ}. Since synchronous double Q-learning visits all state-action pairs (i.e., takes $|\mcs||\mca|$ sample updates) at each iteration, whereas asynchronous double Q-learning visits only one state-action pair (i.e., takes only one sample update) at each iteration, a more reasonable comparison between the two should be in terms of the overall sample complexity. In this sense, synchronous and asynchronous double Q-learning algorithms have the sample complexities of $|\mcs||\mca| T$ (where $T$ is given in~\eqref{thm1T}) and $T$ (where $T$ is given in~\eqref{thm2T}), respectively. Since in general $L\gg |\mcs||\mca|$, synchronous double-Q is more efficient than asynchronous double-Q in terms of the overall sampling complexity.


\subsection{Proof Sketch of Theorem \ref{thm:syncDQ}}\label{subsec:proofOutline}

In this subsection, we provide an outline of the technical proof of Theorem \ref{thm:syncDQ} and summarize the key ideas behind the proof. The detailed proof can be found in~\Cref{sec:proofThm1}.


Our goal is to study the finite-time convergence of the error $\norm{Q^A_t-Q^*}$ between one Q-estimator and the optimal Q-function (this is without the loss of generality due to the symmetry of the two estimators). To this end, our proof includes: (a) Part I which analyzes the stochastic error propagation between the two Q-estimators $\norm{Q^B_t - Q^A_t}$; (b) Part II which analyzes the error dynamics between one Q-estimator and the optimum $\norm{Q^A_t-Q^*}$ conditioned on the error event in Part I; and (c) Part III which bounds the unconditional error $\norm{Q^A_t-Q^*}$. We describe each of the three parts in more details below.



\textbf{Part I: Bounding $\norm{Q^B_t - Q^A_t}$ (see \Cref{lem:Gq}).} The main idea is to upper bound $\norm{Q^B_t - Q^A_t}$ by a decreasing sequence $\{G_q\}_{q\geq0}$ block-wisely with high probability, where each block $q$ (with $q\geq0$) is defined by $t\in[\hat{\tau}_q, \hat{\tau}_{q+1})$. The proof consists of the following four steps.


\textit{Step 1 (see \Cref{lem:uBAdyn})}: We characterize the dynamics of $u_t^{BA}(s,a):=Q^B(s,a) - Q^A(s,a)$ as an SA algorithm as follows: 
\begin{equation*}
    u_{t+1}^{BA}(s,a) = (1-\alpha_t)u_t^{BA}(s,a) + \alpha_t (h_t(s,a) + z_t(s,a)),
\end{equation*}
where $h_t$ is a contractive mapping of $u_t^{BA}$, and $z_t$ is a martingale difference sequence. 

\textit{Step 2 (see \Cref{lem:uBAsanwich})}:
We derive lower and upper bounds on $u_t^{BA}$ via two sequences $X_{t;\hat{\tau}_q}$ and $Z_{t;\hat{\tau}_q}$ as follows:
\begin{equation*}
    -X_{t;\hat{\tau}_q}(s,a) + Z_{t;\hat{\tau}_q}(s,a) \leq u_t^{BA}(s,a) \leq X_{t;\hat{\tau}_q}(s,a) + Z_{t;\hat{\tau}_q}(s,a),
\end{equation*}
for any $t\geq \hat{\tau}_q$, state-action pair $(s,a)\in\mathcal{S}\times\mathcal{A}$, and $q\geq0$, where $X_{t;\hat{\tau}_q}$ is deterministic and driven by $G_q$, and $Z_{t;\hat{\tau}_q}$ is stochastic and driven by the martingale difference sequence $z_t$. 

\textit{Step 3 (see \Cref{lem:Xt} and \Cref{lem:ZlDiff})}:
We block-wisely bound $u_t^{BA}(s,a)$ using the induction arguments. Namely, we prove $\norm{u_t^{BA}}\leq G_q$ for $t\in[\hat{\tau_q},\hat{\tau}_{q+1})$ holds for all $q\geq0$. By induction, we first observe for $q=0$, $\norm{u_t^{BA}}\leq G_0$ holds. Given any state-action pair $(s,a)$, we assume that $\norm{u_t^{BA}(s,a)}\leq G_q$ holds for $t\in[\hat{\tau}_q, \hat{\tau}_{q+1})$. Then we show $\norm{u_t^{BA}(s,a)}\leq G_{q+1}$ holds for $t\in[\hat{\tau}_{q+1}, \hat{\tau}_{q+2})$, which follows by bounding $X_{t;\hat{\tau}_q}$ and $Z_{t;\hat{\tau}_q}$ separately in \Cref{lem:Xt} and \Cref{lem:ZlDiff}, respectively.

\textit{Step 4 (see \Cref{subsec:proofProp1})} :
We apply union bound (\Cref{lem:unionBound}) to obtain the block-wise bound for all state-action pairs and all blocks.


\textbf{Part II: Conditionally bounding $\norm{Q^A_t - Q^*}$ (see~\Cref{lem:conditionalBound})}.
We upper bound $\norm{Q^A_t - Q^*} $ by a decreasing sequence $\{D_k\}_{k\geq 0}$ block-wisely conditioned on the following two events:
\begin{itemize}
    \item []Event $E$: $\norm{u^{BA}_t}$ is upper bounded properly (see \eqref{eq:eventA} in~\Cref{subsec:PartII}), and
    \item []Event $F$: there are sufficient updates of $Q^A_t$ in each block (see~\eqref{eq:eventB} in~\Cref{subsec:PartII}). 
\end{itemize}

The proof of~\Cref{lem:conditionalBound} consists of the following four steps.

\textit{Step 1 (see~\Cref{lem:couple})}: We design a special relationship (illustrated in~\Cref{fig:DkGk}) between the block-wise bounds $\{G_q\}_{q\geq 0}$ and $\{D_k\}_{k\geq 0}$ and their block separations.

\textit{Step 2 (see~\Cref{lem:residualDynamics})}: We characterize the dynamics of the iteration residual $r_{t}(s,a):=Q^A_t(s,a) - Q^*(s,a)$ as an SA algorithm as follows:
when $Q^A$ is chosen to be updated at iteration $t$,
\begin{equation*}
    r_{t+1}(s,a) \!=\! (1\!-\!\alpha_{t}) r_{t}(s,a) \!+\! \alpha_{t} (\mcT Q_{t}^A(s,a)\!-\!Q^*(s,a)) \!+\! \alpha_{t} w_{t}(s,a) \!+\! \alpha_{t}\gamma u_{t}^{BA}(s',a^*),
\end{equation*}
where $w_{t}(s,a)$ is the error between the Bellman operator and the sample-based empirical estimator, and is thus a martingale difference sequence, and $u_{t}^{BA}$ has been defined in Part I.

\textit{Step 3 (see~\Cref{lem:rtSandwich})}:
We provide upper and lower bounds on $r_t$ via two sequences $Y_{t;\tau_k}$ and $W_{t;\tau_k}$ as follows:
\begin{equation*}
    -Y_{t;\tau_k}(s,a) + W_{t;\tau_k}(s,a) \leq r_t(s,a) \leq Y_{t;\tau_k}(s,a) + W_{t;\tau_k}(s,a),
\end{equation*}
for all $t\geq \tau_k$, all state-action pairs $(s,a)\in\mathcal{S}\times\mathcal{A}$, and all $q\geq0$, where $Y_{t;\tau_k}$ is deterministic and driven by $D_k$, and $W_{t;\tau_k}$ is stochastic and driven by the martingale difference sequence $w_t$. In particular, if $Q^A_t$ is not updated at some iteration, then the sequences $Y_{t;\tau_k}$ and $W_{t;\tau_k}$ assume the same values from the previous iteration.

\textit{Step 4 (see \Cref{lem:Yt}, \Cref{lem:WlDiff} and \Cref{subsec:proofProp2})}:
Similarly to Steps 3 and 4 in Part I, we conditionally bound $\norm{r_t}\leq D_k$ for $t\in [\tau_{k}, \tau_{k+1})$ and $k\geq0$ via bounding $Y_{t;\tau_k}$ and $W_{t;\tau_k}$ and further taking the union bound.

\textbf{Part III: Bounding $\norm{Q^A_t - Q^*}$(see \Cref{subsec:proofThm1}).} We combine the results in the first two parts, and provide high probability bound on $\norm{r_t}$ with further probabilistic arguments, which exploit the high probability bounds on $\mP(E)$ in~\Cref{lem:Gq} and $\mP(F)$ in~\Cref{lem:halfQA}.


\section{Conclusion}

In this paper, we provide the first finite-time results for double Q-learning, which characterize how fast double Q-learning converges under both synchronous and asynchronous implementations.
For the synchronous case, we show that it achieves an $\epsilon$-accurate optimal Q-function with at least the probability of $1-\delta$ by taking $\Omega\left( \left( \frac{1}{(1-\gamma)^6\epsilon^2}\ln \frac{|\mcs||\mca|}{(1-\gamma)^7\epsilon^2\delta} \right)^{\frac{1}{\omega}} + \left(\frac{1}{1-\gamma} \ln\frac{ 1}{(1-\gamma)^2\epsilon} \right)^{\frac{1}{1-\omega}} \right)$ iterations.
Similar scaling order on $\frac{1}{1-\gamma}$ and $\frac{1}{\epsilon}$ also applies for asynchronous double Q-learning but with extra dependence on the covering number. We develop new techniques to bound the error between two correlated stochastic processes, which can be of independent interest.

\section*{Acknowledgements}

The work was supported in part by the U.S. National Science Foundation under the grant CCF-1761506 and the startup fund of the Southern University of Science and Technology (SUSTech), China.

\section*{Broader Impact}

Reinforcement learning has achieved great success in areas such as robotics and game playing, and thus has aroused broad interests and more potential real-world applications. Double Q-learning is a commonly used technique in deep reinforcement learning to improve the implementation stability and speed of deep Q-learning.
In this paper, we provided the fundamental analysis on the convergence rate for double Q-learning, which theoretically justified the empirical success of double Q-learning in practice. Such a theory also provides practitioners desirable performance guarantee to further develop such a technique into various transferable technologies.





\bibliography{doubleQ}

\begin{thebibliography}{}

\bibitem[Abed-alguni and Ottom, 2018]{abed2018double}
Abed-alguni, B.~H. and Ottom, M.~A. (2018).
\newblock Double delayed {Q}-learning.
\newblock {\em International Journal of Artificial Intelligence}, 16(2):41--59.

\bibitem[Azuma, 1967]{azuma1967weighted}
Azuma, K. (1967).
\newblock Weighted sums of certain dependent random variables.
\newblock {\em Tohoku Mathematical Journal, Second Series}, 19(3):357--367.

\bibitem[Baird, 1995]{baird1995residual}
Baird, L. (1995).
\newblock Residual algorithms: Reinforcement learning with function
  approximation.
\newblock In {\em Machine Learning Proceedings 1995}, pages 30--37. Elsevier.

\bibitem[Beck and Srikant, 2012]{beck2012error}
Beck, C.~L. and Srikant, R. (2012).
\newblock Error bounds for constant step-size {Q}-learning.
\newblock {\em Systems \& Control Letters}, 61(12):1203--1208.

\bibitem[Bertsekas and Tsitsiklis, 1996]{bertsekas1996neuro}
Bertsekas, D.~P. and Tsitsiklis, J.~N. (1996).
\newblock {\em Neuro-Dynamic Programming}, volume~5.
\newblock Athena Scientific.

\bibitem[Borkar and Meyn, 2000]{borkar2000ode}
Borkar, V.~S. and Meyn, S.~P. (2000).
\newblock The {ODE} method for convergence of stochastic approximation and
  reinforcement learning.
\newblock {\em SIAM Journal on Control and Optimization}, 38(2):447--469.

\bibitem[Cai et~al., 2019]{cai2019neural}
Cai, Q., Yang, Z., Lee, J.~D., and Wang, Z. (2019).
\newblock Neural temporal-difference learning converges to global optima.
\newblock In {\em Advances in Neural Information Processing Systems (NeurIPS)},
  pages 11312--11322.

\bibitem[Chen et~al., 2020]{Chen2020finiteSample}
Chen, Z., Maguluri, S.~T., Shakkottai, S., and Shanmugam, K. (2020).
\newblock Finite-sample analysis of stochastic approximation using smooth
  convex envelopes.
\newblock {\em arXiv preprint arXiv:2002.00874}.

\bibitem[Chen et~al., 2019]{Chen2019finiteQ}
Chen, Z., Zhang, S., Doan, T.~T., Maguluri, S.~T., and Clarke, J.-P. (2019).
\newblock Finite-time analysis of {Q}-learning with linear function
  approximation.
\newblock {\em arXiv preprint arXiv:1905.11425}.

\bibitem[Du et~al., 2019]{du2019provably}
Du, S.~S., Luo, Y., Wang, R., and Zhang, H. (2019).
\newblock Provably efficient {Q}-learning with function approximation via
  distribution shift error checking oracle.
\newblock In {\em Advances in Neural Information Processing Systems (NeurIPS)},
  pages 8058--8068.

\bibitem[Even-Dar and Mansour, 2003]{even2003learning}
Even-Dar, E. and Mansour, Y. (2003).
\newblock Learning rates for {Q}-learning.
\newblock {\em Journal of Machine Learning Research}, 5(Dec):1--25.

\bibitem[Hasselt, 2010]{hasselt2010double}
Hasselt, H.~V. (2010).
\newblock Double {Q}-learning.
\newblock In {\em Advances in Neural Information Processing Systems (NeurIPS)},
  pages 2613--2621.

\bibitem[Hasselt et~al., 2016]{van2016deep}
Hasselt, H.~v., Guez, A., and Silver, D. (2016).
\newblock Deep reinforcement learning with double q-learning.
\newblock In {\em Proc. AAAI Conference on Artificial Intelligence (AAAI)}.

\bibitem[Hessel et~al., 2018]{hessel2018rainbow}
Hessel, M., Modayil, J., Van~Hasselt, H., Schaul, T., Ostrovski, G., Dabney,
  W., Horgan, D., Piot, B., Azar, M., and Silver, D. (2018).
\newblock Rainbow: Combining improvements in deep reinforcement learning.
\newblock In {\em Proc. AAAI Conference on Artificial Intelligence (AAAI)}.

\bibitem[Jaakkola et~al., 1994]{jaakkola1994convergence}
Jaakkola, T., Jordan, M.~I., and Singh, S.~P. (1994).
\newblock Convergence of stochastic iterative dynamic programming algorithms.
\newblock In {\em Advances in Neural Information Processing Systems (NeurIPS)},
  pages 703--710.

\bibitem[Lee and He, 2019]{Lee2019Switch}
Lee, D. and He, N. (2019).
\newblock A unified switching system perspective and {ODE} analysis of
  {Q}-learning algorithms.
\newblock {\em arXiv preprint arXiv:1912.02270}.

\bibitem[Li et~al., 2020]{li2020sample}
Li, G., Wei, Y., Chi, Y., Gu, Y., and Chen, Y. (2020).
\newblock Sample complexity of asynchronous {Q}-learning: Sharper analysis and
  variance reduction.
\newblock {\em arXiv preprint arXiv:2006.03041}.

\bibitem[Melo, 2001]{melo2001convergence}
Melo, F.~S. (2001).
\newblock Convergence of {Q}-learning: A simple proof.
\newblock {\em Institute of Systems and Robotics, Tech. Rep}, pages 1--4.

\bibitem[Mnih et~al., 2015]{mnih2015human}
Mnih, V., Kavukcuoglu, K., Silver, D., Rusu, A.~A., Veness, J., Bellemare,
  M.~G., Graves, A., Riedmiller, M., Fidjeland, A.~K., Ostrovski, G., et~al.
  (2015).
\newblock Human-level control through deep reinforcement learning.
\newblock {\em Nature}, 518(7540):529.

\bibitem[Okuyama et~al., 2018]{okuyama2018autonomous}
Okuyama, T., Gonsalves, T., and Upadhay, J. (2018).
\newblock Autonomous driving system based on deep {Q}-learnig.
\newblock In {\em Proc. IEEE International Conference on Intelligent Autonomous
  Systems (ICoIAS)}, pages 201--205.

\bibitem[Qu and Wierman, 2020]{qu2020finite}
Qu, G. and Wierman, A. (2020).
\newblock Finite-time analysis of asynchronous stochastic approximation and
  {Q}-learning.
\newblock {\em arXiv preprint arXiv:2002.00260}.

\bibitem[Shah and Xie, 2018]{shah2018q}
Shah, D. and Xie, Q. (2018).
\newblock {Q}-learning with nearest neighbors.
\newblock In {\em Advances in Neural Information Processing Systems (NeurIPS)},
  pages 3111--3121.

\bibitem[Smith and Winkler, 2006]{smith2006optimizer}
Smith, J.~E. and Winkler, R.~L. (2006).
\newblock The optimizer’s curse: Skepticism and postdecision surprise in
  decision analysis.
\newblock {\em Management Science}, 52(3):311--322.

\bibitem[Szepesv{\'a}ri, 1998]{szepesvari1998asymptotic}
Szepesv{\'a}ri, C. (1998).
\newblock The asymptotic convergence-rate of {Q}-learning.
\newblock In {\em Advances in Neural Information Processing Systems (NeurIPS)},
  pages 1064--1070.

\bibitem[Tai and Liu, 2016]{tai2016robot}
Tai, L. and Liu, M. (2016).
\newblock A robot exploration strategy based on {Q}-learning network.
\newblock In {\em Proc. IEEE International Conference on Real-time Computing
  and Robotics (RCAR)}, pages 57--62.

\bibitem[Tsitsiklis, 1994]{tsitsiklis1994asynchronous}
Tsitsiklis, J.~N. (1994).
\newblock Asynchronous stochastic approximation and {Q}-learning.
\newblock {\em Machine Learning}, 16(3):185--202.

\bibitem[Wainwright, 2019]{wainwright2019stochastic}
Wainwright, M.~J. (2019).
\newblock Stochastic approximation with cone-contractive operators: Sharp
  $\ell_\infty$-bounds for {Q}-learning.
\newblock {\em arXiv preprint arXiv:1905.06265}.

\bibitem[Watkins and Dayan, 1992]{watkins1992q}
Watkins, C.~J. and Dayan, P. (1992).
\newblock {Q}-learning.
\newblock {\em Machine Learning}, 8(3-4):279--292.

\bibitem[Weng et~al., 2020a]{weng2020analysis}
Weng, B., Xiong, H., Liang, Y., and Zhang, W. (2020a).
\newblock Analysis of {Q}-learning with adaptation and momentum restart for
  gradient descent.
\newblock In {\em Proceedings of the Twenty-Ninth International Joint
  Conference on Artificial Intelligence (IJCAI-20)}, pages 3051--3057.

\bibitem[Weng et~al., 2020b]{weng2020momentum}
Weng, B., Xiong, H., Zhao, L., Liang, Y., and Zhang, W. (2020b).
\newblock Momentum {Q}-learning with finite-sample convergence guarantee.
\newblock {\em arXiv preprint arXiv:2007.15418}.

\bibitem[Weng et~al., 2020c]{weng2020provably}
Weng, W., Gupta, H., He, N., Ying, L., and R., S. (2020c).
\newblock Provably-efficient double {Q}-learning.
\newblock {\em arXiv preprint arXiv:arXiv:2007.05034}.

\bibitem[Xu and Gu, 2019]{xu2019deepQ}
Xu, P. and Gu, Q. (2019).
\newblock A finite-time analysis of {Q}-learning with neural network function
  approximation.
\newblock {\em arXiv preprint arXiv:1912.04511}.

\bibitem[Zhang et~al., 2018a]{zhang2018double}
Zhang, Q., Lin, M., Yang, L.~T., Chen, Z., Khan, S.~U., and Li, P. (2018a).
\newblock A double deep {Q}-learning model for energy-efficient edge
  scheduling.
\newblock {\em IEEE Transactions on Services Computing}, 12(5):739--749.

\bibitem[Zhang et~al., 2018b]{zhang2018human}
Zhang, Y., Sun, P., Yin, Y., Lin, L., and Wang, X. (2018b).
\newblock Human-like autonomous vehicle speed control by deep reinforcement
  learning with double {Q}-learning.
\newblock In {\em Proc. IEEE Intelligent Vehicles Symposium (IV)}, pages
  1251--1256.

\bibitem[Zhang et~al., 2017]{zhang2017weighted}
Zhang, Z., Pan, Z., and Kochenderfer, M.~J. (2017).
\newblock Weighted double {Q}-learning.
\newblock In {\em International Joint Conferences on Artificial Intelligence},
  pages 3455--3461.

\bibitem[Zou et~al., 2019]{zou2019finite}
Zou, S., Xu, T., and Liang, Y. (2019).
\newblock Finite-sample analysis for {SARSA} with linear function
  approximation.
\newblock In {\em Advances in Neural Information Processing Systems (NeurIPS)},
  pages 8665--8675.

\end{thebibliography}
\bibliographystyle{apalike}

\newpage

\appendix

\textbf{\huge Supplementary Materials}

\section{Proof of Lemma \ref{lem:uniformBound}} \label{sec:proofOfLemma1}

We prove \Cref{lem:uniformBound} by induction.

First, it is easy to guarantee that the initial case is satisfied, i.e., $\norm{Q^A_1}\leq \frac{R_{\max}}{1-\gamma} = \frac{V_{\max}}{2}, \norm{Q^B_1}\leq \frac{V_{\max}}{2}$. (In practice we usually initialize the algorithm as $Q^A_1=Q^B_1=0$). Next, we assume that $\norm{Q^A_t}\leq \frac{V_{\max}}{2}, \norm{Q^B_t}\leq \frac{V_{\max}}{2}$. It remains to show that such conditions still hold for $t+1$.

Observe that
\begin{align*}
    \norm{Q^A_{t+1}(s,a)} &= \norm{ (1-\alpha_t)Q^A_{t}(s,a) + \alpha_t\left( R_t+\gamma Q^{B}_t(s',\underset{a'\in U(s')}{\arg\max}Q^A_t(s',a') \right) } \\
    &\leq (1-\alpha_t)\norm{Q^A_{t}} + \alpha_t\norm{R_t} + \alpha_t\gamma\norm{Q^{B}_t}\\
    &\leq (1-\alpha_t)\frac{R_{\max}}{1-\gamma} + \alpha_t R_{\max} + \frac{\alpha_t\gamma R_{\max}}{1-\gamma}\\
    &= \frac{R_{\max}}{1-\gamma} = \frac{V_{\max}}{2}.
\end{align*}

Similarly, we can have $\norm{Q^B_{t+1}(s,a)}\leq \frac{V_{\max}}{2}$. Thus we complete the proof.

\section{Proof of Theorem \ref{thm:syncDQ}} \label{sec:proofThm1}

In this appendix, we will provide a detailed proof of~\Cref{thm:syncDQ}. Our proof includes: (a) Part I which analyzes the stochastic error propagation between the two Q-estimators $\norm{Q^B_t  - Q^A_t }$; (b) Part II which analyzes the error dynamics between one Q-estimator and the optimum $\norm{Q^A_t -Q^* }$ conditioned on the error event in Part I; and (c) Part III which bounds the unconditional error $\norm{Q^A_t -Q^* }$. We describe each of the three parts in more details below.

\subsection{Part I: Bounding $\norm{Q^B_t  - Q^A_t }$} \label{subsec:PartI}
The main idea is to upper bound $\norm{Q^B_t  - Q^A_t }$ by a decreasing sequence $\{G_q\}_{q\geq0}$ block-wisely with high probability, where each block or epoch $q$ (with $q\geq0$) is defined by $t\in[\hat{\tau}_q, \hat{\tau}_{q+1})$.
\begin{proposition}
\label{lem:Gq}
Fix $\epsilon>0, \kappa\in(0,1), \sigma\in(0,1)$ and $\Delta\in(0, e-2)$. Consider synchronous double Q-learning using a polynomial learning rate $\alpha_t = \frac{1}{t^\omega}$ with $\omega\in(0,1)$. Let $G_q = (1-\xi)^q G_0$ with $G_0 = V_{\max}$ and $\xi=\frac{1-\gamma}{4}$. Let $\hat{\tau}_{q+1} = \hat{\tau}_q + \frac{2c}{\kappa}\hat{\tau}_q^\omega$ for $q \geq 1$ with $c\geq \frac{\ln(2+\Delta)+1/\hat{\tau}_1^\omega}{1-\ln(2+\Delta)-1/\hat{\tau}_1^\omega}$ and $\hat{\tau}_1$ as the finishing time of the first epoch satisfying 
\begin{equation*}
    \hat{\tau}_1\geq  \max\left\{\left(\frac{1}{1-\ln(2+\Delta)}\right)^{\frac{1}{\omega}}, \left( \frac{128c(c+\kappa)V_{\max}^2}{\kappa^2\left(\frac{\Delta}{2+\Delta}\right)^2\sigma^2\xi^2\epsilon^2 }\ln\left(\frac{64c(c+\kappa)V_{\max}^2}{\kappa^2\left(\frac{\Delta}{2+\Delta}\right)^2\sigma^2\xi^2\epsilon^2 }\right) \right)^{\frac{1}{\omega}} \right\}.
\end{equation*}
Then for any $n$ such that $G_n\geq\sigma\epsilon$, we have
\begin{align*}
    &\mP\left[ \forall q\in [0,n], \forall t\in[\hat{\tau}_{q+1},\hat{\tau}_{q+2}), \norm{Q^B_t  - Q^A_t }\leq G_{q+1} \right]\\
        &\quad\geq 1- \frac{4c(n+1)}{\kappa}\left(1+\frac{2c}{\kappa}\right)|\mcs||\mca| \exp\left( -\frac{\kappa^2\left( \frac{\Delta}{2+\Delta}\right)^2\xi^2 \sigma^2\epsilon^2\hat{\tau}_1^\omega}{64c(c+\kappa)V_{\max}^2} \right).
\end{align*}
\end{proposition}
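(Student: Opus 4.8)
The plan is to follow the four-step structure outlined in Part I of the proof sketch, treating the error process $u_t^{BA}(s,a) := Q^B_t(s,a) - Q^A_t(s,a)$ as a stochastic approximation recursion and controlling it block-wise by the geometrically decaying sequence $G_q = (1-\xi)^q V_{\max}$. First I would derive the one-step recursion for $u_t^{BA}$: since exactly one of $Q^A, Q^B$ is updated at each iteration (with probability $1/2$), a short calculation shows $u_{t+1}^{BA}(s,a) = (1-\alpha_t)u_t^{BA}(s,a) + \alpha_t(h_t(s,a) + z_t(s,a))$, where $h_t$ is a $\gamma$-contractive-type mapping of $u_t^{BA}$ (it involves the difference $\gamma(Q^A_t(s',a^*) - Q^B_t(s',b^*))$ together with the Bernoulli choice, and one checks $\norm{h_t} \le \gamma\norm{u_t^{BA}}$ using the near-optimality of $a^*$ for $Q^A$ and $b^*$ for $Q^B$) and $z_t$ is a martingale difference sequence bounded by $\mathcal{O}(V_{\max})$ coming from the reward sampling and the random update choice. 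This is \Cref{lem:uBAdyn}.

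Next I would unroll this recursion over a block $[\hat\tau_q, \hat\tau_{q+1})$ into a deterministic part and a noise part: define $X_{t;\hat\tau_q}$ by the deterministic recursion $X_{t+1} = (1-\alpha_t)X_t + \alpha_t\gamma\norm{X_t}_\infty$-type bound started from $X_{\hat\tau_q} \equiv G_q$ (using the inductive hypothesis $\norm{u_{\hat\tau_q}^{BA}} \le G_q$), and $Z_{t;\hat\tau_q}$ by $Z_{t+1} = (1-\alpha_t)Z_t + \alpha_t z_t$ started from $0$; then a comparison/induction argument on $t$ gives the sandwich $-X_{t;\hat\tau_q} + Z_{t;\hat\tau_q} \le u_t^{BA} \le X_{t;\hat\tau_q} + Z_{t;\hat\tau_q}$ (\Cref{lem:uBAsanwich}). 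The deterministic term $X_{t;\hat\tau_q}$ is then shown in \Cref{lem:Xt} to shrink from $G_q$ to roughly $\frac{1+\gamma}{2}\cdot\frac{G_q}{?}$ — more precisely to below $\frac{G_q}{2}$ by the end of the block, which is where the block length $\hat\tau_{q+1} - \hat\tau_q = \frac{2c}{\kappa}\hat\tau_q^\omega$ and the condition $c \ge \frac{\ln(2+\Delta)+1/\hat\tau_1^\omega}{1-\ln(2+\Delta)-1/\hat\tau_1^\omega}$ are used: one needs $\prod_{t\in \text{block}}(1-\alpha_t) \le$ a suitable constant, which via $\sum \alpha_t \gtrsim \frac{c}{\kappa\hat\tau_q^\omega}\cdot\hat\tau_q^\omega = \frac{c}{\kappa}$ and $1-x \le e^{-x}$ translates the log-condition on $c$ into the needed contraction. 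For the noise term $Z_{t;\hat\tau_q}$, \Cref{lem:ZlDiff} bounds it by Azuma–Hoeffding: $Z_{t;\hat\tau_q}$ is a weighted sum of the bounded martingale differences $z_t$ with weights $\prod(1-\alpha_s)\alpha_t$, whose squared sum is $\mathcal{O}(\alpha_{\hat\tau_q}) = \mathcal{O}(\hat\tau_q^{-\omega})$, so $\mP(\norm{Z_{t;\hat\tau_q}} > \frac{\xi}{2}G_q) \le 2\exp(-c'\hat\tau_q^\omega \xi^2 G_q^2 / V_{\max}^2)$ for each $(s,a)$; combining, $\norm{u_t^{BA}} \le X_{t;\hat\tau_q} + Z_{t;\hat\tau_q} \le (1-\xi)G_q = G_{q+1}$ on this high-probability event, closing the induction.

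Finally, Step 4: I would apply a union bound (\Cref{lem:unionBound}) over all $|\mcs||\mca|$ state-action pairs, over all $q \in [0,n]$, and over all $t$ within the relevant blocks (the number of time steps per block is $\mathcal{O}(\frac{c}{\kappa}\hat\tau_{q}^\omega)$ but the Azuma bound is uniform in $t$ within a block after taking a maximal-inequality form, so effectively the union is over $(n+1)$ blocks times $|\mcs||\mca|$ pairs times the per-block length factor $\frac{2c}{\kappa}(1+\frac{2c}{\kappa})$-type bookkeeping), using the fact that $G_q \ge G_n \ge \sigma\epsilon$ throughout so that each failure probability is at most $\exp(-\frac{\kappa^2(\frac{\Delta}{2+\Delta})^2\xi^2\sigma^2\epsilon^2\hat\tau_1^\omega}{64c(c+\kappa)V_{\max}^2})$ (the monotonicity $\hat\tau_q \ge \hat\tau_1$ and $G_q \ge \sigma\epsilon$ makes $\hat\tau_1$ and $\sigma\epsilon$ the worst case); the lower bound $\hat\tau_1 \ge (\cdots \ln(\cdots))^{1/\omega}$ in the hypothesis is exactly what is needed to absorb the polynomial prefactor and make the whole bound meaningful, and $\hat\tau_1 \ge (\frac{1}{1-\ln(2+\Delta)})^{1/\omega}$ ensures the contraction constants stay in $(0,1)$.

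The main obstacle I anticipate is Step 3 — specifically the tight bookkeeping in \Cref{lem:Xt} showing that the deterministic sequence contracts by the factor $(1-\xi)$ (with $\xi = \frac{1-\gamma}{4}$) within a single block of length $\frac{2c}{\kappa}\hat\tau_q^\omega$, because this requires carefully tracking both the contraction from the $\gamma$-factor in $h_t$ and the contraction from the $\prod(1-\alpha_t)$ factor, and it is the coupling of these two effects (plus the noise allowance $\frac{\xi}{2}G_q$) that forces the precise form of the conditions on $c$, $\Delta$, and $\hat\tau_1$. Handling the fact that $h_t$ depends on the full current iterates (through $a^*,b^*$) rather than just $u_t^{BA}(s,a)$ — so that the recursion is genuinely coupled across state-action pairs and one must bound $\norm{h_t} \le \gamma\norm{u_t^{BA}}_\infty$ and then run the induction on the sup-norm — is the conceptual crux that distinguishes this from the single-path Q-learning analysis.
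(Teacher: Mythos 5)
Your overall architecture coincides with the paper's Part I: the SA recursion for $u^{BA}_t$, the sandwich bound via a deterministic sequence $X_{t;\hat\tau_q}$ started at $G_q$ and a martingale-driven sequence $Z_{t;\hat\tau_q}$ started at $0$, a per-block Azuma bound using increments of order $V_{\max}/\hat\tau_q^\omega$ over $O((c/\kappa)^2\hat\tau_q^\omega)$ terms, and a union bound over state-action pairs, blocks, and the times inside each block, with $G_q\ge G_n\ge\sigma\epsilon$ and $\hat\tau_q\ge\hat\tau_1$ making $\hat\tau_1,\sigma\epsilon$ the worst case and the second condition on $\hat\tau_1$ absorbing the polynomial prefactor. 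However, there is a genuine error at the crux (your Step 1, the paper's \Cref{lem:uBAdyn}): the drift of $u^{BA}_t$ is \emph{not} $\gamma$-contractive. Computing the conditional expectation over the Bernoulli update choice gives $\mE[F_t(s,a)|\mcF_t]=\frac12 u^{BA}_t(s,a)+\frac{\gamma}{2}\,\mE_{s_{t+1}}\bigl[Q^A_t(s_{t+1},b^*)-Q^B_t(s_{t+1},a^*)\bigr]$; the unchosen estimator enters directly through the first term, so the argmax argument only yields $\norm{h_t}\le\frac{1+\gamma}{2}\norm{u^{BA}_t}$. Your claimed bound $\norm{h_t}\le\gamma\norm{u^{BA}_t}$ is false: for $\gamma=0$ one has $h_t=\frac12 u^{BA}_t$, not $0$.

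This is not merely a constant slip, because the proposition's constants are calibrated to the modulus $\gamma'=\frac{1+\gamma}{2}=1-2\xi$ with $\xi=\frac{1-\gamma}{4}$: the deterministic sequence is driven down only to $\bigl(\gamma'+\frac{2}{2+\Delta}\xi\bigr)G_q$ over a block (your statement that it falls ``below $G_q/2$'' fails for $\gamma$ close to $1$), which leaves exactly $\frac{\Delta}{2+\Delta}\xi G_q$ — not $\frac{\xi}{2}G_q$ — as the admissible noise level, and it is this quantity whose square, $\bigl(\frac{\Delta}{2+\Delta}\bigr)^2\xi^2\sigma^2\epsilon^2$, appears in the exponent of the stated probability bound and in the $\hat\tau_1$ condition, with the condition on $c$ (via $\ln(2+\Delta)$) ensuring the product $\prod(1-\alpha_t)$ over a block kills the residual $2\xi G_q$ down to $\frac{2}{2+\Delta}\xi G_q$. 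With the $\gamma$-contraction you assume and a $\frac{\xi}{2}G_q$ noise budget, the induction step $X+|Z|\le(1-\xi)G_q$ and the quoted exponent/conditions do not line up as stated. The fix requires no new idea — recompute the drift as above and rebalance the deterministic/stochastic budgets exactly as the paper does — but as written your Step 1 claim is wrong and the Step 3 bookkeeping that depends on it does not go through.
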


The proof of Proposition \ref{lem:Gq} consists of the following four steps.

\subsubsection{Step 1: Characterizing the dynamics of $Q_t^B(s,a) - Q_t^A(s,a)$ } 
We first characterize the dynamics of $u_t^{BA}(s,a):=Q_t^B(s,a) - Q_t^A(s,a)$ as a stochastic approximation (SA) algorithm in this step.
\begin{lemma}\label{lem:uBAdyn}
Consider double Q-learning in Algorithm \ref{alg:doubleQ}. Then we have
\begin{equation*}
    u_{t+1}^{BA}(s,a) = (1-\alpha_t)u_t^{BA}(s,a) + \alpha_t F_t(s,a),
\end{equation*}
where
\begin{equation*}
    F_t(s,a) = \left\{\begin{aligned}
    &Q^B_t(s,a) - R_t - \gamma Q^B_t(s_{t+1},a^*),\quad\text{w.p. 1/2}  \\
    &R_t + \gamma Q^A_t(s_{t+1},b^*) - Q^A_t(s,a),\quad\text{w.p. 1/2}.
    \end{aligned}
    \right.
\end{equation*}
In addition, $F_t$ satisfies
\begin{equation*}
    \norm{\mE[F_t|\mcF_t]} \leq \frac{1+\gamma}{2} \norm{u_t^{BA} }.
\end{equation*}
\end{lemma}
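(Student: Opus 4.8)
The plan is to derive the recursion for $u_t^{BA}$ directly from the update rules in Algorithm~\ref{alg:doubleQ}, splitting into the two cases according to which estimator is chosen at iteration $t$. First I would write down what happens to $u_{t+1}^{BA}(s,a) = Q^B_{t+1}(s,a) - Q^A_{t+1}(s,a)$ conditioned on the event UPDATE(A): in that case $Q^B_{t+1}(s,a) = Q^B_t(s,a)$ while $Q^A_{t+1}(s,a) = (1-\alpha_t)Q^A_t(s,a) + \alpha_t(R_t + \gamma Q^B_t(s_{t+1},a^*))$ where $a^* = \arg\max_{a'}Q^A_t(s_{t+1},a')$. Subtracting and adding/subtracting $\alpha_t Q^B_t(s,a)$ rearranges this into $(1-\alpha_t)u_t^{BA}(s,a) + \alpha_t\big(Q^B_t(s,a) - R_t - \gamma Q^B_t(s_{t+1},a^*)\big)$, which is exactly the first branch of $F_t$. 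Symmetrically, conditioned on UPDATE(B), $Q^A$ is frozen and $Q^B$ is updated using $b^* = \arg\max_{b'}Q^B_t(s_{t+1},b')$, giving $(1-\alpha_t)u_t^{BA}(s,a) + \alpha_t\big(R_t + \gamma Q^A_t(s_{t+1},b^*) - Q^A_t(s,a)\big)$, the second branch. Since each branch occurs with probability $1/2$ independently of $\mcF_t$, this establishes the claimed SA form.

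Next I would bound $\norm{\mE[F_t\mid\mcF_t]}$. Taking the conditional expectation over the coin flip and over the sampling randomness $(s_{t+1},R_t)$, we get
\begin{equation*}
\mE[F_t(s,a)\mid\mcF_t] = \tfrac12\Big(Q^B_t(s,a) - \mcT^A_t Q^B_t(s,a)\Big) + \tfrac12\Big(\mcT^B_t Q^A_t(s,a) - Q^A_t(s,a)\Big),
\end{equation*}
where I abbreviate $\mcT^A_t Q^B_t(s,a) := \mE_{s'}[R^{s'}_{sa} + \gamma Q^B_t(s', \arg\max_{a'}Q^A_t(s',a'))]$ and similarly for $\mcT^B_t Q^A_t$. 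Grouping the $Q^B_t(s,a)$ and $Q^A_t(s,a)$ terms gives $\tfrac12 u_t^{BA}(s,a)$ plus $\tfrac12\big(\mcT^B_t Q^A_t(s,a) - \mcT^A_t Q^B_t(s,a)\big)$. The key observation is that this last difference is controlled by $\gamma\norm{u_t^{BA}}$: inside the expectation we are comparing $\gamma Q^A_t(s',\arg\max_{b'}Q^B_t(s',b'))$ with $\gamma Q^B_t(s',\arg\max_{a'}Q^A_t(s',a'))$. I would argue that this quantity lies between $\gamma(\min_{a'}Q^A_t(s',a') - \max_{a'}Q^B_t(s',a'))$-type bounds; more cleanly, using the standard fact that for the greedy action $a^*$ of $Q^A_t$ one has $Q^A_t(s',b^*) - \gamma^{-1}\cdot 0 \le Q^A_t(s',a^*)$ and $Q^B_t(s',a^*) \le Q^B_t(s',b^*)$, one obtains $|Q^A_t(s',b^*) - Q^B_t(s',a^*)| \le \max\{ |Q^A_t(s',a^*) - Q^B_t(s',a^*)|, |Q^A_t(s',b^*) - Q^B_t(s',b^*)| \} \le \norm{u_t^{AB}} = \norm{u_t^{BA}}$. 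Combining, $\norm{\mE[F_t\mid\mcF_t]} \le \tfrac12\norm{u_t^{BA}} + \tfrac{\gamma}{2}\norm{u_t^{BA}} = \tfrac{1+\gamma}{2}\norm{u_t^{BA}}$.

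The main obstacle is the second bound, specifically the argument that the difference of the two cross-terms $Q^A_t(s',b^*) - Q^B_t(s',a^*)$ is bounded by $\norm{u_t^{BA}}$ despite the two maximizing actions $a^*$ and $b^*$ being different. The clean way is a sandwiching argument: since $a^*$ maximizes $Q^A_t(s',\cdot)$, we have $Q^A_t(s',b^*) \le Q^A_t(s',a^*)$, hence $Q^A_t(s',b^*) - Q^B_t(s',a^*) \le Q^A_t(s',a^*) - Q^B_t(s',a^*) \le \norm{u_t^{BA}}$; since $b^*$ maximizes $Q^B_t(s',\cdot)$, we have $Q^B_t(s',a^*) \le Q^B_t(s',b^*)$, hence $Q^A_t(s',b^*) - Q^B_t(s',a^*) \ge Q^A_t(s',b^*) - Q^B_t(s',b^*) \ge -\norm{u_t^{BA}}$. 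This two-sided bound holds pointwise in $s'$, so it survives taking $\mE_{s'}$ and then the supremum over $(s,a)$. I would present this sandwiching as a small self-contained sub-claim, then assemble the pieces; everything else is routine rearrangement of the update equations.
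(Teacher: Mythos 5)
Your proposal is correct and follows essentially the same route as the paper: case-split on which estimator is updated to obtain the SA recursion, compute $\mE[F_t\mid\mcF_t]=\tfrac12 u_t^{BA}(s,a)+\tfrac{\gamma}{2}\mE_{s'}[Q^A_t(s',b^*)-Q^B_t(s',a^*)]$, and control the cross term via the argmax properties of $a^*$ and $b^*$. The only cosmetic difference is that you sandwich $Q^A_t(s',b^*)-Q^B_t(s',a^*)$ pointwise in $s'$ before taking the expectation, whereas the paper applies the same two inequalities after a case split on the sign of the expectation—an equivalent argument.
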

\begin{proof}
Algorithm \ref{alg:doubleQ} indicates that at each time, either $Q^A$ or $Q^B$ is updated with equal probability. When updating $Q^A$ at time $t$, for each $(s,a)$ we have
\begin{align*}
    u_{t+1}^{BA}(s,a) &= Q^B_{t+1}(s,a) - Q^A_{t+1}(s,a)\\
    &= Q^B_t(s,a) - (Q^A_t(s,a) + \alpha_t (R_t + \gamma Q^B_t(s_{t+1},a^*) - Q^A_t(s,a)))\\
    &= (1-\alpha_t) Q^B_t(s,a) - ( (1-\alpha_t)Q^A_t(s,a) + \alpha_t (R_t + \gamma Q^B_t(s_{t+1},a^*) - Q^B_t(s,a)) )\\
    &= (1-\alpha_t) u_t^{BA}(s,a) + \alpha_t(Q^B_t(s,a) - R_t - \gamma Q^B_t(s_{t+1},a^*) ).
\end{align*}
Similarly, when updating $Q^B$, we have
\begin{align*}
    u_{t+1}^{BA}(s,a) &= Q^B_{t+1}(s,a) - Q^A_{t+1}(s,a)\\
    &= (Q^B_t(s,a) + \alpha_t (R_t + \gamma Q^A_t(s_{t+1},b^*) - Q^B_t(s,a))) - Q^A_t(s,a)\\
    &= (1-\alpha_t) Q^B_t(s,a) + ( \alpha_t (R_t + \gamma Q^A_t(s_{t+1},b^*) - Q^A_t(s,a)) - (1-\alpha_t)Q^A_t(s,a) )\\
    &= (1-\alpha_t) u_t^{BA}(s,a) + \alpha_t(R_t + \gamma Q^A_t(s_{t+1},b^*) - Q^A_t(s,a)).
\end{align*}
Therefore, we can rewrite the dynamics of $u_t^{BA}$ as $u_{t+1}^{BA}(s,a) = (1-\alpha_t)u_t^{BA}(s,a) + \alpha_t F_t(s,a)$, where
\begin{equation*}
    F_t(s,a) = \left\{\begin{aligned}
    &Q^B_t(s,a) - R_t - \gamma Q^B_t(s_{t+1},a^*),\quad\text{w.p. 1/2}  \\
    &R_t + \gamma Q^A_t(s_{t+1},b^*) - Q^A_t(s,a),\quad\text{w.p. 1/2}.
    \end{aligned}
    \right.
\end{equation*}
Thus, we have
\begin{align}
    \mE&[F_t(s,a)|\mcF_t] \nonumber\\  
    &= \frac{1}{2}\left( Q^B_t(sa)\! - \!\underset{s_{t+1}}{\mE}[R_{sa}^{s'} \!- \!\gamma  Q^B_t(s_{t+1},a^*)] \right)\! +\! \frac{1}{2}\left(\! \underset{s_{t+1}}{\mE}[R_{s,a}^{s'}\! +\! \gamma  Q^A_t(s_{t+1},b^*)\!]\! -\! Q^A_t(s,a) \right)\nonumber\\
    &= \frac{1}{2} (Q^B_t(s,a) - Q^A_t(s,a)) + \frac{\gamma}{2}  \underset{s_{t+1}}{\mE}\left[ Q^A_t(s_{t+1},b^*) - Q^B_t(s_{t+1},a^*) \right]\nonumber\\
    &= \frac{1}{2} u_t^{BA}(s,a) + \frac{\gamma}{2}  \underset{s_{t+1}}{\mE}\left[ Q^A_t(s_{t+1},b^*) - Q^B_t(s_{t+1},a^*) \right]. \label{eq:pf1LemGq}
\end{align}
Next, we bound $\underset{s_{t+1}}{\mE}\left[ Q^A_t(s_{t+1},b^*) - Q^B_t(s_{t+1},a^*) \right]$. First, consider the case when $\underset{s_{t+1}}{\mE} Q^A_t(s_{t+1},b^*) \geq \underset{s_{t+1}}{\mE} Q^B_t(s_{t+1},a^*)$. Then we have
\begin{align*}
    \left| \underset{s_{t+1}}{\mE}\left[ Q^A_t(s_{t+1},b^*) - Q^B_t(s_{t+1},a^*) \right] \right| &= \underset{s_{t+1}}{\mE}\left[ Q^A_t(s_{t+1},b^*) - Q^B_t(s_{t+1},a^*) \right]\\
    &\overset{\text{(i)}}{\leq} \underset{s_{t+1}}{\mE}\left[ Q^A_t(s_{t+1},a^*) - Q^B_t(s_{t+1},a^*) \right]\\
    &\leq \norm{u_t^{BA}},
\end{align*}
where (i) follow from the definition of $a^*$ in Algorithm \ref{alg:doubleQ}. Similarly, if $\underset{s_{t+1}}{\mE} Q^A_t(s_{t+1},b^*) < \underset{s_{t+1}}{\mE} Q^B_t(s_{t+1},a^*)$, we have
\begin{align*}
    \left|  \underset{s_{t+1}}{\mE}\left[ Q^A_t(s_{t+1},b^*) - Q^B_t(s_{t+1},a^*) \right] \right| &= \underset{s_{t+1}}{\mE}\left[ Q^B_t(s_{t+1},a^*) - Q^A_t(s_{t+1},b^*) \right]\\
    &\overset{\text{(i)}}{\leq} \underset{s_{t+1}}{\mE}\left[ Q^B_t(s_{t+1},b^*) - Q^A_t(s_{t+1},b^*) \right]\\
    &\leq \norm{u_t^{BA}},
\end{align*}
where (i) follows from the definition of $b^*$.
Thus we can conclude that 
\begin{equation*}
    \left|  \underset{s_{t+1}}{\mE}\left[ Q^A_t(s_{t+1},b^*) - Q^B_t(s_{t+1},a^*) \right] \right| \leq \norm{u_t^{BA}}.
\end{equation*}
Then, we continue to bound \eqref{eq:pf1LemGq}, and obtain
\begin{align*}
    \left|\mE[F_t(s,a)|\mcF_t] \right| &= \left|\frac{1}{2} u_t^{BA}(s,a) + \frac{\gamma}{2}  \underset{s_{t+1}}{\mE}\left[ Q^A_t(s_{t+1},b^*) - Q^B_t(s_{t+1},a^*) \right]\right|\\
    &\leq \frac{1}{2}\norm{u_t^{BA}} + \frac{\gamma}{2}\left|\underset{s_{t+1}}{\mE}\left[ Q^A_t(s_{t+1},b^*) - Q^B_t(s_{t+1},a^*) \right]\right|\\
    &\leq \frac{1+\gamma}{2} \norm{u_t^{BA}},
\end{align*}
for all $(s,a)$ pairs. Hence, $\norm{\mE[F_t|\mcF_t]}\leq \frac{1+\gamma}{2} \norm{u_t^{BA}}$. 
\end{proof}

Applying~\Cref{lem:uBAdyn}, we write the dynamics of $u_t^{BA}(s,a)$ in the form of a classical SA algorithm driven by a martingale difference sequence as follows:
\begin{equation*}
    u_{t+1}^{BA}(s,a) = (1-\alpha_t)u_t^{BA}(s,a) + \alpha_t F_t(s,a) = (1-\alpha_t)u_t^{BA}(s,a) + \alpha_t (h_t(s,a) + z_t(s,a)),
\end{equation*}
where $h_t(s,a) = \mE[F_t(s,a)|\mcF_t]$ and $z_t(s,a) = F_t(s,a) - \mE[F_t|\mcF_t]$. Then, we obtain $\mE [z_t(s,a)|\mcF_t] = 0$ and $\norm{h_t} \leq \frac{1+\gamma}{2}\norm{u_t^{BA}}$ following from Lemma \ref{lem:uBAdyn}. We define $u^*(s,a) = 0$, and treat $h_t$ as an operator over $u^{BA}_t$. Then $h_t$ has a contraction property as:
\begin{equation}\label{eq:uContraction}
    \norm{h_t-u^*} \leq \gamma'\norm{u_t^{BA}-u^*},
\end{equation}
where $\gamma'=\frac{1+\gamma}{2}\in(0,1)$. Based on this SA formulation, we bound $u_t^{BA}(s,a)$ block-wisely in the next step. 

\subsubsection{Step 2: Constructing sandwich bounds on $u_t^{BA}$}
We derive lower and upper bounds on $u_t^{BA}$ via two sequences $X_{t;\hat{\tau}_q}$ and $Z_{t;\hat{\tau}_q}$ in the following lemma.

\begin{lemma}\label{lem:uBAsanwich}
Let $\hat{\tau}_q$ be such that $\norm{u_t^{BA}}\leq G_q$ for all $t\geq\hat{\tau}_q$. Define $Z_{t;\hat{\tau}_q}(s,a), X_{t;\hat{\tau}_q}(s,a)$ as
\begin{align*}
    Z_{t+1;\hat{\tau}_q}(s,a) &= (1-\alpha_t)Z_{t;\hat{\tau}_q}(s,a) + \alpha_t z_t(s,a), \quad \text{with } Z_{\hat{\tau}_q;\hat{\tau}_q}(s,a) = 0;\\
    X_{t+1;\hat{\tau}_q}(s,a) &= (1-\alpha_t)X_{t;\hat{\tau}_q}(s,a) + \alpha_t \gamma'G_q, \quad \text{with } X_{\hat{\tau}_q;\hat{\tau}_q}(s,a) = G_q,\gamma'=\frac{1+\gamma}{2}.
\end{align*}
Then for any $t\geq\hat{\tau}_q$ and state-action pair $(s,a)$, we have
\begin{equation*}
    -X_{t;\hat{\tau}_q}(s,a) + Z_{t;\hat{\tau}_q}(s,a) \leq u_t^{BA}(s,a) \leq X_{t;\hat{\tau}_q}(s,a) + Z_{t;\hat{\tau}_q}(s,a).
\end{equation*}
\end{lemma}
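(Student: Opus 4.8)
The plan is to prove the sandwich bound $-X_{t;\hat\tau_q}(s,a) + Z_{t;\hat\tau_q}(s,a) \leq u_t^{BA}(s,a) \leq X_{t;\hat\tau_q}(s,a) + Z_{t;\hat\tau_q}(s,a)$ by a straightforward induction on $t \geq \hat\tau_q$ for a fixed state-action pair $(s,a)$ and a fixed block index $q$. The base case $t = \hat\tau_q$ is immediate: by definition $Z_{\hat\tau_q;\hat\tau_q}(s,a) = 0$ and $X_{\hat\tau_q;\hat\tau_q}(s,a) = G_q$, while the hypothesis on $\hat\tau_q$ gives $\norm{u_{\hat\tau_q}^{BA}} \leq G_q$, hence $|u_{\hat\tau_q}^{BA}(s,a)| \leq G_q = X_{\hat\tau_q;\hat\tau_q}(s,a) + Z_{\hat\tau_q;\hat\tau_q}(s,a)$ and likewise for the lower bound.

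For the inductive step, I would assume the two-sided bound holds at time $t$ and push it to $t+1$ using the three parallel recursions: $u_{t+1}^{BA} = (1-\alpha_t)u_t^{BA} + \alpha_t(h_t + z_t)$ from Step 1, together with the defining recursions for $X_{t+1;\hat\tau_q}$ and $Z_{t+1;\hat\tau_q}$. Since $\alpha_t \in (0,1]$, the coefficient $(1-\alpha_t)$ is nonnegative, so the induction hypothesis $u_t^{BA}(s,a) \leq X_{t;\hat\tau_q}(s,a) + Z_{t;\hat\tau_q}(s,a)$ can be multiplied through by $(1-\alpha_t)$ without flipping the inequality. Then I would add $\alpha_t h_t(s,a) + \alpha_t z_t(s,a)$ to the $u$-side and $\alpha_t\gamma' G_q + \alpha_t z_t(s,a)$ to the bounding side; the $z_t$ terms match exactly, and it remains to check $h_t(s,a) \leq \gamma' G_q$. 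This follows from the contraction property \eqref{eq:uContraction}: $\norm{h_t} \leq \gamma'\norm{u_t^{BA}} \leq \gamma' G_q$, where the last inequality uses that $\norm{u_t^{BA}} \leq G_q$ for all $t \geq \hat\tau_q$ (this is exactly the standing assumption on $\hat\tau_q$ in the lemma statement, so it applies at time $t$). Thus $u_{t+1}^{BA}(s,a) \leq (1-\alpha_t)(X_{t;\hat\tau_q} + Z_{t;\hat\tau_q}) + \alpha_t\gamma' G_q + \alpha_t z_t = X_{t+1;\hat\tau_q}(s,a) + Z_{t+1;\hat\tau_q}(s,a)$. The lower bound is symmetric: from $u_t^{BA}(s,a) \geq -X_{t;\hat\tau_q} + Z_{t;\hat\tau_q}$ and $h_t(s,a) \geq -\gamma' G_q$, multiplying the $X,Z$ recursions and adding the noise gives $u_{t+1}^{BA}(s,a) \geq -X_{t+1;\hat\tau_q} + Z_{t+1;\hat\tau_q}$.

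There is essentially no hard part here — the lemma is a deterministic comparison/sandwich argument and the only ingredients are nonnegativity of $1-\alpha_t$, the contraction bound on $h_t$, and matching the martingale-noise recursion $Z$ term by term. The one point that deserves a careful sentence is why $\norm{u_t^{BA}} \leq G_q$ is legitimately available at every $t$ in the range: this is not something proved within this lemma but is the hypothesis defining $\hat\tau_q$, and in the eventual application it will be discharged by the block-wise induction of Step 3 (the event $\norm{u_t^{BA}}\le G_q$ on $[\hat\tau_q,\hat\tau_{q+1})$ combined with the decay $G_{q+1}\le G_q$ to extend beyond $\hat\tau_{q+1}$). So within the proof of this lemma I would simply invoke that hypothesis. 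If one wants to be scrupulous, the induction should be stated as: for all $t \geq \hat\tau_q$, \emph{both} inequalities hold simultaneously, and carry the pair through together, since the upper and lower bounds do not interact but it is cleanest to verify them in one pass.
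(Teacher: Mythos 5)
Your proof is correct and follows essentially the same route as the paper: induction on $t$ starting from $t=\hat{\tau}_q$, multiplying the inductive sandwich by $(1-\alpha_t)\geq 0$, and bounding $\lvert h_t(s,a)\rvert \leq \gamma'\norm{u_t^{BA}}\leq \gamma' G_q$ via the contraction property to match the $X$ and $Z$ recursions. Your added remark about where the hypothesis $\norm{u_t^{BA}}\leq G_q$ for $t\geq\hat{\tau}_q$ gets discharged (in the block-wise induction of Step 3) is accurate and consistent with how the paper uses the lemma.
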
 
\begin{proof}
We proceed the proof by induction. For the initial condition $t=\hat{\tau}_q$, $\norm{u_{\hat{\tau}_q}^{BA}}\leq G_q$ implies $-G_q\leq u_{\hat{\tau}_q}^{BA} \leq G_q$. We assume the sandwich bound holds for time $t$. It remains to check that the bound also holds for $t+1$.

At time $t+1$, we have
\begin{align*}
    u_{t+1}^{BA}(s,a) &= (1-\alpha_t)u_t^{BA}(s,a) + \alpha_t (h_t(s,a) + z_t(s,a))\\
    &\leq (1-\alpha_t)( X_{t;\hat{\tau}_q}(s,a) + Z_{t;\hat{\tau}_q}(s,a) ) + \alpha_t (h_t(s,a) + z_t(s,a))\\
    &\overset{\text{(i)}}{\leq} \left[(1-\alpha_t) X_{t;\hat{\tau}_q}(s,a) + \alpha_t \gamma'\norm{u_t^{BA}}\right] + \left[(1-\alpha_t) Z_{t;\hat{\tau}_q}(s,a) + \alpha_t z_t(s,a)\right]\\
    &\leq \left[(1-\alpha_t) X_{t;\hat{\tau}_q}(s,a) + \alpha_t \gamma'G_q\right] + \left[(1-\alpha_t) Z_{t;\hat{\tau}_q}(s,a) + \alpha_t z_t(s,a)\right]\\
    &= X_{t+1;\hat{\tau}_q}(s,a) + Z_{t+1;\hat{\tau}_q}(s,a),
\end{align*}
where (i) follows from Lemma \ref{lem:uBAdyn}. Similarly, we can bound the other direction as
\begin{align*}
    u_{t+1}^{BA}(s,a) &= (1-\alpha_t)u_t^{BA}(s,a) + \alpha_t (h_t(s,a) + z_t(s,a))\\
    &\geq (1-\alpha_t)( -X_{t;\hat{\tau}_q}(s,a) + Z_{t;\hat{\tau}_q}(s,a) ) + \alpha_t (h_t(s,a) + z_t(s,a))\\
    &\geq \left[-(1-\alpha_t) X_{t;\hat{\tau}_q}(s,a) - \alpha_t \gamma'\norm{u_t^{BA}}\right] + \left[(1-\alpha_t) Z_{t;\hat{\tau}_q}(s,a) + \alpha_t z_t(s,a)\right]\\
    &\geq \left[-(1-\alpha_t) X_{t;\hat{\tau}_q}(s,a) - \alpha_t \gamma'G_q\right] + \left[(1-\alpha_t) Z_{t;\hat{\tau}_q}(s,a) + \alpha_t z_t(s,a)\right]\\
    &= -X_{t+1;\hat{\tau}_q}(s,a) + Z_{t+1;\hat{\tau}_q}(s,a).
\end{align*}
\end{proof}

\subsubsection{Step 3: Bounding $X_{t;\hat{\tau}_q}$ and $Z_{t;\hat{\tau}_q}$ for block $q+1$}
We bound $X_{t;\hat{\tau}_q}$ and $Z_{t;\hat{\tau}_q}$ in \Cref{lem:Xt} and \Cref{lem:ZlDiff} below, respectively. 
Before that, we first introduce the following technical lemma which will be useful in the proof of~\Cref{lem:Xt}.

\begin{lemma}\label{lem:prodHelp}
Fix $\omega\in(0,1)$. Let $0 < t_1 < t_2$. Then we have
\begin{equation*}
    \prod_{i=t_1}^{t_2} \left( 1-\frac{1}{i^\omega} \right) \leq \exp\left( -\frac{t_2 - t_1}{t_2^\omega} \right).
\end{equation*}
\end{lemma}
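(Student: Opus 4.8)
\textbf{Proof plan for Lemma~\ref{lem:prodHelp}.}

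The plan is to reduce the product bound to an exponential bound by taking logarithms and then using the elementary inequality $\ln(1-x)\le -x$, valid for $x\in[0,1)$. First I would note that since $t_1\ge 1$ and $\omega\in(0,1)$, each factor $1-i^{-\omega}$ lies in $[0,1)$, so the logarithm is well-defined (and if any factor is zero, i.e.\ $i=1$, both sides handle it consistently, though in the intended application $t_1>0$ is an integer $\ge 1$ and the degenerate case can be sidestepped or treated separately). Taking $\ln$ of the left-hand side gives $\sum_{i=t_1}^{t_2}\ln(1-i^{-\omega}) \le -\sum_{i=t_1}^{t_2} i^{-\omega}$.

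Next I would lower-bound the sum $\sum_{i=t_1}^{t_2} i^{-\omega}$. The cleanest route: since the map $i\mapsto i^{-\omega}$ is decreasing, each term satisfies $i^{-\omega}\ge t_2^{-\omega}$ for $i\le t_2$, so crudely $\sum_{i=t_1}^{t_2} i^{-\omega} \ge (t_2-t_1+1)\,t_2^{-\omega} \ge (t_2-t_1)\,t_2^{-\omega}$. Combining with the previous display yields $\ln\bigl(\prod_{i=t_1}^{t_2}(1-i^{-\omega})\bigr)\le -(t_2-t_1)/t_2^{\omega}$, and exponentiating both sides (using monotonicity of $\exp$) gives exactly the claimed bound
\[
    \prod_{i=t_1}^{t_2}\left(1-\frac{1}{i^\omega}\right) \le \exp\left(-\frac{t_2-t_1}{t_2^\omega}\right).
\]

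There is essentially no hard part here — the lemma is a routine calculus estimate — but the one point that requires a little care is the bound on the partial sum of $i^{-\omega}$: one could instead compare with the integral $\int_{t_1}^{t_2+1} x^{-\omega}\,dx$ to get a sharper constant, but the trivial termwise bound $i^{-\omega}\ge t_2^{-\omega}$ already suffices for the stated inequality and keeps the argument self-contained. I would also make explicit at the start that the indices run over integers and that $t_1\ge 1$ so that all factors are nonnegative, ensuring the logarithm step is legitimate.
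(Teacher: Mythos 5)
Your proof is correct. The first half coincides with the paper's argument: take logarithms and use $\ln(1-x)\le -x$ to reduce the claim to lower-bounding $\sum_{i=t_1}^{t_2} i^{-\omega}$. Where you diverge is in how that sum is handled: the paper compares the sum with the integral $\int_{t_1}^{t_2} t^{-\omega}\,dt=\frac{t_2^{1-\omega}-t_1^{1-\omega}}{1-\omega}$ and then uses concavity of $t\mapsto t^{1-\omega}$ (the tangent-line inequality at $t_2$) to convert this into $(t_2-t_1)t_2^{-\omega}$, whereas you simply note that each of the $t_2-t_1+1$ terms is at least $t_2^{-\omega}$ since $i\mapsto i^{-\omega}$ is decreasing. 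Your termwise bound is more elementary and self-contained, reaching the stated inequality in one line; the paper's route passes through the sharper intermediate bound $\exp\bigl(-\frac{t_2^{1-\omega}-t_1^{1-\omega}}{1-\omega}\bigr)$, which could be retained if a tighter estimate were ever needed, but is then deliberately relaxed to the same final form, so nothing is lost by your shortcut for the lemma as stated. Your side remark about the degenerate factor at $i=1$ (where $1-i^{-\omega}=0$, so the product vanishes and the inequality holds trivially) is a point the paper glosses over, and handling it explicitly is fine.
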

\begin{proof}
Since $\ln(1-x)\leq -x$ for any $x\in (0,1)$, we have
\begin{equation*}
    \ln\left[ \prod_{i=t_1}^{t_2} \left( 1-\frac{1}{i^\omega} \right) \right] \leq -\sum_{i=t_1}^{t_2}i^{-\omega}\leq -\int_{t_1}^{t_2} t^{-\omega}dt = -\frac{t_2^{1-\omega} - t_1^{1-\omega}}{1-\omega}.
\end{equation*}
Thus, fix $\omega\in(0,1)$, let $0 < t_1 < t_2$, and then we have
\begin{equation*}
    \prod_{i=t_1}^{t_2} \left( 1-\frac{1}{i^\omega} \right) \leq \exp\left( -\frac{t_2^{1-\omega} - t_1^{1-\omega}}{1-\omega} \right).
\end{equation*}
Define $f(t) := t^{1-\omega}$. Observe that $f(t)$ is an increasing concave function. Then we have
\begin{align*}
    t_2^{1-\omega} - t_1^{1-\omega} &\geq f'(t_2)(t_2-t_1) = (1-\omega)t_2^{-\omega} (t_2 - t_1),
\end{align*}
which immediately indicates the result.
\end{proof}

We now derive a bound for $X_{t;\hat{\tau}_q}$.
\begin{lemma}\label{lem:Xt}
Fix $\kappa\in (0,1)$ and $\Delta\in(0, e-2)$. Let $\{G_q\}$ be defined in~\Cref{lem:Gq}. Consider synchronous double Q-learning using a polynomial learning rate $\alpha_t = \frac{1}{t^\omega}$ with $\omega\in(0,1)$. Suppose that $X_{t;\hat{\tau}_q}(s,a) \leq G_q$ for any $t \geq \hat{\tau}_q$. Then for any $t\in[\hat{\tau}_{q+1}, \hat{\tau}_{q+2})$, given $\hat{\tau}_{q+1} = \hat{\tau}_q + \frac{2c}{\kappa}\hat{\tau}_q^\omega$ with $\hat{\tau}_1\geq  \left(\frac{1}{1-\ln(2+\Delta)}\right)^{\frac{1}{\omega}}$ and $c\geq \frac{\ln(2+\Delta)+1/\hat{\tau}_1^\omega}{1-\ln(2+\Delta)-1/\hat{\tau}_1^\omega}$, we have
\begin{equation*}
    X_{t;\hat{\tau}_q}(s,a) \leq \left(\gamma' + \frac{2}{2+\Delta}\xi\right)G_q.
\end{equation*}
\end{lemma}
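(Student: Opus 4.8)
\textbf{Proof plan for Lemma \ref{lem:Xt}.}
The plan is to solve the linear recursion for $X_{t;\hat{\tau}_q}$ explicitly and then bound the resulting sum using the product estimate in \Cref{lem:prodHelp}. First I would write out the closed form of the recursion $X_{t+1;\hat{\tau}_q} = (1-\alpha_t)X_{t;\hat{\tau}_q} + \alpha_t\gamma'G_q$ started from $X_{\hat{\tau}_q;\hat{\tau}_q} = G_q$: unrolling gives
\begin{equation*}
    X_{t;\hat{\tau}_q}(s,a) = G_q\prod_{i=\hat{\tau}_q}^{t-1}(1-\alpha_i) + \gamma'G_q\sum_{j=\hat{\tau}_q}^{t-1}\alpha_j\prod_{i=j+1}^{t-1}(1-\alpha_i).
\end{equation*}
Since $\sum_{j}\alpha_j\prod_{i=j+1}^{t-1}(1-\alpha_i) = 1 - \prod_{i=\hat{\tau}_q}^{t-1}(1-\alpha_i) \le 1$, the second term is at most $\gamma'G_q$. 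It therefore remains to show the first term, the ``homogeneous'' part $G_q\prod_{i=\hat{\tau}_q}^{t-1}(1-\alpha_i)$, is bounded by $\frac{2}{2+\Delta}\xi\,G_q$ for all $t\ge\hat{\tau}_{q+1}$.

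For that homogeneous part I would invoke \Cref{lem:prodHelp} with $t_1 = \hat{\tau}_q$, $t_2 = t-1$ (monotonicity in $t$ means it suffices to take $t=\hat{\tau}_{q+1}$, the smallest value in the block), obtaining
\begin{equation*}
    \prod_{i=\hat{\tau}_q}^{\hat{\tau}_{q+1}-1}\left(1-\frac{1}{i^\omega}\right) \le \exp\left(-\frac{\hat{\tau}_{q+1}-\hat{\tau}_q}{\hat{\tau}_{q+1}^\omega}\right).
\end{equation*}
Now plug in the block length $\hat{\tau}_{q+1}-\hat{\tau}_q = \frac{2c}{\kappa}\hat{\tau}_q^\omega$ and use $\hat{\tau}_{q+1} = \hat{\tau}_q(1+\frac{2c}{\kappa}\hat{\tau}_q^{\omega-1})$ to control $\hat{\tau}_{q+1}^\omega$ in terms of $\hat{\tau}_q^\omega$; since $\omega<1$ and $\hat{\tau}_q\ge\hat{\tau}_1$, the ratio $\hat{\tau}_{q+1}^\omega/\hat{\tau}_q^\omega$ is bounded by a constant close to $1$, controlled by $\hat{\tau}_1$. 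This reduces the exponent to something like $-\frac{2c}{\kappa}\cdot\frac{\kappa}{\text{(const)}} \ge -\text{(something)} $, and the conditions $\hat{\tau}_1\ge(\frac{1}{1-\ln(2+\Delta)})^{1/\omega}$ and $c\ge\frac{\ln(2+\Delta)+1/\hat{\tau}_1^\omega}{1-\ln(2+\Delta)-1/\hat{\tau}_1^\omega}$ are exactly what is needed to make this exponential at most $\frac{1}{2+\Delta}$; then using $\xi = \frac{1-\gamma}{4}$ and a crude bound like $\frac{1}{2+\Delta}\le\frac{2}{2+\Delta}\xi$... actually one should check the precise constant here, so I would be careful to match $\exp(-\cdots)\le\frac{2}{2+\Delta}\xi$ rather than $\frac{1}{2+\Delta}$, adjusting the threshold on $c$ and $\hat{\tau}_1$ accordingly.

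The main obstacle I anticipate is the bookkeeping in the exponent: getting $\exp(-\frac{\hat{\tau}_{q+1}-\hat{\tau}_q}{\hat{\tau}_{q+1}^\omega})$ to be uniformly (over all $q\ge1$) below the target threshold requires carefully tracking how $\hat{\tau}_{q+1}^\omega$ compares to $\hat{\tau}_q^\omega$ as the blocks grow, and verifying that the worst case is the first block $q=1$ (where the relative block length $\frac{2c}{\kappa}\hat{\tau}_q^{\omega-1}$ is largest, since $\hat{\tau}_q$ is smallest). The inequality $x^{1-\omega} - y^{1-\omega} \ge (1-\omega)x^{-\omega}(x-y)$ type estimates from \Cref{lem:prodHelp} must be chained just so, and the algebraic conditions in the hypothesis are tuned precisely so that $\frac{2c}{\kappa}\cdot\frac{\hat{\tau}_q^\omega}{\hat{\tau}_{q+1}^\omega}\ge \ln\frac{2+\Delta}{2\xi\cdot\frac{2}{2+\Delta}}$ or the equivalent; verifying this identity is the crux, and everything else (the split into homogeneous plus inhomogeneous parts, the bound $\le\gamma'G_q$ on the latter) is routine.
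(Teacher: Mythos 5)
Your unrolled formula is correct, and in fact it already contains the paper's argument: since $\sum_{j=\hat{\tau}_q}^{t-1}\alpha_j\prod_{i=j+1}^{t-1}(1-\alpha_i)=1-\Pi$ with $\Pi:=\prod_{i=\hat{\tau}_q}^{t-1}(1-\alpha_i)$, the exact identity is
\begin{equation*}
X_{t;\hat{\tau}_q}(s,a)=G_q\Pi+\gamma'G_q(1-\Pi)=\gamma'G_q+(1-\gamma')G_q\,\Pi ,
\end{equation*}
which is precisely the paper's decomposition $X_t=\gamma'G_q+\rho_t$ with $\rho_t=(1-\gamma')G_q\Pi$. The gap is in your next step: by bounding the inhomogeneous sum crudely by $\gamma'G_q$ (i.e.\ replacing $1-\Pi$ by $1$), you are forced to show $G_q\Pi\leq\frac{2}{2+\Delta}\xi G_q$, i.e.\ $\Pi\leq\frac{2\xi}{2+\Delta}$. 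But the stated hypotheses on $c$ and $\hat{\tau}_1$ do not involve $\gamma$ at all, and what they deliver (via \Cref{lem:prodHelp}, $(\hat{\tau}_q/\hat{\tau}_{q+1})^\omega\geq\frac{1}{1+2c/\kappa}$, $\frac{2c}{\kappa}\geq c$, and then $\frac{c}{1+c}-\frac{1}{\hat{\tau}_1^\omega}\geq\ln(2+\Delta)$) is only $\Pi\leq\frac{1}{2+\Delta}$. Since $2\xi=\frac{1-\gamma}{2}<1$ (and can be arbitrarily small as $\gamma\to1$), your target $\frac{2\xi}{2+\Delta}$ is strictly smaller than what the hypotheses give; the "crude bound $\frac{1}{2+\Delta}\leq\frac{2}{2+\Delta}\xi$" you float is false, and "adjusting the threshold on $c$ and $\hat{\tau}_1$" would mean proving a different lemma (with $\gamma$-dependent conditions), not the stated one.

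The fix is simply not to discard the factor $1-\Pi$: keep the exact identity above, so the term you must make small is $(1-\gamma')G_q\Pi=2\xi G_q\Pi$, and then $\Pi\leq\frac{1}{2+\Delta}$ suffices — exactly what the given conditions on $c$ and $\hat{\tau}_1$ are tuned to produce. Two minor bookkeeping points once you do this: the product only runs up to $\hat{\tau}_{q+1}-1$, so the exponent is $-\frac{\frac{2c}{\kappa}\hat{\tau}_q^\omega-1}{\hat{\tau}_{q+1}^\omega}$ (the $-1$ is absorbed by the $1/\hat{\tau}_1^\omega$ term in the condition on $c$), and there is no need for a worst-case-over-$q$ analysis: the uniform bound $(\hat{\tau}_q/\hat{\tau}_{q+1})^\omega\geq\hat{\tau}_q/\hat{\tau}_{q+1}\geq\frac{1}{1+2c/\kappa}$ together with $\hat{\tau}_q\geq\hat{\tau}_1$ handles every block at once.
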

\begin{proof}

Observe that $X_{\hat{\tau}_q;\hat{\tau}_q}(s,a) = G_q = \gamma' G_q + (1-\gamma')G_q := \gamma' G_q + \rho_{\hat{\tau}_q} $. We can rewrite the dynamics of $X_{t;\hat{\tau}_q}(s,a)$ as
\begin{equation*}
    X_{t+1;\hat{\tau}_q}(s,a) = (1-\alpha_t)X_{t;\hat{\tau}_q}(s,a) + \alpha_t \gamma'G_q = \gamma'G_q + (1-\alpha_t)\rho_t,
\end{equation*}
where $\rho_{t+1} = (1-\alpha_t)\rho_t$. By the definition of $\rho_t$, we obtain
\begin{align*}
    \rho_t &= (1-\alpha_{t-1})\rho_{t-1} = \dots = (1-\gamma')G_q\prod_{i=\hat{\tau}_q}^{t-1}(1-\alpha_i)\\
    &= (1-\gamma')G_q\prod_{i=\hat{\tau}_q}^{t-1}\left(1-\frac{1}{i^\omega}\right) 
    \overset{\text{(i)}}{\leq} (1-\gamma')G_q\prod_{i=\hat{\tau}_q}^{\hat{\tau}_{q+1}-1}\left(1-\frac{1}{i^\omega}\right)\\
    &\overset{\text{(ii)}}{\leq} (1-\gamma')G_q\exp\left( -\frac{\hat{\tau}_{q+1}-1-\hat{\tau}_q}{(\hat{\tau}_{q+1}-1)^\omega} \right)
    \leq (1-\gamma')G_q\exp\left( -\frac{\hat{\tau}_{q+1}-1-\hat{\tau}_q}{\hat{\tau}_{q+1}^\omega} \right)\\
    &= (1-\gamma')G_q\exp\left( -\frac{\frac{2c}{\kappa}\hat{\tau}_q^\omega-1 }{\hat{\tau}_{q+1}^\omega} \right)
    = (1-\gamma')G_q\exp\left( -\frac{2c}{\kappa}\left(\frac{\hat{\tau}_q}{\hat{\tau}_{q+1}}\right)^\omega + \frac{1}{\hat{\tau}_{q+1}^\omega} \right)\\
    &\overset{\text{(iii)}}{\leq} (1-\gamma')G_q\exp\left( -\frac{2c}{\kappa}\frac{1}{1+\frac{2c}{\kappa}} + \frac{1}{\hat{\tau}_{1}^\omega} \right)
    \overset{\text{(iv)}}{\leq} (1-\gamma')G_q\exp\left( -\frac{c}{1+c} + \frac{1}{\hat{\tau}_{1}^\omega} \right),
\end{align*}
where (i) follows because $\alpha_i$ is decreasing and $t\geq \hat{\tau}_{q+1}$, (ii) follows from Lemma \ref{lem:prodHelp}, (iii) follows because $\hat{\tau}_q\geq\hat{\tau}_1$ and 
\begin{equation*}
    \left(\frac{\hat{\tau}_q}{\hat{\tau}_{q+1}}\right)^\omega \geq \frac{\hat{\tau}_q}{\hat{\tau}_{q+1}} = \frac{\hat{\tau}_q}{\hat{\tau}_{q} + \frac{2c}{\kappa}\hat{\tau}_q^\omega}\geq \frac{1}{1+\frac{2c}{\kappa}},
\end{equation*}
and (iv) follows because $\frac{2c}{\kappa}\geq c$. Next, observing the conditions that $\hat{\tau}_1^\omega\geq\frac{1}{1-\ln(2+\Delta)}$ and $c\geq \frac{1}{1-\ln(2+\Delta)-1/\hat{\tau}_1^\omega}-1$, we have
\begin{equation*}
    \frac{c}{1+c} - \frac{1}{\hat{\tau}_{1}^\omega} \geq \ln(2+\Delta).
\end{equation*}

Thus we have $\rho_t\leq \frac{1-\gamma'}{2+\Delta}G_q$.
Finally, We finish our proof by further observing that $1-\gamma' = 2\xi$.
\end{proof}

Since we have bounded $X_{t;\hat{\tau}_q}(s,a)$ by $\left(\gamma' + \frac{2}{2+\Delta}\xi\right)G_q$ for all $t\geq\hat{\tau}_{q+1}$, it remains to bound $Z_{t;\hat{\tau}_q}(s,a)$ by $\left(1-\frac{2}{2+\Delta}\right)\xi G_q$ for block $q+1$, which will further yield $\norm{u_t^{BA}(s,a)}\leq(\gamma'+\xi)G_q = (1-\xi)G_q = G_{q+1}$ for any $t\in[\hat{\tau}_{q+1},\hat{\tau}_{q+2})$ as desired. Differently from $X_{t;\hat{\tau}_q}(s,a)$ which is a deterministic monotonic sequence, $Z_{t;\hat{\tau}_q}(s,a)$ is stochastic. We need to capture the probability for a bound on $Z_{t;\hat{\tau}_q}(s,a)$ to hold for block $q+1$.
To this end, we introduce a different sequence $\{Z_{t;\hat{\tau}_q}^l (s,a)\}$ given by
\begin{equation}\label{eq:Zl}
    Z^l_{t;\hat{\tau}_q}(s,a) = \sum_{i=\hat{\tau}_q}^{\hat{\tau}_q+l}\alpha_i\prod_{j=i+1}^{t-1}(1-\alpha_j) z_i(s,a) := \sum_{i=\hat{\tau}_q}^{\hat{\tau}_q+l} \phi_i^{q,t-1} z_i(s,a),
\end{equation}
where $\phi_i^{q,t-1} = \alpha_i\prod_{j=i+1}^{t-1}(1-\alpha_j)$. By the definition of $Z_{t;\hat{\tau}_q}(s,a)$, one can check that $Z_{t;\hat{\tau}_q}(s,a) = Z^{t-1-\hat{\tau}_q}_{t;\hat{\tau}_q}(s,a) $. Thus we have
\begin{equation}\label{eq:ZZl}
    Z_{t;\hat{\tau}_q}(s,a) = Z_{t;\hat{\tau}_q}(s,a) - Z_{\hat{\tau}_q;\hat{\tau}_q}(s,a) = \sum_{l=1}^{t-1-\hat{\tau}_q} (Z^l_{t;\hat{\tau}_q}(s,a) - Z^{l-1}_{t;\hat{\tau}_q}(s,a)) + Z^{0}_{t;\hat{\tau}_q}(s,a).
\end{equation}

In the following lemma, we capture an important property of $Z^{l}_{t;\hat{\tau}_q}(s,a)$ defined in \eqref{eq:Zl}.
\begin{lemma}\label{lem:ZlDiff}
For any $t\in[\hat{\tau}_{q+1},\hat{\tau}_{q+2})$ and $1\leq l\leq t-1-\hat{\tau}_q$, $Z^{l}_{t;\hat{\tau}_q}(s,a)$ is a martingale sequence and satisfies 
\begin{equation}\label{eq:ZlvsZ}
    \lvert Z^l_{t;\hat{\tau}_q}(s,a) - Z^{l-1}_{t;\hat{\tau}_q}(s,a) \rvert \leq \frac{2V_{\max}}{\hat{\tau}_q^\omega}.
\end{equation}
\end{lemma}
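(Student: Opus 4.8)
\textbf{Proof proposal for Lemma \ref{lem:ZlDiff}.}

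The plan is to verify both claims directly from the explicit formula \eqref{eq:Zl} for $Z^l_{t;\hat\tau_q}(s,a)$. First I would address the martingale property. Recall $z_i(s,a) = F_i(s,a) - \mE[F_i|\mcF_i]$ is a martingale difference sequence with $\mE[z_i(s,a)|\mcF_i]=0$, as established right after \Cref{lem:uBAdyn}. For fixed $t$, I would view $Z^l_{t;\hat\tau_q}(s,a)$ as a process indexed by $l$ (the number of accumulated noise terms), with filtration $\mcF_{\hat\tau_q+l+1}$. Since $\phi_i^{q,t-1}=\alpha_i\prod_{j=i+1}^{t-1}(1-\alpha_j)$ depends only on the deterministic step sizes, the increment from $l-1$ to $l$ is exactly $\phi_{\hat\tau_q+l}^{q,t-1} z_{\hat\tau_q+l}(s,a)$, whose conditional expectation given $\mcF_{\hat\tau_q+l}$ is zero. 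This gives the martingale claim.

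Next I would bound the increment \eqref{eq:ZlvsZ}. The difference telescopes to a single term:
\begin{equation*}
    Z^l_{t;\hat\tau_q}(s,a) - Z^{l-1}_{t;\hat\tau_q}(s,a) = \phi_{\hat\tau_q+l}^{q,t-1}\, z_{\hat\tau_q+l}(s,a) = \alpha_{\hat\tau_q+l}\prod_{j=\hat\tau_q+l+1}^{t-1}(1-\alpha_j)\, z_{\hat\tau_q+l}(s,a).
\end{equation*}
To control the factor $\alpha_{\hat\tau_q+l}\prod_{j=\hat\tau_q+l+1}^{t-1}(1-\alpha_j)$, I would argue that each $1-\alpha_j\in(0,1)$ for $j\geq 1$, so the product is at most $1$, and then use that $\alpha_i=1/i^\omega$ is decreasing in $i$, so for $l\geq 1$ we have $\alpha_{\hat\tau_q+l}\leq \alpha_{\hat\tau_q}=1/\hat\tau_q^\omega$; actually since $l\geq 1$ one even has $\alpha_{\hat\tau_q+l}\le 1/\hat\tau_q^\omega$ trivially, so $\phi_{\hat\tau_q+l}^{q,t-1}\le 1/\hat\tau_q^\omega$. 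For the noise magnitude, $|z_i(s,a)| = |F_i(s,a)-\mE[F_i|\mcF_i]|\leq 2\norm{F_i}$ worst case, but more carefully $|z_i(s,a)|\leq \norm{F_i}+\frac{1+\gamma}{2}\norm{u_i^{BA}}$; I would instead bound $|z_i(s,a)|\le 2 V_{\max}$ by noting that each realization of $F_i(s,a)$ is of the form $Q^B_i(s,a)-R_i-\gamma Q^B_i(s_{i+1},a^*)$ or its counterpart, hence $|F_i(s,a)|\le \frac{R_{\max}}{1-\gamma}+R_{\max}+\gamma\frac{R_{\max}}{1-\gamma}$; this is not obviously $\le V_{\max}$, so I would more likely use the cruder $|z_i|\le 2\|F_i\|_\infty \le 2V_{\max}$ via $\|F_i\|_\infty\le V_{\max}$ using Lemma \ref{lem:uniformBound} — or better, observe directly $\norm{F_t}\le \norm{Q^B_t}+R_{\max}+\gamma\norm{Q^B_t}\le V_{\max}$ after absorbing constants, and then $|z_i(s,a)|\le 2\norm{F_i}\le 2V_{\max}$. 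Combining, $|Z^l_{t;\hat\tau_q}(s,a)-Z^{l-1}_{t;\hat\tau_q}(s,a)|\le \frac{1}{\hat\tau_q^\omega}\cdot 2V_{\max}$, which is \eqref{eq:ZlvsZ}.

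The only mildly delicate point — the main obstacle — is pinning down the constant in the bound $|z_i(s,a)|\le 2V_{\max}$: one must check that $\norm{F_t}\le V_{\max}=\frac{2R_{\max}}{1-\gamma}$, which follows since $\norm{F_t}\le (1+\gamma)\norm{Q_t}+R_{\max}\le \frac{(1+\gamma)R_{\max}}{1-\gamma}+R_{\max}=\frac{2R_{\max}}{1-\gamma}=V_{\max}$ using the uniform bound $\norm{Q^i_t}\le \frac{R_{\max}}{1-\gamma}$ from \Cref{lem:uniformBound}; then $|z_i| = |F_i - \mE[F_i|\mcF_i]|\le \norm{F_i}+\norm{\mE[F_i|\mcF_i]}\le 2V_{\max}$ since $\norm{\mE[F_i|\mcF_i]}\le \frac{1+\gamma}{2}\norm{u_i^{BA}}\le \norm{u_i^{BA}}\le 2V_{\max}$ is worse, so instead bound $\norm{\mE[F_i|\mcF_i]}\le \norm{F_i}\le V_{\max}$ by Jensen, giving $|z_i|\le 2V_{\max}$. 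Everything else is bookkeeping with the deterministic product of step sizes, for which the constraint that all updated indices satisfy $j\ge 1$ (so $\alpha_j\in(0,1]$) is what keeps the product in $[0,1]$. I do not expect the range restriction $t\in[\hat\tau_{q+1},\hat\tau_{q+2})$ and $1\le l\le t-1-\hat\tau_q$ to play any role beyond ensuring the sums and products are over nonempty valid ranges of indices all exceeding $\hat\tau_q$.
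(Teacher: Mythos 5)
Your proposal is correct and follows essentially the same route as the paper: the increment is the single term $\phi_{\hat\tau_q+l}^{q,t-1} z_{\hat\tau_q+l}(s,a)$, the martingale property follows from $\mE[z_t|\mcF_t]=0$, and the bound comes from $\phi_{\hat\tau_q+l}^{q,t-1}\le\alpha_{\hat\tau_q}=1/\hat\tau_q^\omega$ together with $|z_t|\le 2V_{\max}$. Your explicit verification that $\norm{F_t}\le(1+\gamma)\frac{R_{\max}}{1-\gamma}+R_{\max}=V_{\max}$ (via Lemma \ref{lem:uniformBound}) is in fact more detailed than the paper, which simply asserts $|F_t|\le V_{\max}$; the initial hesitation in your write-up resolves correctly to this same computation.
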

\begin{proof}
To show the martingale property, we observe that
\begin{align*}
    \mE[Z^l_{t;\hat{\tau}_q}(s,a) - Z^{l-1}_{t;\hat{\tau}_q}(s,a)|\mcF_{\hat{\tau}_q + l -1}] &= \mE[ \phi_{\hat{\tau}_q + l}^{q,t-1} z_{\hat{\tau}_q + l}(s,a)|\mcF_{\hat{\tau}_q + l -1} ]\\
    &= \phi_{\hat{\tau}_q + l}^{q,t-1} \mE[  z_{\hat{\tau}_q + l}(s,a)|\mcF_{\hat{\tau}_q + l -1} ] = 0,
\end{align*}
where the last equation follows from the definition of $z_t(s,a)$.

In addition, based on the definition of $\phi_i^{q,t-1}$ in \eqref{eq:Zl} which requires $i\geq\hat{\tau}_q$, we have
\begin{equation*}
    \phi_i^{q,t-1}=\alpha_i\prod_{j=i+1}^{t-1}(1-\alpha_j)\leq \alpha_i\leq \frac{1}{\hat{\tau}_q^\omega}.
\end{equation*}
Further, since $|F_t|\leq\frac{2R_{\max}}{1-\gamma}=V_{\max}$, we obtain $|z_t(s,a)|=|F_t - \mE[F_t|\mcF_t]|\leq 2V_{\max}$. Thus
\begin{equation*}
    \lvert Z^l_{t;\hat{\tau}_q}(s,a) - Z^{l-1}_{t;\hat{\tau}_q}(s,a) \rvert = \phi_{\hat{\tau}_q + l}^{q,t-1}  |z_{\hat{\tau}_q + l}(s,a)|\leq \frac{2 V_{\max}}{\hat{\tau}_q^\omega}.
\end{equation*}

\end{proof}

Lemma \ref{lem:ZlDiff} guarantees that $Z^{l}_{t;\hat{\tau}_q}(s,a)$ is a martingale sequence, which allows us to apply the following Azuma's inequality.

\begin{lemma}\label{lem:azuma}
\citep{azuma1967weighted} Let $X_0,X_1,\dots,X_n$ be a martingale sequence such that for each $1\leq k\leq n$, 
\begin{equation*}
    |X_k-X_{k-1}| \leq c_k,
\end{equation*}
where the $c_k$ is a constant that may depend on $k$. Then for all $n\geq 1$ and any $\epsilon>0$,
\begin{equation*}
    \mP[|X_n-X_0|>\epsilon] \leq 2\exp\left( -\frac{\epsilon^2}{2\sum_{k=1}^n c_k^2} \right).
\end{equation*}
\end{lemma}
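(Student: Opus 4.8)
The plan is to prove this by the exponential-moment (Chernoff) method, with Hoeffding's lemma controlling the conditional moment generating function of each martingale increment. Write $D_k := X_k - X_{k-1}$ for $1 \le k \le n$, so that $X_n - X_0 = \sum_{k=1}^n D_k$, and let $\mcF_{k} := \sigma(X_0,\dots,X_{k})$; the martingale property gives $\mE[D_k \mid \mcF_{k-1}] = 0$, and by hypothesis $|D_k| \le c_k$. I would first treat the one-sided tail $\mP[X_n - X_0 > \epsilon]$. For any $\lambda > 0$, Markov's inequality applied to $e^{\lambda(X_n - X_0)}$ gives $\mP[X_n - X_0 > \epsilon] \le e^{-\lambda\epsilon}\,\mE[e^{\lambda(X_n - X_0)}]$, so everything reduces to bounding the moment generating function $\mE[e^{\lambda \sum_{k=1}^n D_k}]$.

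The core step is to peel off the increments one at a time. Since $e^{\lambda \sum_{k=1}^{n-1} D_k}$ is $\mcF_{n-1}$-measurable, the tower property gives
\[
\mE\big[e^{\lambda \sum_{k=1}^n D_k}\big] = \mE\Big[ e^{\lambda \sum_{k=1}^{n-1} D_k}\;\mE\big[e^{\lambda D_n} \mid \mcF_{n-1}\big] \Big].
\]
The inner conditional expectation is bounded using Hoeffding's lemma: a random variable $Y$ with $\mE[Y]=0$ and $Y \in [-c,c]$ a.s.\ satisfies $\mE[e^{\lambda Y}] \le e^{\lambda^2 c^2/2}$; applied conditionally on $\mcF_{n-1}$ to $Y = D_n$ and $c = c_n$, this yields the deterministic bound $\mE[e^{\lambda D_n}\mid \mcF_{n-1}] \le e^{\lambda^2 c_n^2/2}$. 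Substituting and iterating down to $k=1$ gives $\mE[e^{\lambda(X_n - X_0)}] \le \exp\big(\tfrac{\lambda^2}{2}\sum_{k=1}^n c_k^2\big)$, hence $\mP[X_n - X_0 > \epsilon] \le \exp\big(-\lambda\epsilon + \tfrac{\lambda^2}{2}\sum_{k=1}^n c_k^2\big)$. Minimizing the exponent over $\lambda$ (the optimum is $\lambda = \epsilon / \sum_{k=1}^n c_k^2$) produces $\mP[X_n - X_0 > \epsilon] \le \exp\big(-\epsilon^2/(2\sum_{k=1}^n c_k^2)\big)$.

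To finish, I would note that $-X_0, \dots, -X_n$ is again a martingale with the same increment bounds, so the identical argument gives $\mP[X_n - X_0 < -\epsilon] \le \exp\big(-\epsilon^2/(2\sum_{k=1}^n c_k^2)\big)$; a union bound over the upper and lower tails yields the claimed bound with the factor $2$. The only step that is not a routine conditioning/optimization calculation is Hoeffding's lemma, which I regard as the main obstacle; I would prove it by convexity: for $t \in [-c,c]$, convexity of $t \mapsto e^{\lambda t}$ gives $e^{\lambda t} \le \tfrac{c-t}{2c}e^{-\lambda c} + \tfrac{c+t}{2c}e^{\lambda c}$, and taking expectations while using $\mE[Y]=0$ leaves $\mE[e^{\lambda Y}] \le \tfrac12(e^{-\lambda c}+e^{\lambda c}) = \cosh(\lambda c) \le e^{\lambda^2 c^2/2}$, where the last inequality follows by the term-by-term comparison $\tfrac{x^{2k}}{(2k)!} \le \tfrac{x^{2k}}{2^k k!}$ of the Taylor series.
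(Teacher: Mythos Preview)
Your proof is correct and is the standard Chernoff/Hoeffding argument for Azuma's inequality. The paper itself does not prove this lemma---it is stated as a classical result with a citation to \citet{azuma1967weighted} and used as a black box---so there is no ``paper's own proof'' to compare against; your argument would serve perfectly well as a self-contained justification.
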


By Azuma's inequality and the relationship between $Z_{t;\hat{\tau}_q}(s,a)$ and $Z^l_{t;\hat{\tau}_q}(s,a)$ in~\eqref{eq:Zl}, we obtain
\begin{align*}
    &\mP\left[ \lvert Z_{t;\hat{\tau}_q}(s,a)  \rvert > \hat{\epsilon}| t\in[\hat{\tau}_{q+1},\hat{\tau}_{q+2})  \right]\\
    &\quad\leq 2\exp\left( -\frac{\hat{\epsilon}^2}{2\sum_{l=1}^{t-\hat{\tau}_q-1} \left( Z^l_{t;\hat{\tau}_q}(s,a) - Z^{l-1}_{t;\hat{\tau}_q}(s,a) \right)^2 + 2(Z^{0}_{t;\hat{\tau}_q}(s,a))^2} \right)\\
    &\quad\overset{\text{(i)}}{\leq} 2\exp\left( -\frac{\hat{\epsilon}^2\hat{\tau}_q^{2\omega}}{8(t-\hat{\tau}_q)V_{\max}^2} \right) \leq 2\exp\left( -\frac{\hat{\epsilon}^2\hat{\tau}_q^{2\omega}}{8(\hat{\tau}_{q+2}-\hat{\tau}_q)V_{\max}^2} \right)\\
    &\quad\overset{\text{(ii)}}{\leq} 2\exp\left( -\frac{\kappa^2\hat{\epsilon}^2\hat{\tau}_q^{\omega}}{32c(c+\kappa)V_{\max}^2} \right) = 2\exp\left( -\frac{\kappa^2\hat{\epsilon}^2\hat{\tau}_q^{\omega}}{32c(c+\kappa)V_{\max}^2} \right),
\end{align*}
where (i) follows from Lemma \ref{lem:ZlDiff}, and (ii) follows because
\begin{equation*}
    \hat{\tau}_{q+2}-\hat{\tau}_q = \frac{2c}{\kappa}\hat{\tau}_{q+1}^\omega+\frac{2c}{\kappa}\hat{\tau}_{q}^\omega = \frac{2c}{\kappa}\left(\hat{\tau}_{q}+\frac{2c}{\kappa}\hat{\tau}_{q}^\omega\right)^\omega+\frac{2c}{\kappa}\hat{\tau}_{q}^\omega\leq \frac{2c}{\kappa}\left(2+\frac{2c}{\kappa}\right)\hat{\tau}_{q}^\omega=\frac{4c(c+\kappa)}{\kappa^2}\hat{\tau}_q^\omega.
\end{equation*}

\subsubsection{Step 4:  Unionizing all blocks and state-action pairs} \label{subsec:proofProp1}
Now we are ready to prove~\Cref{lem:Gq} by taking a union of probabilities over all blocks and state-action pairs. Before that, we introduce the following two preliminary lemmas, which will be used for multiple times in the sequel.

\begin{lemma}\label{lem:unionBound}
Let $\{X_i\}_{i\in\mathcal{I}}$ be a set of random variables. Fix $\epsilon>0$. If for any $i\in\mathcal{I}$, we have $\mP(X_i\leq\epsilon) \geq 1-\delta$, then
\begin{equation*}
    \mP(\forall i\in\mathcal{I}, X_i\leq \epsilon) \geq 1- |\mathcal{I}|\delta.
\end{equation*}
\end{lemma}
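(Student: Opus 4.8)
The statement to prove is Lemma~\ref{lem:unionBound}, a completely standard union bound: if each of the random variables $X_i$ in a (finite) index set $\mathcal{I}$ satisfies $\mP(X_i \le \epsilon) \ge 1 - \delta$, then $\mP(\forall i \in \mathcal{I}, X_i \le \epsilon) \ge 1 - |\mathcal{I}|\delta$. The plan is to work with the complementary "bad" events and apply countable subadditivity of the probability measure.

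First I would define, for each $i \in \mathcal{I}$, the bad event $B_i := \{X_i > \epsilon\}$, so that the hypothesis reads $\mP(B_i) \le \delta$ for every $i$. The event whose probability we want to lower bound is $\{\forall i \in \mathcal{I},\, X_i \le \epsilon\} = \bigcap_{i \in \mathcal{I}} B_i^c = \left(\bigcup_{i \in \mathcal{I}} B_i\right)^c$. Taking complements, $\mP\left(\forall i \in \mathcal{I},\, X_i \le \epsilon\right) = 1 - \mP\left(\bigcup_{i \in \mathcal{I}} B_i\right)$.

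Then I would invoke the union bound (Boole's inequality / subadditivity): $\mP\left(\bigcup_{i \in \mathcal{I}} B_i\right) \le \sum_{i \in \mathcal{I}} \mP(B_i) \le \sum_{i \in \mathcal{I}} \delta = |\mathcal{I}|\delta$, where the last step uses finiteness of $\mathcal{I}$ so the sum is well defined. Substituting back gives $\mP\left(\forall i \in \mathcal{I},\, X_i \le \epsilon\right) \ge 1 - |\mathcal{I}|\delta$, which is exactly the claim.

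There is essentially no obstacle here — the only thing worth a sentence is to note that $\mathcal{I}$ should be finite (or at most countable) for subadditivity to be applied as stated, and that the bound is vacuous when $|\mathcal{I}|\delta \ge 1$, in which case it holds trivially. This lemma is stated precisely so it can be applied repeatedly later (e.g. in Step~4 of Part~I to unionize over all blocks $q$ and all state-action pairs $(s,a)$, giving the $|\mcs||\mca|$ and $(n+1)$ factors in Proposition~\ref{lem:Gq}), so no sharper version is needed.
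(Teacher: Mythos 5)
Your proof is correct and follows exactly the same route as the paper: pass to the complementary events $\{X_i > \epsilon\}$, apply Boole's inequality to their union, and take complements to obtain the stated lower bound. The extra remarks about finiteness of $\mathcal{I}$ and vacuity when $|\mathcal{I}|\delta \geq 1$ are fine but not needed.
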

\begin{proof}
By union bound, we have
\begin{align*}
    \mP(\forall i\in\mathcal{I}, X_i\leq \epsilon)  = 1-\mP\left(\bigcup_{i\in\mathcal{I}} X_i>\epsilon \right) \geq 1- \sum_{i\in\mathcal{I}}\mP(X_i > \epsilon) \geq 1-|\mathcal{I}|\delta.
\end{align*}

\end{proof}

\begin{lemma}\label{lem:tauHelp}
Fix positive constants $a,b$ satisfying $2ab\ln ab > 1$. If $\tau\geq 2ab\ln ab$, then
\begin{equation*}
    \tau^b\exp\left( -\frac{2\tau}{a} \right) \leq \exp\left( -\frac{\tau}{a} \right).
\end{equation*}
\end{lemma}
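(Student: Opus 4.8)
The plan is to prove \Cref{lem:tauHelp}: under the hypothesis $2ab\ln ab > 1$, for every $\tau \geq 2ab\ln ab$ we have $\tau^b \exp(-2\tau/a) \leq \exp(-\tau/a)$. Taking logarithms, this is equivalent to $b\ln\tau - 2\tau/a \leq -\tau/a$, i.e.\ to $b\ln\tau \leq \tau/a$, i.e.\ to $ab\ln\tau \leq \tau$. So the entire statement reduces to showing $g(\tau) := \tau - ab\ln\tau \geq 0$ for all $\tau \geq \tau_0 := 2ab\ln ab$.

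First I would handle the function $g(\tau) = \tau - ab\ln\tau$ on $[\tau_0, \infty)$. Differentiating, $g'(\tau) = 1 - ab/\tau$, which is nonnegative precisely when $\tau \geq ab$. So I need to know that $\tau_0 = 2ab\ln ab \geq ab$, equivalently $2\ln ab \geq 1$, i.e.\ $ab \geq e^{1/2}$. This should follow from the hypothesis $2ab\ln ab > 1$: if $ab \leq 1$ then $\ln ab \leq 0$ and $2ab\ln ab \leq 0 < 1$, a contradiction; hence $ab > 1$, but I actually need the slightly stronger $ab \geq \sqrt e$. Let me reconsider — if $1 < ab < \sqrt e$ then $2ab\ln ab < 2\sqrt e \cdot \tfrac12 = \sqrt e \approx 1.65$, which does not immediately contradict $>1$. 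So the clean monotonicity argument needs a little care; the safe route is to verify directly that $g(\tau_0) \geq 0$ and that $g$ is increasing on $[\tau_0,\infty)$, handling the location of $\tau_0$ relative to $ab$ as part of the argument.

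The core computation is $g(\tau_0) = 2ab\ln ab - ab\ln(2ab\ln ab) = ab\big(2\ln ab - \ln 2 - \ln ab - \ln\ln ab\big) = ab\big(\ln ab - \ln 2 - \ln\ln ab\big)$. So I need $\ln ab \geq \ln 2 + \ln\ln ab$, i.e.\ $ab \geq 2\ln ab$, for the relevant range of $ab$. Set $\psi(x) = x - 2\ln x$ for $x = ab$; then $\psi'(x) = 1 - 2/x > 0$ for $x > 2$, and $\psi(2) = 2 - 2\ln 2 > 0$, while for $x \in (1,2]$ one checks $\psi$ directly (e.g.\ $\psi$ has a minimum at $x=2$ on $(1,\infty)$ so $\psi(x) \geq \psi(2) > 0$ there too, since $\psi$ is decreasing on $(1,2)$ and $\psi(1) = 1 > 0$... actually $\psi$ decreasing on $(0,2)$ gives $\psi(x) \geq \psi(2) > 0$ for $x \in (1,2]$). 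In all cases $\psi(ab) > 0$, so $g(\tau_0) \geq 0$. Combined with monotonicity of $g$ on $[\tau_0,\infty)$ — which holds because $\tau_0 \geq ab$ once we know $ab \geq \sqrt e$, and the marginal cases are absorbed into the direct check — we conclude $g(\tau) \geq 0$ for all $\tau \geq \tau_0$, exponentiate back, and the lemma follows.

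The main obstacle I anticipate is purely bookkeeping around the small-$ab$ regime: the hypothesis $2ab\ln ab > 1$ is exactly the minimal assumption that makes things work, so one cannot be cavalier and must check that $\tau_0 \geq ab$ (equivalently $ab \geq \sqrt e$, which does need to be extracted carefully, perhaps by first noting $\tau_0 > 1 \geq $ something, or by observing that the hypothesis forces $ab$ large enough) or else argue monotonicity differently. Everything else is a routine convexity/monotonicity argument on one-variable functions, and no probabilistic machinery is needed here at all.
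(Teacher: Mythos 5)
Your reduction to showing $ab\ln\tau\le\tau$ for all $\tau\ge\tau_0:=2ab\ln ab$ is exactly the paper's starting point, and your endpoint computation $g(\tau_0)=ab\,\ln\frac{ab}{2\ln ab}\ge 0$ (via $x-2\ln x\ge 2-2\ln 2>0$ for all $x>0$) is correct. The gap is in the monotonicity step. The function $g(\tau)=\tau-ab\ln\tau$ is increasing on $[\tau_0,\infty)$ only if $\tau_0\ge ab$, i.e.\ only if $ab\ge\sqrt e$, and, as you yourself observe, this does \emph{not} follow from $2ab\ln ab>1$: for instance $ab=1.5$ gives $2ab\ln ab\approx 1.22>1$ yet $\tau_0\approx 1.22<ab$, so $g$ is strictly decreasing on $(\tau_0,ab)$ and the check $g(\tau_0)\ge 0$ says nothing about $g$ there. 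Your closing claims that ``the hypothesis forces $ab$ large enough'' or that ``the marginal cases are absorbed into the direct check'' are therefore not justified as written; the regime $1<ab<\sqrt e$ is never actually handled. The fix is one line: if $\tau_0<ab$ then $2\ln ab<1$, so $ab<\sqrt e<e$, and the global minimum of $g$ over $\tau>0$ is $g(ab)=ab(1-\ln ab)>0$, whence $g\ge 0$ on all of $[\tau_0,\infty)$ in that regime too; for $ab\ge\sqrt e$ your monotonicity argument applies. With that sentence added, your proof of \Cref{lem:tauHelp} is complete.

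For comparison, the paper proves $ab\ln\tau\le\tau$ without any calculus: writing $c=ab$, if $\tau\le c^2$ then $c\ln\tau\le c\ln c^2=2c\ln c\le\tau$ directly from the hypothesis, while if $\tau\ge c^2$ then $c\ln\tau\le\sqrt\tau\,\ln\tau\le\sqrt\tau\cdot\sqrt\tau=\tau$ using $\ln\tau\le\sqrt\tau$. That two-case split never needs to locate the critical point $\tau=ab$, which is precisely the bookkeeping that trips up the calculus route; your approach, once patched as above, is a valid but genuinely different argument.
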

\begin{proof}
Let $c=ab$. If $\tau\leq c^2$, we have
\begin{equation*}
    c\ln\tau \leq c\ln c^2 = 2c\ln c\leq \tau.
\end{equation*}
If $\tau\geq c^2$, we have
\begin{equation*}
    c\ln \tau \leq \sqrt{\tau}\ln\tau \leq \sqrt{\tau}\sqrt{\tau} = \tau,
\end{equation*}
where the last inequality follows from $\ln x^2=2\ln x\leq x$. Therefore, we obtain $c\ln\tau = ab\ln\tau \leq \tau$. Thus $\tau^b\leq \exp\left( \frac{\tau}{a} \right)$, which implies this lemma.
\end{proof}

\textbf{Proof of~\Cref{lem:Gq}}\\
Based on the results obtained above, we are ready to prove~\Cref{lem:Gq}. Applying Lemma \ref{lem:unionBound}, we have
\begin{align*}
    &\mP\left[ \forall(s,a), \forall q\in [0,n], \forall t\in[\hat{\tau}_{q+1},\hat{\tau}_{q+2}), \lvert Z_{t;\hat{\tau}_q}(s,a)  \rvert \leq \frac{\Delta}{2+\Delta}\xi G_q \right]\\
    &\quad\geq 1 - \sum_{q=0}^n |\mcs||\mca|(\hat{\tau}_{q+2} - \hat{\tau}_{q+1}) \cdot \mP\left[ \lvert Z_{t;\hat{\tau}_q}(s,a)  \rvert > \frac{\Delta}{2+\Delta}\xi G_q \Big\rvert t\in[\hat{\tau}_{q+1},\hat{\tau}_{q+2})  \right]\\
    &\quad\geq 1 - \sum_{q=0}^n |\mcs||\mca| \frac{2c}{\kappa}\hat{\tau}_{q+1}^\omega \cdot 2\exp\left( -\frac{\kappa^2\left( \frac{\Delta}{2+\Delta}\right)^2\xi^2 G_q^2\hat{\tau}_q^\omega}{32c(c+\kappa)V_{\max}^2} \right)\\
    &\quad\geq 1 - \sum_{q=0}^n |\mcs||\mca| \frac{2c}{\kappa}\left(1+\frac{2c}{\kappa}\right)\hat{\tau}_{q}^\omega \cdot 2\exp\left( -\frac{\kappa^2\left( \frac{\Delta}{2+\Delta}\right)^2\xi^2 G_q^2\hat{\tau}_q^\omega}{32c(c+\kappa)V_{\max}^2} \right)\\
    &\quad\overset{\text{(i)}}{\geq} 1 - \sum_{q=0}^n |\mcs||\mca| \frac{2c}{\kappa}\left(1+\frac{2c}{\kappa}\right)\hat{\tau}_{q}^\omega \cdot 2\exp\left( -\frac{\kappa^2\left( \frac{\Delta}{2+\Delta}\right)^2\xi^2 \sigma^2\epsilon^2\hat{\tau}_q^\omega}{32c(c+\kappa)V_{\max}^2} \right)\\
    &\quad\overset{\text{(ii)}}{\geq} 1 - \frac{4c}{\kappa}\left(1+\frac{2c}{\kappa}\right)\sum_{q=0}^n |\mcs||\mca| \cdot \exp\left( -\frac{\kappa^2\left( \frac{\Delta}{2+\Delta}\right)^2\xi^2 \sigma^2\epsilon^2\hat{\tau}_q^\omega}{64c(c+\kappa)V_{\max}^2} \right)\\
    &\quad\overset{\text{(iii)}}{\geq} 1- \frac{4c(n+1)}{\kappa}\left(1+\frac{2c}{\kappa}\right)|\mcs||\mca| \exp\left( -\frac{\kappa^2\left( \frac{\Delta}{2+\Delta}\right)^2\xi^2 \sigma^2\epsilon^2\hat{\tau}_1^\omega}{64c(c+\kappa)V_{\max}^2} \right),
\end{align*}
where (i) follows because $G_q \geq G_n \geq \sigma\epsilon $, (ii) follows from Lemma \ref{lem:tauHelp} by substituting that $a=\frac{64c(c+\kappa)V_{\max}^2}{\kappa^2\left(\frac{\Delta}{2+\Delta}\right)^2\sigma^2\xi^2\epsilon^2 }, b=1$ and observing
\begin{align*}
    \hat{\tau}_q^\omega&\geq\hat{\tau}_1^\omega\geq \frac{128c(c+\kappa)V_{\max}^2}{\kappa^2\left(\frac{\Delta}{2+\Delta}\right)^2\sigma^2\xi^2\epsilon^2 }\ln\left(\frac{64c(c+\kappa)V_{\max}^2}{\kappa^2\left(\frac{\Delta}{2+\Delta}\right)^2\sigma^2\xi^2\epsilon^2 }\right) = 2ab\ln ab,
\end{align*}
and (iii) follows because $\hat{\tau}_q \geq \hat{\tau}_1$.

Finally, we complete the proof of \Cref{lem:Gq} by observing that $X_{t;\hat{\tau}_q}$ is a deterministic sequence and thus
\begin{align*}
    &\mP\left[ \forall q\in [0,n], \forall t\in[\hat{\tau}_{q+1},\hat{\tau}_{q+2}), \norm{Q^B_t - Q^A_t}\leq G_{q+1} \right]\\
    &\quad\geq \mP\left[ \forall(s,a), \forall q\in [0,n], \forall t\in[\hat{\tau}_{q+1},\hat{\tau}_{q+2}), \lvert Z_{t;\hat{\tau}_q}(s,a)  \rvert \leq \frac{\Delta}{2+\Delta}\xi G_q \right].
\end{align*}



\subsection{Part II: Conditionally bounding $\norm{Q^A_t - Q^*}$} \label{subsec:PartII}
In this part, we upper bound $\norm{Q^A_t - Q^*} $ by a decreasing sequence $\{D_k\}_{k\geq 0}$ block-wisely conditioned on the following two events: fix a positive integer $m$, we define
\begin{align}
    E &:= \left\{ \forall k\in [0,m], \forall t\in[\tau_{k+1},\tau_{k+2}), \norm{Q^B_t - Q^A_t}\leq \sigma D_{k+1} \right\}, \label{eq:eventA}\\
    F &:= \{ \forall k\in [1,m+1], I^A_k\geq c\tau_{k}^\omega \},\label{eq:eventB}
\end{align}
where $I^A_k$ denotes the number of iterations updating $Q^A$ at epoch $k$, $\tau_{k+1}$ is the starting iteration index of the $(k+1)$th block, and $\omega$ is the decay parameter of the polynomial learning rate. Roughly, Event $E$ requires that the difference between the two Q-estimators are bounded appropriately, and Event $F$ requires that $Q^A$ is sufficiently updated in each block. 

\begin{proposition}\label{lem:conditionalBound}
Fix $\epsilon>0, \kappa\in(\ln 2,1)$ and $\Delta\in(0, e^{\kappa}-2)$. Consider synchronous double Q-learning under a polynomial learning rate $\alpha_t = \frac{1}{t^\omega}$ with $\omega\in(0,1)$. Let $\{G_q\}_{q\geq0}, \{\hat{\tau}_q\}_{q\geq0}$ be defined in~\Cref{lem:Gq}. Define $D_k = (1-\beta)^k\frac{V_{\max}}{\sigma}$ with $\beta = \frac{1-\gamma(1+\sigma)}{2}$ and $\sigma = \frac{1-\gamma}{2\gamma}$. Let $\tau_k=\hat{\tau}_k$ for $k\geq0$.
Suppose that $c\geq \frac{\kappa(\ln(2+\Delta) + 1/\tau_1^\omega)}{2(\kappa-\ln(2+\Delta) - 1/\tau_1^\omega)}$ and $\tau_1$ as the finishing time of the first block satisfies 
\begin{equation*}
    \tau_1\geq  \max\left\{\left(\frac{1}{\kappa-\ln(2+\Delta)}\right)^{\frac{1}{\omega}}, \left( \frac{32c(c+\kappa)V_{\max}^2}{\kappa^2\left(\frac{\Delta}{2+\Delta}\right)^2\beta^2\epsilon^2 }\ln \left(\frac{16c(c+\kappa)V_{\max}^2}{\kappa^2\left(\frac{\Delta}{2+\Delta}\right)^2\beta^2\epsilon^2 }\right) \right)^{\frac{1}{\omega}} \right\}.
\end{equation*}
Then for any $m$ such that $D_m\geq\epsilon$, we have
\begin{align*}
    &\mP\left[ \forall k\in [0,m], \forall t\in[\tau_{k+1},\tau_{k+2}), \norm{Q^A_t- Q^*}\leq D_{k+1} |E,F \right]\\
    &\quad\geq 1 - \frac{4c(m+1)}{\kappa}\left(1+\frac{2c}{\kappa}\right)|\mcs||\mca| \exp\left( -\frac{\kappa^2\left( \frac{\Delta}{2+\Delta} \right)^2\beta^2 \epsilon^2\tau_1^{\omega}}{16c(c+\kappa)V_{\max}^2} \right),
\end{align*}
where the events $E,F$ are defined in \eqref{eq:eventA} and \eqref{eq:eventB}, respectively.
\end{proposition}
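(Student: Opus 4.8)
The plan is to mirror, in the conditional setting, the four-step argument that proved \Cref{lem:Gq}, but now tracking the residual $r_t(s,a):=Q^A_t(s,a)-Q^*(s,a)$ instead of $u^{BA}_t$, and carrying the conditioning on $E$ and $F$ throughout. First I would establish the SA recursion for $r_t$: on an iteration $t$ at which $Q^A$ is chosen, $r_{t+1}(s,a)=(1-\alpha_t)r_t(s,a)+\alpha_t(\mcT Q^A_t(s,a)-Q^*(s,a))+\alpha_t w_t(s,a)+\alpha_t\gamma u^{BA}_t(s',a^*)$, where $w_t$ is a martingale difference ($\mE[w_t|\mcF_t]=0$, $|w_t|\le 2V_{\max}$) and the "drift" term is $\gamma$-contractive in $\norm{r_t}$ by \eqref{eq:Contraction} up to the extra perturbation $\gamma u^{BA}_t$. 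On iterations where $Q^A$ is \emph{not} updated, $r_t$ is frozen; Event $F$ guarantees that within block $k$ there are at least $c\tau_k^\omega$ genuine updates, so the effective learning-rate product over a block still contracts enough. Event $E$ supplies $\norm{u^{BA}_t}\le\sigma D_{k+1}$ on block $k$, so the perturbation term is of size $\le\gamma\sigma D_{k+1}$, which (after choosing $\beta=\frac{1-\gamma(1+\sigma)}{2}$ and $\sigma=\frac{1-\gamma}{2\gamma}$) can be absorbed into the contraction budget together with the deterministic and stochastic pieces.

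Next I would set up the sandwich bound: assuming inductively $\norm{r_t}\le D_k$ for $t\in[\tau_k,\tau_{k+1})$, define deterministic $Y_{t;\tau_k}$ driven by the recursion $Y_{t+1}=(1-\alpha_t)Y_t+\alpha_t(\gamma D_k+\gamma\sigma D_{k+1})$ with $Y_{\tau_k;\tau_k}=D_k$, and a stochastic $W_{t;\tau_k}$ driven by $w_t$ with $W_{\tau_k;\tau_k}=0$, both freezing on non-update iterations; then $-Y_{t;\tau_k}+W_{t;\tau_k}\le r_t\le Y_{t;\tau_k}+W_{t;\tau_k}$ for all $t\ge\tau_k$ by the same induction as in \Cref{lem:uBAsanwich}. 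For the deterministic part, I would run the $\rho_t$-argument of \Cref{lem:Xt}: the "excess" over the fixed point $\gamma D_k+\gamma\sigma D_{k+1}$ decays by a factor $\prod(1-\alpha_i)$ over at least $c\tau_k^\omega$ update steps (using Event $F$ and \Cref{lem:prodHelp}), and the constraints $c\ge\frac{\kappa(\ln(2+\Delta)+1/\tau_1^\omega)}{2(\kappa-\ln(2+\Delta)-1/\tau_1^\omega)}$, $\tau_1\ge(\kappa-\ln(2+\Delta))^{-1/\omega}$ are exactly what is needed to push $\rho_t\le\frac{\cdots}{2+\Delta}$, yielding $Y_{t;\tau_k}\le\big(\gamma+\gamma\sigma+\text{small}\big)D_k$ for $t\ge\tau_{k+1}$, which by the choice of $\beta$ equals $(1-\beta)D_k-\frac{\Delta}{2+\Delta}\beta D_k$ roughly — i.e.\ leaving a margin $\tfrac{\Delta}{2+\Delta}\beta D_k$ for the noise. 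The factor $\kappa$ here (versus the plain constant in \Cref{lem:Gq}) and the requirement $\kappa>\ln 2$ come from Event $F$ giving only a $c\tau_k^\omega$ fraction, not $\frac{2c}{\kappa}\tau_k^\omega$, of the iterations as updates.

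For the stochastic part I would decompose $W_{t;\tau_k}=\sum_l (W^l_{t;\tau_k}-W^{l-1}_{t;\tau_k})+W^0_{t;\tau_k}$ exactly as in \eqref{eq:Zl}–\eqref{eq:ZZl}, note each increment is a martingale difference bounded by $\frac{2V_{\max}}{\tau_k^\omega}$ (the $\phi$-coefficients are $\le\alpha_i\le\tau_k^{-\omega}$, and $|w_i|\le 2V_{\max}$), apply Azuma (\Cref{lem:azuma}) to get $\mP[|W_{t;\tau_k}(s,a)|>\tfrac{\Delta}{2+\Delta}\beta D_k\mid t\in[\tau_{k+1},\tau_{k+2})]\le 2\exp(-\tfrac{\kappa^2(\frac{\Delta}{2+\Delta})^2\beta^2 D_k^2\tau_k^\omega}{32c(c+\kappa)V_{\max}^2})$ using $\tau_{k+2}-\tau_k\le\frac{4c(c+\kappa)}{\kappa^2}\tau_k^\omega$, then lower-bound $D_k\ge D_m\ge\epsilon$, apply \Cref{lem:tauHelp} with $a$ the stated constant and $b=1$ (the second branch of the $\tau_1$ lower bound is precisely $2ab\ln ab$), and finally union-bound (\Cref{lem:unionBound}) over the $\le\frac{2c}{\kappa}(1+\frac{2c}{\kappa})\tau_k^\omega$ iterations per block, the $m+1$ blocks, and the $|\mcs||\mca|$ state-action pairs. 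Closing the induction then gives the claimed probability bound. The main obstacle, and the place where the argument genuinely departs from Part~I, is the bookkeeping around Event $F$: because $Q^A$ is updated only on a random subset of iterations, the indices $i$ appearing in the products $\prod(1-\alpha_i)$ and in the sums $\sum\phi_i^2$ are the \emph{update} times, not all times in $[\tau_k,\tau_{k+1})$, so I must carefully verify that conditioning on $F$ (at least $c\tau_k^\omega$ updates, each with step size $\le\tau_k^{-\omega}$) still delivers both the contraction factor for $Y$ and the variance-proxy bound for $W$, and that the frozen iterations do no harm — this is exactly where the constraint $\kappa>\ln 2$ and the modified constants in $c$ and $\tau_1$ enter, and getting these constants consistent with the margin $\tfrac{\Delta}{2+\Delta}\beta D_k$ split between the deterministic overshoot and the Azuma deviation is the delicate part.
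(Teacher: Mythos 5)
Your plan follows the paper's own proof essentially step for step: coupling $\{D_k\}$ to $\{G_q\}$ through $\sigma=\frac{1-\gamma}{2\gamma}$ and $\beta=\xi$; the SA recursion for $r_t$ with frozen non-update iterations; a sandwich bound with a deterministic $Y_{t;\tau_k}$ and a martingale-driven $W_{t;\tau_k}$; Event $F$ supplying at least $c\tau_k^\omega$ genuine updates for the contraction of $Y$; and Azuma's inequality plus \Cref{lem:tauHelp} plus the union bound over iterations, blocks and state-action pairs for $W$. Two details as written, however, would not deliver the bound as stated.

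First, Event $E$ gives $\norm{u^{BA}_t}\le\sigma D_{k+1}$ only for $t\in[\tau_{k+1},\tau_{k+2})$; on the earlier portion $[\tau_k,\tau_{k+1})$ of the window over which your sandwich must run (it is initialized at $\tau_k$ with $Y_{\tau_k;\tau_k}=D_k$) the available bound is only $\sigma D_k$. Hence your $Y$-recursion with drive $\gamma D_k+\gamma\sigma D_{k+1}$ is too small to dominate $r_t$ there, and the induction step of the sandwich fails on that range; the paper instead uses the drive $\gamma''D_k$ with $\gamma''=\gamma(1+\sigma)$, for which the identity $\gamma''+2\beta=1$ makes $(\gamma''+\beta)D_k=D_{k+1}$ come out exactly. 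Second, your noise bound $|w_t|\le 2V_{\max}$ (and hence Azuma increments $2V_{\max}/\tau_k^\omega$) is a factor of two too loose: since $w_t=\mcT_tQ^A_t-\mcT Q^A_t$ and each term is bounded by $R_{\max}+\gamma\frac{R_{\max}}{1-\gamma}=\frac{V_{\max}}{2}$, one has $|w_t|\le V_{\max}$ and increments $\le V_{\max}/\tau_k^\omega$. With your constant the final exponent carries $64c(c+\kappa)V_{\max}^2$ in the denominator rather than the stated $16c(c+\kappa)V_{\max}^2$, and the stated threshold on $\tau_1$, which is exactly $2ab\ln ab$ for $a=\frac{16c(c+\kappa)V_{\max}^2}{\kappa^2\left(\frac{\Delta}{2+\Delta}\right)^2\beta^2\epsilon^2}$ and $b=1$, would no longer be large enough for the application of \Cref{lem:tauHelp}; so as sketched you would prove a strictly weaker statement than the proposition. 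Both issues are local and fixable, and with those corrections your argument coincides with the paper's.
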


The proof of~\Cref{lem:conditionalBound} consists of the following four steps.

\subsubsection{Step 1: Designing $\{D_k\}_{k\geq 0}$}
The following lemma establishes the relationship (illustrated in~\Cref{fig:DkGk}) between the block-wise bounds $\{G_q\}_{q\geq 0}$ and $\{D_k\}_{k\geq 0}$ and their block separations, such that Event $E$ occurs with high probability as a result of~\Cref{lem:Gq}. 
\begin{lemma}\label{lem:couple}
Let $\{G_q\}$ be defined in~\Cref{lem:Gq}, and let $D_k = (1-\beta)^k\frac{V_{\max}}{\sigma}$ with $\beta = \frac{1-\gamma(1+\sigma)}{2}$ and $\sigma = \frac{1-\gamma}{2\gamma}$. Then we have
\begin{align*}
    &\mP\left[\forall q\in [0,m], \forall t\in[\hat{\tau}_{q+1},\hat{\tau}_{q+2}), \norm{Q^B_t - Q^A_t}\leq G_{q+1} \right]\\
    &\quad\leq \mP\left[ \forall k\in [0,m], \forall t\in[\tau_{k+1},\tau_{k+2}), \norm{Q^B_t - Q^A_t}\leq \sigma D_{k+1} \right],
\end{align*}
given that $\tau_k = \hat{\tau}_{k }$.
\end{lemma}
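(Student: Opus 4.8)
The plan is to show that the event on the left-hand side of the claimed inequality — the "$G$-event" from \Cref{lem:Gq} — is contained in the event on the right-hand side, the "$\sigma D$-event". Since $\tau_k=\hat\tau_k$ by hypothesis, the two families of blocks coincide, so it suffices to prove the pointwise domination $G_{q+1}\le \sigma D_{q+1}$ for every $q\in[0,m]$; then on the $G$-event we automatically have $\norm{Q^B_t-Q^A_t}\le G_{q+1}\le\sigma D_{q+1}$ on each block, and the probability inequality follows by monotonicity of $\mP$.

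\textbf{Reducing to a scalar comparison of geometric sequences.} Recall $G_q=(1-\xi)^q V_{\max}$ with $\xi=\frac{1-\gamma}{4}$, while $\sigma D_k=(1-\beta)^k V_{\max}$ with $\beta=\frac{1-\gamma(1+\sigma)}{2}$ and $\sigma=\frac{1-\gamma}{2\gamma}$. Both sequences start at $V_{\max}$ (indeed $G_0=V_{\max}=\sigma D_0$), so the inequality $G_{q}\le\sigma D_q$ for all $q\ge0$ reduces to the single inequality $(1-\xi)\le(1-\beta)$, i.e. $\beta\le\xi$. First I would substitute $\sigma=\frac{1-\gamma}{2\gamma}$ into $\beta$: then $\gamma(1+\sigma)=\gamma+\frac{1-\gamma}{2}=\frac{1+\gamma}{2}$, so $\beta=\frac{1-\frac{1+\gamma}{2}}{2}=\frac{1-\gamma}{4}$. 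Hence $\beta=\xi$ exactly, and in fact $G_q=\sigma D_q$ for all $q$, which trivially gives $G_{q+1}\le\sigma D_{q+1}$.

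\textbf{Assembling the argument.} With the scalar domination in hand, the remaining step is purely set-theoretic: on the block $[\tau_{k+1},\tau_{k+2})=[\hat\tau_{k+1},\hat\tau_{k+2})$, the bound $\norm{Q^B_t-Q^A_t}\le G_{k+1}$ implies $\norm{Q^B_t-Q^A_t}\le\sigma D_{k+1}$; quantifying over $k\in[0,m]$ and $t$ shows the first event is a subset of the second, and taking probabilities finishes the proof. I would also note that $\beta\in(0,1)$ needs to be checked so that $\{D_k\}$ is a legitimate decreasing sequence — this follows since $\beta=\frac{1-\gamma}{4}\in(0,1)$ for $\gamma\in(0,1)$ — and that $\sigma=\frac{1-\gamma}{2\gamma}\in(0,1)$ requires $\gamma>1/3$, which is exactly the standing assumption $\gamma\in(1/3,1)$ in \Cref{thm:syncDQ}; this is why that restriction on $\gamma$ appears.

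\textbf{Main obstacle.} There is no real obstacle here: the lemma is essentially a bookkeeping device that calibrates the two block-wise envelopes so that the high-probability guarantee of \Cref{lem:Gq} directly certifies Event $E$ needed in \Cref{lem:conditionalBound}. The only thing to be careful about is the algebra showing $\beta=\xi$ (equivalently $\gamma(1+\sigma)=\frac{1+\gamma}{2}$), and making sure the indices line up — that the $q$-th block for $\{G_q\}$ and the $k$-th block for $\{D_k\}$ are literally the same interval once $\tau_k=\hat\tau_k$ is imposed — so that the envelope comparison can be applied block by block.
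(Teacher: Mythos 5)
Your proposal is correct and follows essentially the same route as the paper: substitute $\sigma=\frac{1-\gamma}{2\gamma}$ to get $\gamma(1+\sigma)=\frac{1+\gamma}{2}$, hence $\beta=\frac{1-\gamma}{4}=\xi$, and combined with $G_0=\sigma D_0$ and the identification $\tau_k=\hat\tau_k$ this makes $\{\sigma D_k\}$ coincide with (hence dominate) $\{G_q\}$ block by block, so the first event is contained in the second and monotonicity of $\mP$ gives the claim. Your added remarks on $\beta\in(0,1)$ and on $\gamma>1/3$ ensuring $\sigma\in(0,1)$ are consistent with the paper's standing assumptions and do not change the argument.
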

\begin{proof}
Based on our choice of $\sigma$, we have
\begin{equation*}
    \beta = \frac{1-\gamma(1+\sigma)}{2} = \frac{1-\gamma\cdot\frac{1+\gamma}{2\gamma}}{2} = \frac{1-\gamma}{4} = \xi.
\end{equation*}
Therefore, the decay rate of $D_k$ is the same as that of $G_q$. Further considering $G_0=\sigma D_0$, we can make the sequence $\{\sigma D_k\}$ as an upper bound of $\{G_q\}$ for any time as long as we set the same starting point and ending point for each epoch.
\end{proof}

In Lemma \ref{lem:couple}, we make $G_k = \sigma D_k$ at any block $k$ and $\xi=\beta=\frac{1-\gamma}{4}$ by careful design of $\sigma$. In fact, one can choose any value of $\sigma\in(0,(1-\gamma)/\gamma)$ and design a corresponding relationship between $\tau_k $ and $ \hat{\tau}_{k }$ as long as the sequence $\{\sigma D_k\}$ can upper bound $\{G_q\}$ for any time. For simplicity of presentation, we keep the design in Lemma \ref{lem:couple}.

\subsubsection{Step 2: Characterizing the dynamics of $Q^A_t(s,a) - Q^*(s,a)$ } 
We characterize the dynamics of the iteration residual $r_{t}(s,a):=Q^A_t(s,a) - Q^*(s,a)$ as an SA algorithm in~\Cref{lem:residualDynamics} below. Since not all iterations contribute to the error propagation due to the random update between the two Q-estimators, we introduce the following notations to label the valid iterations.

\begin{definition}\label{def:TA}
We define $T^A$ as the collection of iterations updating $Q^A$. In addition, we denote $T^A(t_1, t_2)$ as the set of iterations updating $Q^A$ between time $t_1$ and $t_2$. That is, 
\begin{equation*}
    T^A(t_1, t_2) = \left\{ t: t\in [t_1, t_2] \text{ and } t\in T^A \right\}.
\end{equation*}
Correspondingly, the number of iterations updating $Q^A$ between time $t_1$ and $t_2$ is the cardinality of $T^A(t_1, t_2)$ which is denoted as $|T^A(t_1,t_2)|$.
\end{definition}

\begin{lemma}\label{lem:residualDynamics}
Consider double Q-learning in Algorithm \ref{alg:doubleQ}. Then we have
\begin{equation*}
    r_{t+1}(s,a) \!=\! \left\{\begin{aligned}
    & r_t(s,a), \quad t \notin T^A;\\
    & (1\!-\!\alpha_{t}) r_{t}(s,a) \!+\! \alpha_{t} (\mcT Q_{t}^A(s,a)\!-\!Q^*(s,a)) \!+\! \alpha_{t} w_{t}(s,a) \!+\! \alpha_{t}\gamma u_{t}^{BA}(s',a^*), t\in T^A,
    \end{aligned}
    \right. 
\end{equation*}
where $w_{t}(s,a) = \mcT_{t} Q_{t}^A(s,a) - \mcT Q_{t}^A(s,a), u_{t}^{BA}(s,a) =  Q_{t}^B(s,a) -  Q_{t}^A(s,a)$.
\end{lemma}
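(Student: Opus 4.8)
The plan is to directly unfold one step of the double Q-learning update in Algorithm~\ref{alg:doubleQ} and subtract the fixed-point relation $Q^*(s,a) = \mcT Q^*(s,a)$, then algebraically separate the terms into a contractive part, a martingale-difference part, and a cross-process coupling part. First I would handle the trivial branch: if $t\notin T^A$, i.e.\ iteration $t$ updates $Q^B$ rather than $Q^A$, then $Q^A_{t+1}(s,a) = Q^A_t(s,a)$ by construction, hence $r_{t+1}(s,a) = r_t(s,a)$, which is exactly the first case of the claimed identity.

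For the nontrivial branch $t\in T^A$, I would start from the update
\begin{equation*}
    Q^A_{t+1}(s,a) = Q^A_t(s,a) + \alpha_t\bigl(R_t(s,a,s') + \gamma Q^B_t(s',a^*) - Q^A_t(s,a)\bigr),\qquad a^* = \underset{a'}{\arg\max}\,Q^A_t(s',a'),
\end{equation*}
subtract $Q^*(s,a)$ from both sides, and write $r_{t+1}(s,a) = (1-\alpha_t)r_t(s,a) + \alpha_t\bigl(R_t(s,a,s') + \gamma Q^B_t(s',a^*) - Q^*(s,a)\bigr)$. The key maneuver is to insert and cancel $\gamma Q^A_t(s',a^*)$ inside the parenthesis, so that the sampled bootstrap target splits as $R_t(s,a,s') + \gamma Q^A_t(s',a^*) + \gamma\bigl(Q^B_t(s',a^*) - Q^A_t(s',a^*)\bigr)$. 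The last summand is exactly $\gamma\, u_t^{BA}(s',a^*)$ by \Cref{def:TA}'s companion notation $u_t^{BA} = Q_t^B - Q_t^A$. For the remaining sampled quantity $R_t(s,a,s') + \gamma Q^A_t(s',a^*)$, note that $a^* = \arg\max_{a'} Q^A_t(s',a')$ means $Q^A_t(s',a^*) = \max_{a'\in U(s')} Q^A_t(s',a')$, so this is precisely the empirical Bellman operator $\mcT_t Q^A_t(s,a)$ applied to $Q^A_t$, with expectation (over $s'$) equal to $\mcT Q^A_t(s,a)$. Hence writing $w_t(s,a) := \mcT_t Q^A_t(s,a) - \mcT Q^A_t(s,a)$ and adding/subtracting $\alpha_t \mcT Q^A_t(s,a)$ yields
\begin{equation*}
    r_{t+1}(s,a) = (1-\alpha_t)r_t(s,a) + \alpha_t\bigl(\mcT Q_t^A(s,a) - Q^*(s,a)\bigr) + \alpha_t w_t(s,a) + \alpha_t\gamma\, u_t^{BA}(s',a^*),
\end{equation*}
which is the second case of the lemma.

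There is no real obstacle here — this is a bookkeeping lemma — but the one point requiring a little care is verifying that $w_t$ is genuinely a martingale-difference sequence with respect to the natural filtration $\mcF_t$: since $a^*$ is $\mcF_t$-measurable (it depends only on $Q^A_t$, which is determined by information up to time $t$) and $s'\sim P(\cdot|s,a)$ and $R_t$ are drawn fresh at time $t$ conditionally independently of $\mcF_t$, we get $\mE[\mcT_t Q^A_t(s,a)\mid\mcF_t] = \mE_{s'\sim P(\cdot|s,a)}[R_{sa}^{s'} + \gamma\max_{a'\in U(s')}Q^A_t(s',a')] = \mcT Q^A_t(s,a)$, so $\mE[w_t(s,a)\mid\mcF_t] = 0$. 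The only other thing to watch is the ordering of the $\arg\max$ versus $\max$ identification: one must use that for the chosen $a^*$, $Q^A_t(s',a^*)$ equals the maximum, so that the non-coupling part is exactly (a sample of) $\mcT$ acting on $Q^A_t$ and not on some mixture of the two tables — the entire $Q^B$-dependence has been quarantined into the single $\gamma u_t^{BA}(s',a^*)$ term, which is what makes the subsequent sandwich argument in \Cref{lem:rtSandwich} possible.
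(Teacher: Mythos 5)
Your proposal is correct and follows essentially the same route as the paper: unfold the $Q^A$ update, insert and cancel $\gamma Q^A_t(s',a^*)$ to isolate $\gamma u_t^{BA}(s',a^*)$, identify $R_t+\gamma Q^A_t(s',a^*)$ with the empirical Bellman operator $\mcT_t Q^A_t(s,a)$ via $a^*=\arg\max_{a'}Q^A_t(s',a')$, add and subtract $\mcT Q^A_t(s,a)$ to define $w_t$, and subtract $Q^*$. The extra verification that $\mE[w_t(s,a)\mid\mcF_t]=0$ is a harmless bonus beyond what this lemma's proof in the paper records.
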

\begin{proof}
Following from Algorithm \ref{alg:doubleQ} and for $t\in T^A$, we have
\begin{align*}
    &Q_{t+1}^A(s,a)\\
    &\quad= Q_{t}^A(s,a) + \alpha_{t}(R_{t} + \gamma Q_{t}^B(s',a^*) - Q^A_{t}(s,a) )\\
    &\quad= (1-\alpha_{t}) Q_{t}^A(s,a) + \alpha_{t}\left( R_{t} + \gamma Q_{t}^A(s',a^*) \right) + \alpha_{t}\left(\gamma Q_{t}^B(s',a^*) - \gamma Q_{t}^A(s',a^*) \right)\\
    &\quad\overset{\text{(i)}}{=} (1-\alpha_{t}) Q_{t}^A(s,a) + \alpha_{t}\left( \mcT_{t} Q_{t}^A(s,a) + \gamma u_{t}^{BA}(s',a^*) \right)\\
    &\quad= (1-\alpha_{t}) Q_{t}^A(s,a) + \alpha_{t} \mcT Q_{t}^A(s,a) + \alpha_{t} (\mcT_{t} Q_{t}^A(s,a) - \mcT Q_{t}^A(s,a))+ \alpha_{t}\gamma u_{t}^{BA}(s',a^*)\\
    &\quad= (1-\alpha_{t}) Q_{t}^A(s,a) + \alpha_{t} \mcT Q_{t}^A(s,a) + \alpha_{t} w_{t}(s,a) + \alpha_{t}\gamma u_{t}^{BA}(s',a^*),
\end{align*}
where (i) follows because we denote $\mcT_{t} Q_{t}^A(s,a) = R_{t} + \gamma Q_{t}^A(s',a^*)$.
By subtracting $Q^*$ from both sides, we complete the proof.
\end{proof}

\subsubsection{Step 3: Constructing sandwich bounds on $r_t(s,a)$} 
We provide upper and lower bounds on $r_t$ by constructing two sequences $Y_{t;\tau_k}$ and $W_{t;\tau_k}$ in the following lemma.
\begin{lemma}\label{lem:rtSandwich}
Let $\tau_k$ be such that $\norm{r_t}\leq D_k$ for all $t\geq\tau_k$. Suppose that we have $\norm{u_t^{BA}}\leq \sigma D_k$ with $\sigma = \frac{1-\gamma}{2\gamma}$ for all $t\geq\tau_k$. Define $W_{t;\tau_k}(s,a)$ as
\begin{equation*}
    W_{t+1;\tau_k}(s,a) = \left\{\begin{aligned}
    &W_{t;\tau_k}(s,a), \quad t\notin T^A;\\
    &(1-\alpha_t)W_{t;\tau_k}(s,a) + \alpha_t w_t(s,a), \quad t\in T^A,
    \end{aligned}\right.
\end{equation*}
where $W_{\tau_k;\tau_k}(s,a) = 0$ and define $Y_{t;\tau_k}(s,a)$ as
\begin{equation*}
    Y_{t+1;\tau_k}(s,a) = \left\{\begin{aligned}
    &Y_{t;\tau_k}(s,a), \quad t\notin T^A;\\
    &(1-\alpha_t)Y_{t;\tau_k}(s,a) + \alpha_t \gamma''D_k, \quad t\in T^A,
    \end{aligned}\right.
\end{equation*}
where $Y_{\tau_k;\tau_k}(s,a) = D_k$ and $\gamma''=\gamma(1+\sigma)$.
Then for any $t\geq\tau_k$ and state-action pair $(s,a)$, we have
\begin{equation*}
    -Y_{t;\tau_k}(s,a) + W_{t;\tau_k}(s,a) \leq r_t(s,a) \leq Y_{t;\tau_k}(s,a) + W_{t;\tau_k}(s,a).
\end{equation*}
\end{lemma}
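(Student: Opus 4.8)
The plan is to prove the sandwich bound by induction on $t \geq \tau_k$, exactly mirroring the structure of the proof of~\Cref{lem:uBAsanwich} in Part I, but now accounting for the fact that $r_t$, $Y_{t;\tau_k}$, and $W_{t;\tau_k}$ only change on the sub-sequence of iterations $t \in T^A$ at which $Q^A$ is actually updated. First I would check the base case $t = \tau_k$: by hypothesis $\norm{r_{\tau_k}} \leq D_k$, so $-D_k \leq r_{\tau_k}(s,a) \leq D_k$, and since $Y_{\tau_k;\tau_k}(s,a) = D_k$ and $W_{\tau_k;\tau_k}(s,a) = 0$, the claimed inequality holds with equality on both ends. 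For the inductive step, assume $-Y_{t;\tau_k}(s,a) + W_{t;\tau_k}(s,a) \leq r_t(s,a) \leq Y_{t;\tau_k}(s,a) + W_{t;\tau_k}(s,a)$ for all $(s,a)$ at some $t \geq \tau_k$; I then split into two cases according to whether $t \in T^A$.

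The case $t \notin T^A$ is immediate: by~\Cref{lem:residualDynamics}, $r_{t+1}(s,a) = r_t(s,a)$, and by definition $Y_{t+1;\tau_k} = Y_{t;\tau_k}$, $W_{t+1;\tau_k} = W_{t;\tau_k}$, so the inequality carries over verbatim. For $t \in T^A$, I would use the SA recursion from~\Cref{lem:residualDynamics},
\begin{equation*}
r_{t+1}(s,a) = (1-\alpha_t) r_t(s,a) + \alpha_t(\mcT Q^A_t(s,a) - Q^*(s,a)) + \alpha_t w_t(s,a) + \alpha_t \gamma u^{BA}_t(s',a^*).
\end{equation*}
For the upper bound, I substitute the inductive hypothesis $r_t(s,a) \leq Y_{t;\tau_k}(s,a) + W_{t;\tau_k}(s,a)$ into the $(1-\alpha_t)r_t$ term, and then bound the middle deterministic drift term: by the $\gamma$-contraction of $\mcT$ in the sup-norm~\eqref{eq:Contraction} and $\mcT Q^* = Q^*$, we have $\mcT Q^A_t(s,a) - Q^*(s,a) \leq \norm{\mcT Q^A_t - \mcT Q^*} \leq \gamma \norm{r_t} \leq \gamma D_k$, while the cross term satisfies $\gamma u^{BA}_t(s',a^*) \leq \gamma \norm{u^{BA}_t} \leq \gamma \sigma D_k$ by the second hypothesis of the lemma. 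Adding these gives a drift bounded by $\gamma(1+\sigma) D_k = \gamma'' D_k$, which is precisely the constant fed into the $Y$-recursion. Collecting the deterministic pieces into $Y_{t+1;\tau_k}$ and the noise piece $\alpha_t w_t(s,a)$ together with $(1-\alpha_t)W_{t;\tau_k}(s,a)$ into $W_{t+1;\tau_k}$ yields $r_{t+1}(s,a) \leq Y_{t+1;\tau_k}(s,a) + W_{t+1;\tau_k}(s,a)$. The lower bound is symmetric: substitute $r_t(s,a) \geq -Y_{t;\tau_k}(s,a) + W_{t;\tau_k}(s,a)$, and use $\mcT Q^A_t(s,a) - Q^*(s,a) \geq -\gamma D_k$ and $\gamma u^{BA}_t(s',a^*) \geq -\gamma \sigma D_k$ to get $r_{t+1}(s,a) \geq -Y_{t+1;\tau_k}(s,a) + W_{t+1;\tau_k}(s,a)$.

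I do not expect a serious obstacle here — this lemma is the Part~II analogue of~\Cref{lem:uBAsanwich} and the argument is essentially bookkeeping. The one point requiring care is the handling of the skipped iterations $t \notin T^A$: one must be sure that $Y$, $W$, and $r$ are all frozen in lockstep on exactly the same index set, so that the induction hypothesis remains aligned across all three sequences; this is guaranteed by the matching "$t \notin T^A$" clauses in~\Cref{lem:residualDynamics} and in the definitions of $Y_{t;\tau_k}$ and $W_{t;\tau_k}$. A second minor subtlety is that the drift bound uses $\norm{r_t} \leq D_k$, which is available for all $t \geq \tau_k$ by the standing hypothesis on $\tau_k$ (this is the block-wise induction hypothesis from the outer argument), so the contraction estimate is legitimate at every step of the inner induction.
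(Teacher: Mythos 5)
Your proposal is correct and follows essentially the same route as the paper's proof: induction in $t$ with the frozen case $t\notin T^A$ handled trivially, and for $t\in T^A$ the recursion of \Cref{lem:residualDynamics} combined with the Bellman contraction bound $\lvert\mcT Q^A_t(s,a)-Q^*(s,a)\rvert\leq\gamma\norm{r_t}\leq\gamma D_k$ and $\gamma\lvert u_t^{BA}(s',a^*)\rvert\leq\gamma\sigma D_k$, which together give the drift $\gamma'' D_k$ absorbed into $Y_{t+1;\tau_k}$ while the noise is absorbed into $W_{t+1;\tau_k}$. Your closing remarks on the lockstep freezing of $r_t$, $Y_{t;\tau_k}$, $W_{t;\tau_k}$ and on the use of the standing hypothesis $\norm{r_t}\leq D_k$ (rather than the inner induction hypothesis) are exactly the points the paper's argument relies on.
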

\begin{proof}
We proceed the proof by induction. For the initial condition $t=\tau_k$, we have $\norm{r_t(s,a)}\leq D_k$, and thus it holds that $-D_k \leq r_{\tau_k}(s,a) \leq D_k$. We assume the sandwich bound holds for time $t\geq\tau_k$. It remains to check whether this bound holds for $t+1$.

If $t\notin T^A$, then $r_{t+1}(s,a) = r_t(s,a), W_{t+1;\tau_k}(s,a)=W_{t;\tau_k}(s,a), Y_{t+1;\tau_k}(s,a)=Y_{t;\tau_k}(s,a)$. Thus the sandwich bound still holds.

If $t\in T^A$, we have
\begin{align*}
        r_{t+1}(s,a) &= (1-\alpha_t) r_t(s,a) + \alpha_t( \mcT Q_t^A(s,a) - Q^*(s,a) ) + \alpha_t w_t(s,a) + \alpha_t\gamma u_t^{BA}(s',a^*)\\
        &\leq  (1-\alpha_t) (Y_{t;\tau_k}(s,a) + W_{t;\tau_k}(s,a)) + \alpha_t\norm{ \mcT Q_t^A - Q^*}\\
        &\quad + \alpha_t w_t(s,a) + \alpha_t\gamma \norm{u_t^{BA}}\\
        &\overset{\text{(i)}}{\leq} (1-\alpha_t) (Y_{t;\tau_k}(s,a) + W_{t;\tau_k}(s,a)) + \alpha_t \gamma \norm{r_t}\\
        &\quad + \alpha_t w_t(s,a) + \alpha_t\gamma \norm{u_t^{BA}}\\
        &\overset{\text{(ii)}}{\leq} (1-\alpha_t) Y_{t;\tau_k}(s,a) + \alpha_t \gamma(1+\sigma)D_k + (1-\alpha_t) W_{t;\tau_k}(s,a) + \alpha_t w_t(s,a)\\
        &\leq Y_{t+1;\tau_k}(s,a) + W_{t+1;\tau_k}(s,a),
\end{align*}
where (i) follows from the contraction property of the Bellman operator, and (ii) follows from the condition $\norm{u_t^{BA}}\leq \sigma D_k$.

Similarly, we can bound the other direction as
\begin{align*}
        r_{t+1}(s,a) &= (1-\alpha_t) r_t(s,a) + \alpha_t( \mcT Q_t^A(s,a) - Q^*(s,a) ) + \alpha_t w_t(s,a) + \alpha_t\gamma u_t^{BA}(s',a^*)\\
        &\geq  (1-\alpha_t) (-Y_{t;\tau_k}(s,a) + W_{t;\tau_k}(s,a)) - \alpha_t\norm{ \mcT Q_t^A - Q^*}\\
        &\quad + \alpha_t w_t(s,a) - \alpha_t\gamma \norm{u_t^{BA}}\\
        &\geq (1-\alpha_t) (Y_{t;\tau_k}(s,a) + W_{t;\tau_k}(s,a)) - \alpha_t \gamma \norm{r_t}\\
        &\quad + \alpha_t w_t(s,a) - \alpha_t\gamma \norm{u_t^{BA}}\\
        &\geq -(1-\alpha_t) Y_{t;\tau_k}(s,a) - \alpha_t \gamma(1+\sigma)D_k + (1-\alpha_t) W_{t;\tau_k}(s,a) + \alpha_t w_t(s,a)\\
        &\geq -Y_{t+1;\tau_k}(s,a) + W_{t+1;\tau_k}(s,a).
\end{align*}

\end{proof}

\subsubsection{Step 4: Bounding $Y_{t;\tau_k}(s,a)$ and $W_{t;\tau_k}(s,a)$ for epoch $k+1$} \label{subsec:proofProp2}
Similarly to Steps 3 and 4 in Part I, we conditionally bound $\norm{r_t}\leq D_k$ for $t\in [\tau_{k}, \tau_{k+1})$ and $k=0,1,2,\dots$ by the induction arguments followed by the union bound.
We first bound $Y_{t;\tau_k}(s,a)$ and $W_{t;\tau_k}(s,a)$ in \Cref{lem:Yt} and~\Cref{lem:WlDiff}, respectively. 

\begin{lemma}\label{lem:Yt}
Fix $\kappa\in(\ln 2,1)$ and $\Delta\in(0, e^{\kappa}-2)$. Let $\{D_k\}$ be defined in Lemma \ref{lem:couple}. Consider synchronous double Q-learning using a polynomial learning rate $\alpha_t = \frac{1}{t^\omega}$ with $\omega\in(0,1)$. Suppose that $Y_{t;\tau_k}(s,a) \leq D_k$ for any $t \geq \tau_k$. At block $k$, we assume that there are at least $c\tau_k^\omega$ iterations updating $Q^A$, i.e., $|T^A(\tau_k,\tau_{k+1})|\geq c\tau_k^\omega$. Then for any $t\in[\tau_{k+1},\tau_{k+2})$, we have
\begin{equation*}
    Y_{t;\tau_k}(s,a) \leq \left(\gamma'' + \frac{2}{2+\Delta}\beta\right)D_k.
\end{equation*}
\end{lemma}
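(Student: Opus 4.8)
The argument mirrors that of~\Cref{lem:Xt}, the only genuinely new feature being that the deterministic sequence $Y_{t;\tau_k}$ contracts only on the subset of iterations in $T^A$. As a first step I would peel off the fixed point of the recursion: writing $Y_{\tau_k;\tau_k}(s,a)=D_k=\gamma''D_k+(1-\gamma'')D_k$ and setting $\rho_t:=Y_{t;\tau_k}(s,a)-\gamma''D_k$, the recursion in~\Cref{lem:rtSandwich} gives $\rho_{t+1}=(1-\alpha_t)\rho_t$ for $t\in T^A$ and $\rho_{t+1}=\rho_t$ for $t\notin T^A$, with $\rho_{\tau_k}=(1-\gamma'')D_k$. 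Since $\gamma''=\gamma(1+\sigma)=\tfrac{1+\gamma}{2}<1$ and hence $1-\gamma''=2\beta$, the sequence $\rho_t$ is non-increasing and stays in $[0,2\beta D_k]$ (which already re-proves the standing hypothesis $Y_{t;\tau_k}\le D_k$ and settles the base case), and it suffices to show $\rho_t\le\frac{2}{2+\Delta}\beta D_k$ for every $t\in[\tau_{k+1},\tau_{k+2})$, since then $Y_{t;\tau_k}(s,a)=\gamma''D_k+\rho_t\le\big(\gamma''+\frac{2}{2+\Delta}\beta\big)D_k$.

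Unrolling $\rho_t$ over update steps only, for $t\ge\tau_{k+1}$ we get
\[
\rho_t=(1-\gamma'')D_k\!\!\prod_{i\in T^A(\tau_k,\,t-1)}\!\!(1-\alpha_i)\;\le\;2\beta D_k\!\!\prod_{i\in T^A(\tau_k,\,\tau_{k+1})}\!\!\Big(1-\frac1{i^\omega}\Big),
\]
where we discarded the (sub-unit) factors coming from updates after $\tau_{k+1}$. The hypothesis $|T^A(\tau_k,\tau_{k+1})|\ge c\tau_k^\omega$ --- which is precisely Event $F$ in~\eqref{eq:eventB} --- means this product has at least $c\tau_k^\omega$ factors, and every update index $i$ there obeys $i\le\tau_{k+1}$, so $1-1/i^\omega\le 1-1/\tau_{k+1}^\omega$. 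Applying $1-x\le e^{-x}$ and then the block-growth relation $\tau_{k+1}=\tau_k+\frac{2c}{\kappa}\tau_k^\omega$ together with $\tau_k\ge\tau_1\ge1$ (so that $(\tau_k/\tau_{k+1})^\omega\ge\tau_k/\tau_{k+1}\ge(1+2c/\kappa)^{-1}$), one obtains $\rho_t\le 2\beta D_k\exp\!\big(-\tfrac{c\kappa}{\kappa+2c}+\tfrac{1}{\tau_1^\omega}\big)$, the $1/\tau_1^\omega$ term absorbing the boundary-index correction. The lower bounds on $c$ and $\tau_1$ stated in~\Cref{lem:conditionalBound} are what is needed to force the exponent to be at most $-\ln(2+\Delta)$, giving $\rho_t\le\frac{2\beta D_k}{2+\Delta}=\frac{2}{2+\Delta}\beta D_k$, which completes the induction step. (The final union over $(s,a)$ is vacuous here because $Y_{t;\tau_k}$ does not depend on $(s,a)$ in the synchronous setting.)

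\emph{Where the difficulty lies.} In~\Cref{lem:Xt} the analogous product $\prod_{i=\hat\tau_q}^{\hat\tau_{q+1}-1}(1-1/i^\omega)$ ranges over a contiguous block and is dispatched in one line by~\Cref{lem:prodHelp}. Here the contracting factors form the thinned index set $T^A(\tau_k,\tau_{k+1})$, which need not be contiguous and, in the worst case, could be clustered near the right end of the block where the step sizes are smallest. The delicate part of the argument is therefore to (i) convert Event $F$ into a clean lower bound on the number of contracting factors, and (ii) lower bound the step sizes at those (possibly late) updates while simultaneously controlling $\tau_{k+1}^\omega/\tau_k^\omega$ via the block-growth rule --- it is this coupling that forces the $\kappa$-dependence in the admissible choices of $c$, $\Delta$ and $\tau_1$, and is the only place the proof departs from Part I.
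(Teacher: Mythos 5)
Your proposal is correct and follows essentially the same route as the paper's proof: the same decomposition $Y_{t;\tau_k}=\gamma'' D_k+\rho_t$ with $\rho$ contracting only on $T^A$, the same worst-case treatment of the thinned update set (updates pushed toward the right end of the block where $1-1/i^\omega$ is largest), the same use of the conditions on $c,\tau_1,\kappa,\Delta$ to force the exponent below $-\ln(2+\Delta)$, and the conclusion via $1-\gamma''=2\beta$. The only cosmetic difference is that you bound each of the at least $c\tau_k^\omega$ factors by $1-1/\tau_{k+1}^\omega$ directly, whereas the paper passes to the contiguous block $[\tau_{k+1}-c\tau_k^\omega,\tau_{k+1}-1]$ and invokes \Cref{lem:prodHelp}; both yield the same bound $\rho_t\leq 2\beta D_k\exp\left(-\tfrac{c}{1+2c/\kappa}+\tfrac{1}{\tau_1^\omega}\right)$.
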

\begin{proof}
Since we have defined $\tau_k=\hat{\tau}_k$ in Lemma \ref{lem:couple}, we have $\tau_{k+1} = \tau_k + \frac{2c}{\kappa}\tau_k^\omega$. 

Observe that $Y_{\tau_k;\tau_k}(s,a) = D_k = \gamma'' D_k + (1-\gamma'')D_k := \gamma'' D_k + \rho_{\tau_k} $. We can rewrite the dynamics of $Y_{t;\tau_k}(s,a)$ as
\begin{equation*}
    Y_{t+1;\tau_k}(s,a) = \left\{\begin{aligned}
    & Y_{t;\tau_k}(s,a), \quad t\notin T^A\\
    &(1-\alpha_t)Y_{t;\tau_k}(s,a) + \alpha_t \gamma''D_k = \gamma''D_k + (1-\alpha_t)\rho_t, \quad t\in T^A
    \end{aligned}\right.
\end{equation*}
where $\rho_{t+1} = (1-\alpha_t)\rho_t$ for $t\in T^A$. By the definition of $\rho_t$, we obtain
\begin{align}
    \rho_t &= \rho_{\tau_k}\prod_{i\in T^A(\tau_k, t-1)}(1-\alpha_i) = (1-\gamma'')D_k\prod_{i\in T^A(\tau_k, t-1)}(1-\alpha_i)\nonumber\\
    &= (1-\gamma'')D_k\prod_{i\in T^A(\tau_k, t-1)}\left(1-\frac{1}{i^\omega}\right) 
    \overset{\text{(i)}}{\leq} (1-\gamma'')D_k\prod_{i\in T^A(\tau_k, \tau_{k+1}-1)}\left(1-\frac{1}{i^\omega}\right) \label{eq:issue1}\\
    &\overset{\text{(ii)}}{\leq} (1-\gamma'')D_k\prod_{i=\tau_{k+1}-c\tau_k^\omega}^{\tau_{k+1}-1}\left(1-\frac{1}{i^\omega}\right)
    \overset{\text{(iii)}}{\leq} (1-\gamma'')D_k\exp\left( -\frac{c\tau_k^\omega-1}{(\tau_{k+1}-1)^\omega} \right) \nonumber\\
    &\leq (1-\gamma'')D_k\exp\left( -\frac{c\tau_k^\omega-1}{\tau_{k+1}^\omega} \right) 
    = (1-\gamma'')D_k\exp\left( -c\left(\frac{\tau_k}{\tau_{k+1}}\right)^\omega + \frac{1}{\tau_{k+1}^\omega} \right) \nonumber\\
    &\overset{\text{(iv)}}{\leq} (1-\gamma'')D_k\exp\left( -\frac{c}{1+\frac{2c}{\kappa}} + \frac{1}{\tau_{1}^\omega} \right), \nonumber
\end{align}
where (i) follows because $\alpha_i<1$ and $t\geq \tau_{k+1}$, (ii) follows because $|T^A(\tau_{k}, \tau_{k+1}-1)|\geq c\tau_k^\omega$ where $T^A(t_1,t_2)$ and $|T^A(t_1,t_2)|$ are defined in Definition \ref{def:TA}, (iii) follows from Lemma \ref{lem:tauHelp}, and (iv) holds because $\tau+k\geq\tau_1$ and 
\begin{equation*}
    \left(\frac{\tau_k}{\tau_{k+1}}\right)^\omega \geq \frac{\tau_k}{\tau_{k+1}} = \frac{\tau_k}{\tau_{k} + \frac{2c}{\kappa}\tau_k^\omega}\geq \frac{1}{1+\frac{2c}{\kappa}}.
\end{equation*}
Next we check the value of the power $-\frac{c}{1+\frac{2c}{\kappa}} + \frac{1}{\tau_{1}^\omega}$. Since $\kappa\in(\ln 2,1)$ and $\Delta\in(0, e^{\kappa}-2)$, we have $\ln (2+\Delta)\in (0, \kappa)$. Further, observing $\tau_1^\omega > \frac{1}{\kappa-\ln(2+\Delta)}$, we obtain $\ln(2+\Delta) + \frac{1}{\tau_1^\omega} \in (0,\kappa)$. Last, since $c\geq\frac{\kappa}{2}\left( \frac{1}{1-\frac{\ln(2+\Delta) + 1/\tau_1^\omega}{\kappa}} - 1 \right)=\frac{\kappa(\ln(2+\Delta) + 1/\tau_1^\omega)}{2(\kappa-\ln(2+\Delta) - 1/\tau_1^\omega)}$, we have $-\frac{c}{1+\frac{2c}{\kappa}} + \frac{1}{\tau_{1}^\omega}\leq -\ln(2+\Delta)$.

Thus, we have $\rho_t\leq \frac{1-\gamma''}{2+\Delta}D_k$.
Finally, we finish our proof by further observing that $1-\gamma'' = 2\beta$.

\end{proof}

It remains to bound $|W_{t;\tau_k}(s,a)|\leq \left(1-\frac{2}{2+\Delta}\right)\beta D_k$ for $t\in [\tau_{k+1},\tau_{k+2})$. Combining the bounds of $Y_{t;\tau_k}$ and $W_{t;\tau_k}$ yields $(\gamma''+\beta)D_k = (1-\beta)D_k=D_{k+1}$. Since $W_{t;\tau_k}$ is stochastic, we need to derive the probability for the bound to hold. To this end, we first rewrite the dynamics of $W_{t;\tau_k}$ defined in Lemma \ref{lem:rtSandwich} as
\begin{equation*}
    W_{t;\tau_k}(s,a) = \sum_{i\in T^A(\tau_k, t-1)} \alpha_i\underset{j\in T^A(i+1, t-1)}{\Pi} (1-\alpha_j)w_i(s,a).
\end{equation*}
Next, we introduce a new sequence $\{W_{t;\tau_k}^l(s,a)\}$ as
\begin{equation*}
    W^l_{t;\tau_k}(s,a) = \sum_{i\in T^A(\tau_k, \tau_k+l)} \alpha_i\underset{j\in T^A(i+1, t-1)}{\Pi} (1-\alpha_j)w_i(s,a).
\end{equation*}
Thus we have $W_{t;\tau_k}(s,a) = W^{t-1-\tau_k}_{t;\tau_k}(s,a)$. Then we have the following lemma.

\begin{lemma}\label{lem:WlDiff}
For any $t\in[\tau_{k+1}, \tau_{k+2}]$ and $1\leq l \leq t-\tau_k-1$, $\{W_{t;\tau_k}^l(s,a)\}$ is a martingale sequence and satisfies 
\begin{equation*}
    \lvert W^l_{t;\tau_k}(s,a) - W^{l-1}_{t;\tau_k}(s,a) \rvert \leq  \frac{V_{\max}}{\tau_k^\omega}.
\end{equation*}
\end{lemma}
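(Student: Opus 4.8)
The plan is to mirror the proof of \Cref{lem:ZlDiff}: first show that, for a fixed $t$, the sequence $\{W^l_{t;\tau_k}(s,a)\}_l$ is a martingale, and then bound the size of its increments. The starting observation is that the index sets $T^A(\tau_k,\tau_k+l)$ and $T^A(\tau_k,\tau_k+l-1)$ differ by at most the single index $\tau_k+l$, present exactly when iteration $\tau_k+l$ updates $Q^A$; consequently
\[
 W^l_{t;\tau_k}(s,a) - W^{l-1}_{t;\tau_k}(s,a)
 = \mathbf{1}\{\tau_k+l\in T^A\}\,\alpha_{\tau_k+l}\!\!\!\prod_{j\in T^A(\tau_k+l+1,\,t-1)}\!\!\!(1-\alpha_j)\; w_{\tau_k+l}(s,a),
\]
where $w_{\tau_k+l}(s,a)=\mcT_{\tau_k+l}Q^A_{\tau_k+l}(s,a)-\mcT Q^A_{\tau_k+l}(s,a)$ is the empirical-minus-expected Bellman error from \Cref{lem:residualDynamics}.

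For the martingale claim I would argue as follows. The indicator and the product coefficient above are functions of the Bernoulli update choices only; after conditioning on the realized update schedule — the conditioning already in force in \Cref{lem:conditionalBound} through event $F$ — they are deterministic and play exactly the role of the deterministic weight $\phi^{q,t-1}_i$ in \Cref{lem:ZlDiff}. Since $w_{\tau_k+l}(s,a)$ is a Bellman error with $\mE[w_{\tau_k+l}(s,a)\mid\mcF_{\tau_k+l}]=0$ (the rewards and transitions being independent of which estimator is chosen for the update), taking conditional expectations yields $\mE[W^l_{t;\tau_k}(s,a)-W^{l-1}_{t;\tau_k}(s,a)\mid\mcF_{\tau_k+l-1}]=0$, which is the desired martingale property.

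For the increment bound I would split into cases. If $\tau_k+l\notin T^A$ the increment vanishes and the bound is trivial. If $\tau_k+l\in T^A$, every factor satisfies $0<\alpha_j\le 1$, so $\prod_{j\in T^A(\tau_k+l+1,t-1)}(1-\alpha_j)\le 1$, while $l\ge 1$ gives $\alpha_{\tau_k+l}=(\tau_k+l)^{-\omega}\le\tau_k^{-\omega}$; hence $|W^l_{t;\tau_k}(s,a)-W^{l-1}_{t;\tau_k}(s,a)|\le |w_{\tau_k+l}(s,a)|/\tau_k^\omega$. It then remains to note $|w_t(s,a)|\le V_{\max}$: by \Cref{lem:uniformBound}, $\norm{Q^A_t}\le\frac{R_{\max}}{1-\gamma}$, so $|\mcT_t Q^A_t(s,a)|=|R_t+\gamma Q^A_t(s',a^*)|\le R_{\max}+\frac{\gamma R_{\max}}{1-\gamma}=\frac{V_{\max}}{2}$ and likewise $|\mcT Q^A_t(s,a)|\le\frac{V_{\max}}{2}$, whence $|w_t(s,a)|\le V_{\max}$, giving the stated bound $\frac{V_{\max}}{\tau_k^\omega}$.

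The routine parts are the case split and the boundedness estimates. The one point that needs care — as in \Cref{lem:ZlDiff}, though genuinely more delicate here since the weight is no longer fully deterministic — is that $\prod_{j\in T^A(\tau_k+l+1,t-1)}(1-\alpha_j)$ depends on the \emph{future} update choices, so the martingale statement is to be read conditionally on the update schedule (consistently with conditioning on event $F$ in \Cref{lem:conditionalBound}); once that schedule is frozen, the weight is measurable with respect to the correspondingly enlarged past $\sigma$-field and the argument reduces verbatim to the one for $Z^l_{t;\hat\tau_q}$.
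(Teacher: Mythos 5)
Your proposal is correct and follows essentially the same route as the paper: the same one-term decomposition of $W^l_{t;\tau_k}-W^{l-1}_{t;\tau_k}$, the martingale property via $\mE[w_t\mid\mcF_{t-1}]=0$, and the increment bound from $\alpha_{\tau_k+l}\prod_{j}(1-\alpha_j)\le\alpha_{\tau_k}=\tau_k^{-\omega}$ together with $|w_t(s,a)|\le V_{\max}$. Your closing remark about the weight depending on future Bernoulli update choices is a measurability subtlety the paper's proof passes over silently, and your fix (freezing the update schedule, which is independent of the rewards and transitions) is the right way to read both the paper's argument and yours.
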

\begin{proof}
Observe that
\begin{equation*}
    W^l_{t;\tau_k}(s,a) - W^{l-1}_{t;\tau_k}(s,a) = \left\{\begin{aligned}
    &0, \quad \tau_k+l-1\notin T^A;\\
    &\alpha_{\tau_k+l}\underset{j\in T^A(\tau_k+l+1, t-1)}{\Pi} (1-\alpha_j)w_{\tau_k+l}(s,a), \quad \tau_k+l-1\in T^A.
    \end{aligned}
    \right.
\end{equation*}
Since $\mE[w_t|\mcF_{t-1}]=0$, we have
\begin{align*}
    \mE\left[ W^l_{t;\tau_k}(s,a) - W^{l-1}_{t;\tau_k}(s,a) | \mcF_{\tau_k+l-1} \right]=0.
\end{align*}
Thus $\{W_{t;\tau_k}^l(s,a)\}$ is a martingale sequence.
In addition, since $l\geq 1$ and $\alpha_t\in (0,1)$, we have
\begin{equation*}
    \alpha_{\tau_k+l}\underset{j\in T^A(\tau_k+l+1, t-1)}{\Pi} (1-\alpha_j)\leq \alpha_{\tau_k+l}\leq \alpha_{\tau_k} = \frac{1}{\tau_k^\omega}.
\end{equation*}
Further, we obtain $|w_t(s,a)| = |\mcT_{t} Q_{t}^A(s,a) - \mcT Q_{t}^A(s,a)| \leq\frac{2Q_{\max}}{1-\gamma} = V_{\max}$. Thus
\begin{equation*}
    \lvert W^l_{t;\tau_k}(s,a) - W^{l-1}_{t;\tau_k}(s,a) \rvert \leq \alpha_{\tau_k+l}|w_{\tau_k+l}(s,a)| \leq  \frac{V_{\max}}{\tau_k^\omega}.
\end{equation*}

\end{proof}

Next, we bound $W_{t;\tau_k}(s,a)$. Fix $\tilde{\epsilon}>0$. Then for any $t\in[\tau_{k+1},\tau_{k+2})$, we have
\begin{align*}
    &\mP\left[ |W_{t;\tau_k}(s,a)|>\tilde{\epsilon} | t\in[\tau_{k+1},\tau_{k+2}),E,F \right]\\
    &\quad\overset{\text{(i)}}{\leq} 2\exp\left( \frac{-\tilde{\epsilon}^2}{2\underset{l:\tau_k+l-1\in T^A(\tau_k, t-1)}{\sum}\left( W^l_{t;\tau_k}(s,a) - W^{l-1}_{t;\tau_k}(s,a) \right)^2 + 2(W^{\min(T^A(\tau_k, t-1))}_{t;\tau_k}(s,a))^2  } \right)\\
    &\quad\overset{\text{(ii)}}{\leq} 2\exp\left( -\frac{\hat{\epsilon}^2\tau_k^{2\omega}}{2(|T^A(\tau_k,t-1)|+1)V_{\max}^2} \right) \overset{\text{(iii)}}{\leq} 2\exp\left( -\frac{\tilde{\epsilon}^2\tau_k^{2\omega}}{2(t+1-\tau_k)V_{\max}^2} \right)\\
    &\quad\leq 2\exp\left( -\frac{\tilde{\epsilon}^2\tau_k^{2\omega}}{2(\tau_{k+2}-\tau_k)V_{\max}^2} \right) \overset{\text{(iv)}}{\leq} 2\exp\left( -\frac{\kappa^2\tilde{\epsilon}^2\tau_k^{\omega}}{8c(c+\kappa)V_{\max}^2} \right),
\end{align*}
where (i) follows from Lemma \ref{lem:azuma}, (ii) follows from Lemma \ref{lem:WlDiff}, (iii) follows because $|T^A(t_1,t_2)|\leq t_2 - t_1 + 1$ and (iv) holds because
\begin{equation*}
    \tau_{k+2} - \tau_k = \frac{2c}{\kappa}\tau_{k+1}^\omega + \frac{2c}{\kappa}\tau_k^\omega = \frac{2c}{\kappa}\left( \tau_k + \frac{2c}{\kappa}\tau_k^\omega \right)^\omega + \frac{2c}{\kappa}\tau_k^\omega \leq \frac{4c(c+\kappa)}{\kappa^2}\tau_k^\omega.
\end{equation*}

\textbf{Proof of~\Cref{lem:conditionalBound}}\\
Now we bound $\norm{r_t}$ by combining the bounds of $Y_{t;\tau_k}$ and $W_{t;\tau_k}$.
Applying the union bound in Lemma \ref{lem:unionBound} yields
\begin{align}
    &\mP\left[ \forall(s,a), \forall k\in [0,m], \forall t\in[\tau_{k+1},\tau_{k+2}), \lvert W_{t;\tau_k}(s,a)  \rvert \leq \frac{\Delta}{2+\Delta}\beta D_k|E,F \right] \nonumber\\
    &\quad\geq 1 - \sum_{k=0}^m |\mcs||\mca|(\tau_{k+2}-\tau_{k+1}) \cdot \mP\left[ \lvert W_{t;\tau_k}(s,a)  \rvert > \frac{\Delta}{2+\Delta}\beta D_k \Big\rvert t\in[\tau_{k+1},\tau_{k+2}),E,F  \right] \nonumber\\
    &\quad\geq 1 - \sum_{k=0}^m |\mcs||\mca| \frac{2c}{\kappa}\tau_{k+1}^\omega \cdot 2\exp\left( -\frac{\kappa^2\left( \frac{\Delta}{2+\Delta} \right)^2\beta^2 D_k^2\tau_k^{\omega}}{8c(c+\kappa)V_{\max}^2} \right) \nonumber\\
    &\quad\geq 1 - \sum_{k=0}^m |\mcs||\mca| \frac{2c}{\kappa}\left(1+\frac{2c}{\kappa}\right)\tau_{k}^\omega \cdot 2\exp\left( -\frac{\kappa^2\left( \frac{\Delta}{2+\Delta} \right)^2\beta^2 D_k^2\tau_k^{\omega}}{8c(c+\kappa)V_{\max}^2} \right) \nonumber\\
    &\quad\overset{\text{(i)}}{\geq} 1 - \sum_{k=0}^m |\mcs||\mca| \frac{2c}{\kappa}\left(1+\frac{2c}{\kappa}\right)\tau_{k}^\omega \cdot 2\exp\left( -\frac{\kappa^2\left( \frac{\Delta}{2+\Delta} \right)^2\beta^2 \epsilon^2\tau_k^{\omega}}{8c(c+\kappa)V_{\max}^2} \right)\label{eq:issue2}\\
    &\quad\overset{\text{(ii)}}{\geq} 1 - \frac{4c}{\kappa}\left(1+\frac{2c}{\kappa}\right)\sum_{k=0}^m |\mcs||\mca| \cdot \exp\left( -\frac{\kappa^2\left( \frac{\Delta}{2+\Delta} \right)^2\beta^2 \epsilon^2\tau_k^{\omega}}{16c(c+\kappa)V_{\max}^2} \right) \nonumber\\
    &\quad\geq 1 - \frac{4c(m+1)}{\kappa}\left(1+\frac{2c}{\kappa}\right)|\mcs||\mca| \exp\left( -\frac{\kappa^2\left( \frac{\Delta}{2+\Delta} \right)^2\beta^2 \epsilon^2\tau_1^{\omega}}{16c(c+\kappa)V_{\max}^2} \right), \nonumber
\end{align}
where (i) follows because $D_k\geq D_m\geq \epsilon$, and (ii) follows from Lemma \ref{lem:tauHelp} by substituting $a=\frac{16c(c+\kappa)V_{\max}^2}{\kappa^2\left(\frac{\Delta}{2+\Delta}\right)^2\beta^2\epsilon^2 }, b=1$ and observing that
\begin{align*}
    \tau_k^{\omega}&\geq\hat{\tau}_1^{\omega}\geq \frac{32c(c+\kappa)V_{\max}^2}{\kappa^2\left(\frac{\Delta}{2+\Delta}\right)^2\beta^2\epsilon^2 }\ln \left(\frac{16c(c+\kappa)V_{\max}^2}{\kappa^2\left(\frac{\Delta}{2+\Delta}\right)^2\beta^2\epsilon^2 }\right) = 2ab\ln ab.
\end{align*}

Note that $Y_{t;\tau_k}(s,a)$ is deterministic. We complete this proof by observing that
\begin{align*}
    &\mP\left[ \forall k\in [0,m], \forall t\in[\tau_{k+1},\tau_{k+2}), \norm{Q^A_t - Q^*}\leq D_{k+1}  | E,F\right]\\
    &\quad\geq \mP\left[ \forall(s,a), \forall k\in [0,m], \forall t\in[\tau_{k+1},\tau_{k+2}), \lvert W_{t;\tau_k}(s,a)  \rvert \leq  \frac{\Delta}{2+\Delta}\beta D_k|E,F \right].
\end{align*}

\subsection{Part III: Bounding $\norm{Q^A_t - Q^* }$} \label{subsec:proofThm1}
We combine the results in the first two parts, and provide a high probability bound on $\norm{r_t}$ with further probabilistic arguments, which exploit the high probability bounds on $\mP(E)$ in~\Cref{lem:Gq} and $\mP(F)$ in the following lemma. 
\begin{lemma}\label{lem:halfQA}
Let the sequence $\tau_k$ be the same as given in Lemma \ref{lem:couple}, i.e. $\tau_{k+1} = \tau_k + \frac{2c}{\kappa}\tau_k^\omega$ for $k\geq 1$. Then we have
\begin{equation*}
    \mP\left[\forall k\in [1,m], I^A_k\geq c\tau_{k}^\omega \right] \geq 1- m \exp\left( -\frac{(1-\kappa)^2c\tau_1^\omega}{\kappa} \right).
\end{equation*}
where $I^A_k$ denotes the number of iterations updating $Q^A$ at epoch $k$. \end{lemma}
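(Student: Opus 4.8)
The key observation is that the block endpoints $\{\tau_k\}$ are deterministic once $c,\kappa,\omega$ and $\tau_1$ are fixed, so block $k$ is simply a prescribed collection of $n_k := \tau_{k+1}-\tau_k = \tfrac{2c}{\kappa}\tau_k^\omega$ consecutive iterations, and $I^A_k$ is the number of those iterations at which UPDATE(A) is selected in line~4 of Algorithm~\ref{alg:doubleQ}. Since that coin flip is drawn afresh and independently at every iteration, $I^A_k$ is a sum of $n_k$ i.i.d.\ $\mathrm{Bernoulli}(1/2)$ random variables. The plan is therefore to record $\mE[I^A_k] = n_k/2 = \tfrac{c}{\kappa}\tau_k^\omega$, note that because $\kappa\in(\ln 2,1)\subset(0,1)$ the target level $c\tau_k^\omega$ lies strictly \emph{below} this mean with gap $\mE[I^A_k]-c\tau_k^\omega = c\tau_k^\omega\tfrac{1-\kappa}{\kappa}$, and then apply a one-sided concentration bound to control the lower-deviation event $\{I^A_k < c\tau_k^\omega\}$.

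Concretely, I would write $I^A_k = \sum_{j=1}^{n_k}(B_j-\tfrac12) + \tfrac{n_k}{2}$ with $B_j$ the per-iteration indicators, observe that the partial sums of $B_j-\tfrac12$ form a martingale with increments bounded by $1/2$, and invoke Hoeffding's inequality (equivalently the one-sided form of Azuma's inequality, \Cref{lem:azuma}) with deviation $t = c\tau_k^\omega\tfrac{1-\kappa}{\kappa}$. A direct substitution of $n_k = \tfrac{2c}{\kappa}\tau_k^\omega$ then gives $\exp(-2t^2/n_k) = \exp\!\big(-\tfrac{(1-\kappa)^2 c\,\tau_k^\omega}{\kappa}\big)$, i.e.
\[
  \mP\!\left[I^A_k < c\tau_k^\omega\right]\ \le\ \exp\!\left(-\frac{(1-\kappa)^2 c\,\tau_k^\omega}{\kappa}\right).
\]
Finally I would apply the union bound of \Cref{lem:unionBound} over $k\in[1,m]$ together with monotonicity $\tau_k\ge\tau_1$ to bound every exponent by $-\tfrac{(1-\kappa)^2 c\tau_1^\omega}{\kappa}$, yielding $\mP[\forall k\in[1,m],\ I^A_k\ge c\tau_k^\omega]\ge 1 - m\exp\!\big(-\tfrac{(1-\kappa)^2 c\tau_1^\omega}{\kappa}\big)$, which is exactly the claim.

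There is no deep obstacle here; the argument is essentially one Chernoff/Hoeffding bound plus a union bound. The one point I would handle with a little care is that $n_k = \tfrac{2c}{\kappa}\tau_k^\omega$ need not be an integer, so $n_k$ should be read as (at least) the true number of iterations in the block; since the exponent $\tfrac{(\lambda-\kappa)^2}{\kappa\lambda}$ produced by Hoeffding when $n_k=\lambda\cdot\tfrac{2c}{\kappa}\tau_k^\omega$ is increasing in $\lambda\ge 1$, having more iterations per block only strengthens the bound, so working with the nominal value $\tfrac{2c}{\kappa}\tau_k^\omega$ is conservative and legitimate. Independence of the coin flips across iterations (hence across blocks) is immediate from Algorithm~\ref{alg:doubleQ}, so no additional filtration or coupling subtleties enter.
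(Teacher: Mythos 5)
Your proposal is correct and follows essentially the same route as the paper: the paper likewise views $I^A_k$ as a $\mathrm{Binomial}\bigl(\tfrac{2c}{\kappa}\tau_k^\omega,\tfrac12\bigr)$ variable, applies the Hoeffding tail bound $\mP[X\le \kappa np]\le \exp(-2np^2(1-\kappa)^2)$ to get $\mP[I^A_k\le c\tau_k^\omega]\le \exp\bigl(-\tfrac{(1-\kappa)^2c\tau_k^\omega}{\kappa}\bigr)$, and finishes with a union bound and $\tau_k\ge\tau_1$. Your extra remark about the non-integrality of $\tfrac{2c}{\kappa}\tau_k^\omega$ is a harmless refinement the paper glosses over.
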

\begin{proof}
The event updating $Q^A$ is a binomial random variable. To be specific, at iteration $t$ we define
\begin{equation*}
    J^A_t = \left\{ \begin{aligned} & 1, \quad\text{updating } Q^A;\\
    & 0, \quad\text{updating } Q^B.
    \end{aligned}
    \right.
\end{equation*}
Clearly, the events are independent across iterations. Therefore, for a given epoch $[\tau_k, \tau_{k+1})$, $I^A_k = \sum_{t=\tau_k}^{\tau_{k+1}-1} J^A_t$ is a binomial random variable satisfying the distribution $Binomial(\tau_{k+1}-\tau_k, 0.5)$. 
In the following, we use the tail bound of a binomial random variable. That is, if a random variable $X\sim Binomial(n,p)$, by Hoeffding's inequality we have $\mP(X\leq x)\leq \exp\left(-\frac{2(np-x)^2}{n}\right)$ for $x< np$, which implies $\mP(X\leq \kappa np)\leq \exp\left(-2np^2(1-\kappa)^2\right)$ for any fixed $\kappa\in(0,1)$.

If $k=0$, $I^A_0\sim Binomial(\tau_1, 0.5)$. Thus the tail bound yields
\begin{equation*}
    \mP \left[I^A_0\leq \frac{\kappa}{2}\cdot\tau_1\right] \leq \exp\left( -\frac{(1-\kappa)^2\tau_1}{2} \right).
\end{equation*}
If $k\geq1$, since $\tau_{k+1}-\tau_k = \frac{2c}{\kappa}\tau_k^\omega $, we have $I^A_k\sim Binomial\left( \frac{2c}{\kappa}\tau_k^\omega, 0.5 \right)$. Thus the tail bound of a binomial random variable gives
\begin{equation*}
    \mP \left[I^A_k\leq \frac{\kappa}{2}\cdot \frac{2c}{\kappa}\tau_k^\omega\right] \leq \exp\left( -\frac{(1-\kappa)^2c\tau_k^\omega}{\kappa} \right).
\end{equation*}
Then by the union bound, we have
\begin{align*}
    \mP \left[\forall k\in[1,m], I^A_k\geq c\tau_k^\omega\right]
    &=\mP \left[\forall k\in[1,m], I^A_k\geq \frac{\kappa}{2}\cdot \frac{2c}{\kappa}\tau_k^\omega\right]\\
    &\geq 1 - \sum_{k=1}^m\exp\left( -\frac{(1-\kappa)^2c\tau_k^\omega}{\kappa} \right)\\
    &\geq 1- m \exp\left( -\frac{(1-\kappa)^2c\tau_1^\omega}{\kappa} \right).
\end{align*}
\end{proof}
We further give the following~\Cref{lem:totalIter} and~\Cref{lem:iteration} before proving~\Cref{thm:syncDQ}.
\Cref{lem:totalIter} characterizes the number of blocks to achieve $\epsilon$-accuracy given $D_k$ defined in Lemma \ref{lem:couple}.

\begin{lemma}\label{lem:totalIter}
Let $D_{k+1}=(1-\beta)D_k$ with $\beta = \frac{1-\gamma}{4}, D_0 = \frac{2\gamma V_{\max}}{1-\gamma}$. Then for $m\geq\frac{4}{1-\gamma}\ln \frac{2\gamma V_{\max}}{\epsilon(1-\gamma)}$, we have $D_m \leq \epsilon$.
\end{lemma}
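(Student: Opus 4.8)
The plan is to exploit the geometric structure of $\{D_k\}$ directly. Since $D_{k+1}=(1-\beta)D_k$, a one-line induction gives $D_m=(1-\beta)^m D_0$, so it suffices to exhibit an $m$ for which $(1-\beta)^m D_0\le \epsilon$.

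First I would bound the decay factor with the elementary inequality $\ln(1-x)\le -x$, valid for all $x\in(0,1)$; this applies here because $\beta=\frac{1-\gamma}{4}\in(0,1)$ for $\gamma\in(0,1)$ (in particular for the range $\gamma\in(1/3,1)$ used in~\Cref{thm:syncDQ}). Hence $(1-\beta)^m=\exp\!\big(m\ln(1-\beta)\big)\le \exp(-\beta m)$, so $D_m\le e^{-\beta m}D_0$. Then I would solve $e^{-\beta m}D_0\le \epsilon$ by taking logarithms: this holds whenever $m\ge \frac{1}{\beta}\ln\frac{D_0}{\epsilon}$. Substituting $\beta=\frac{1-\gamma}{4}$ and $D_0=\frac{V_{\max}}{\sigma}=\frac{2\gamma V_{\max}}{1-\gamma}$ (the initial value inherited from~\Cref{lem:couple}, where $\sigma=\frac{1-\gamma}{2\gamma}$) gives exactly the stated threshold $m\ge \frac{4}{1-\gamma}\ln\frac{2\gamma V_{\max}}{\epsilon(1-\gamma)}$, which finishes the proof.

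There is essentially no obstacle here: the only points to check are that $\beta\in(0,1)$ so that the logarithmic inequality is available, and that $D_0=\frac{2\gamma V_{\max}}{1-\gamma}$, which is immediate from $D_0=V_{\max}/\sigma$ with $\sigma=\frac{1-\gamma}{2\gamma}$. If one wanted a bound whose logarithm did not reference $\gamma$ inside the argument one could crudely upper-bound $D_0/\epsilon$, but the statement as written follows directly from the two displayed steps above.
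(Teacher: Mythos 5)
Your proof is correct and follows essentially the same route as the paper: both reduce to $D_m=(1-\beta)^m D_0$, use $\ln(1-\beta)\leq-\beta$ (equivalently $\ln\frac{1}{1-\beta}\geq\beta$) to obtain the sufficient condition $m\geq\frac{1}{\beta}\ln\frac{D_0}{\epsilon}$, and then substitute $\beta=\frac{1-\gamma}{4}$ and $D_0=\frac{2\gamma V_{\max}}{1-\gamma}$. The only cosmetic difference is that you bound $(1-\beta)^m\leq e^{-\beta m}$ up front, whereas the paper first solves for the exact threshold $\frac{\ln(D_0/\epsilon)}{\ln(1/(1-\beta))}$ and then relaxes it.
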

\begin{proof}
By the definition of $D_k$, we have $D_k = \left( 1 - \beta \right)^k D_0$.
Then we obtain
\begin{equation*}
    D_k\leq\epsilon \Longleftrightarrow \left( 1 - \beta \right)^k D_0 \leq \epsilon \Longleftrightarrow \frac{1}{(1-\beta)^k} \geq \frac{D_0}{\epsilon} \Longleftrightarrow k \geq \frac{\ln(D_0/\epsilon)}{\ln(1/(1-\beta))}.
\end{equation*}
Further observe that $\ln \frac{1}{1-x}\leq x$ if $x\in(0,1)$. Thus we have
\begin{equation*}
    k  \geq\frac{1}{\beta}\ln \frac{D_0}{\epsilon} = \frac{4}{1-\gamma}\ln \frac{2\gamma V_{\max}}{\epsilon(1-\gamma)}.
\end{equation*}

\end{proof}

From the above lemma, it suffices to find the starting time at epoch $m^*=\left\lceil \frac{4}{1-\gamma}\ln \frac{2\gamma V_{\max}}{\epsilon(1-\gamma)}\right\rceil$.

The next lemma is useful to calculate the total iterations given the initial epoch length and number of epochs.

\begin{lemma}\label{lem:iteration}
\citep[Lemma 32]{even2003learning} Consider a sequence $\{x_k\}$ satisfying 
\begin{equation*}
    x_{k+1} = x_k + c x_k^\omega = x_1 + \sum_{i=1}^k c x_i^\omega.
\end{equation*}
Then for any constant $\omega\in(0,1)$, we have 
\begin{equation*}
    x_k = O\left( (x_1^{1-\omega} + c k)^\frac{1}{1-\omega} \right) = O\left( x_1 + (ck)^{\frac{1}{1-\omega}} \right).
\end{equation*}
\end{lemma}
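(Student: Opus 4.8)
The plan is to linearize the recursion by passing to the power $y_k:=x_k^{1-\omega}$ and showing that $y_k$ grows at most linearly in $k$. Note first that, since $c>0$ and $x_1>0$, the sequence $\{x_k\}$ is positive and strictly increasing, so every manipulation below takes place over $(0,\infty)$, on which $g(x):=x^{1-\omega}$ is well defined, increasing, and — because $\omega\in(0,1)$ — concave, with $g'(x)=(1-\omega)x^{-\omega}$.

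The crux is a one-line concavity estimate. For each $k$, write $x_{k+1}=x_k+cx_k^\omega$ with increment $cx_k^\omega\ge0$, and apply the tangent-line bound for the concave function $g$:
\begin{equation*}
  y_{k+1}-y_k = g\!\left(x_k+cx_k^\omega\right)-g(x_k) \;\le\; g'(x_k)\cdot cx_k^\omega \;=\; (1-\omega)x_k^{-\omega}\cdot cx_k^\omega \;=\; (1-\omega)c .
\end{equation*}
Telescoping this from $1$ to $k$ gives $y_k\le y_1+(1-\omega)c\,(k-1)\le x_1^{1-\omega}+(1-\omega)c\,k$. Raising both sides to the power $\tfrac{1}{1-\omega}\ge1$ (a monotone operation on $[0,\infty)$) and discarding the harmless factor $1-\omega<1$ inside yields
\begin{equation*}
  x_k \;\le\; \left(x_1^{1-\omega}+(1-\omega)c\,k\right)^{\frac{1}{1-\omega}} \;\le\; \left(x_1^{1-\omega}+ck\right)^{\frac{1}{1-\omega}} \;=\; O\!\left(\left(x_1^{1-\omega}+ck\right)^{\frac{1}{1-\omega}}\right),
\end{equation*}
which is the first claimed bound.

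For the second, equivalent form I would invoke the elementary inequality $(a+b)^p\le 2^{p-1}(a^p+b^p)$ (a consequence of convexity of $t\mapsto t^p$), valid for $p\ge1$ and $a,b\ge0$, with $p=\tfrac{1}{1-\omega}$, $a=x_1^{1-\omega}$, $b=(1-\omega)ck$; this turns the previous display into $x_k=O\!\left(x_1+(ck)^{\frac{1}{1-\omega}}\right)$, the implied constant depending only on $\omega$. There is no serious obstacle: the argument is a short deterministic computation, and the only points warranting a word of care are (i) verifying positivity and monotonicity of $\{x_k\}$ so the concavity estimate is legitimate, and (ii) bookkeeping the $\omega$-dependent constants hidden in the $O(\cdot)$ notation when absorbing the factors $1-\omega$ and $2^{p-1}$. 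The statement itself merely restates~\citep[Lemma~32]{even2003learning}, reproduced here for completeness.
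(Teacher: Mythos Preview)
Your proof is correct and clean. The paper itself does not supply a proof of this lemma; it is stated with a direct citation to \citep[Lemma~32]{even2003learning} and invoked as a black box when translating the epoch count $m^*$ and the first-epoch length $\hat{\tau}_1$ into a total iteration count $T$. Your substitution $y_k=x_k^{1-\omega}$ together with the concavity bound $g(x+h)-g(x)\le g'(x)h$ is exactly the natural linearization of the Bernoulli-type recursion $x_{k+1}=x_k+cx_k^\omega$, and the telescoping plus the convexity splitting $(a+b)^p\le 2^{p-1}(a^p+b^p)$ deliver both asserted forms with constants depending only on $\omega$, which is all the $O(\cdot)$ requires. There is nothing to compare against in the present paper, and nothing to fix in your argument.
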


\textbf{Proof of Theorem \ref{thm:syncDQ}}\\
Now we are ready to prove Theorem \ref{thm:syncDQ} based on the results obtained so far.\\
Let $m^*=\Big\lceil \frac{4}{1-\gamma}\ln \frac{2\gamma V_{\max}}{\epsilon(1-\gamma)}\Big\rceil$, then $G_{m^*-1}\geq\sigma\epsilon, D_{m^*-1}\geq \epsilon$. Thus we obtain
\begin{align*}
    &\mP(\norm{Q^A_{\tau_{m^*}}(s,a) - Q^* } \leq \epsilon)\\
    & \geq\mP\left[   \forall k\in [0,m^*- 1], \forall t\in[\tau_{k+1},\tau_{k+2}), \norm{Q^A_t  - Q^* }\leq D_{k+1} \right]\\
    & = \mP\left[   \forall k\in [0,m^*- 1], \forall t\in[\tau_{k+1},\tau_{k+2}), \norm{Q^A_t  - Q^* }\leq D_{k+1} |E,F \right]\cdot\mP(E\cap F)\\
    & \geq \mP\left[   \forall k\in [0,m^*-1], \forall t\in[\tau_{k+1},\tau_{k+2}), \norm{Q^A_t  - Q^* }\leq D_{k+1} |E,F \right]\\
    &\quad \cdot (\mP(E)+\mP(F)-1)\\
    &\overset{\text{(i)}}{\geq}\mP\left[   \forall k\in [0,m^*- 1], \forall t\in[\tau_{k+1},\tau_{k+2}), \norm{Q^A_t  - Q^* }\leq D_{k+1} |E,F \right]\\
    &\quad\cdot \left(\mP\left[   \forall q\in [0, m^*- 1], \forall t\in[\hat{\tau}_{q+1},\hat{\tau}_{q+2}), \norm{Q^B_t  - Q^A_t }\leq G_{q+1} \right]  + \mP(F) - 1\right)\\
    &\overset{\text{(ii)}}{\geq}\left[ 1 - \frac{4cm^*}{\kappa}\left(1+\frac{2c}{\kappa}\right)|\mcs||\mca| \exp\left( -\frac{\kappa^2\left( \frac{\Delta}{2+\Delta} \right)^2\beta^2 \epsilon^2\tau_1^{\omega}}{16c(c+\kappa)V_{\max}^2} \right) \right]\\
    &\quad\cdot\!\left[ 1 \!-\! \frac{4cm^*}{\kappa}\left(1\!+\!\frac{2c}{\kappa}\right)|\mcs||\mca| \exp\left( -\frac{\kappa^2\left( \frac{\Delta}{2+\Delta}\right)^2\xi^2 \sigma^2\epsilon^2\hat{\tau}_1^\omega}{64c(c+\kappa)V_{\max}^2} \right) \!-\! m^*\! \exp\!\left( -\frac{(1\!-\!\kappa)^2c\hat{\tau}_1^{\omega}}{\kappa} \right) \right]\\
    &\geq 1- \frac{4cm^*}{\kappa}\left(1+\frac{2c}{\kappa}\right)|\mcs||\mca| \exp\left( -\frac{\kappa^2\left( \frac{\Delta}{2+\Delta} \right)^2\beta^2 \epsilon^2\tau_1^{\omega}}{16c(c+\kappa)V_{\max}^2} \right)\\
    &\quad - \frac{4cm^*}{\kappa}\left(1\!+\!\frac{2c}{\kappa}\right)|\mcs||\mca| \exp\left( -\frac{\kappa^2\left( \frac{\Delta}{2+\Delta}\right)^2\xi^2 \sigma^2\epsilon^2\hat{\tau}_1^\omega}{64c(c+\kappa)V_{\max}^2} \right) - m^* \exp\left( -\frac{(1-\kappa)^2c\hat{\tau}_1^{\omega}}{\kappa} \right)\\
    &\overset{\text{(iii)}}{\geq} 1- \frac{12cm^*}{\kappa}\left(1+\frac{2c}{\kappa}\right)|\mcs||\mca| \exp\left( -\frac{\kappa^2(1-\kappa)^2\left( \frac{\Delta}{2+\Delta} \right)^2\xi^2 \sigma^2\epsilon^2\hat{\tau}_1^{\omega}}{64c(c+\kappa)V_{\max}^2} \right),
\end{align*}
where (i) follows from~\Cref{lem:couple}, (ii) follows from~\Cref{lem:Gq} and~\ref{lem:conditionalBound} and (iii) holds due to the fact that
\begin{align*}
    \frac{4cm^*}{\kappa}\left(1+\frac{2c}{\kappa}\right)|\mcs||\mca| \!=\! \max&\left\{ \frac{4cm^*}{\kappa}\left(1+\frac{2c}{\kappa}\right)|\mcs||\mca|, m^* \right\},\\
    \frac{\kappa^2(1\!-\!\kappa)^2\left( \frac{\Delta}{2+\Delta} \right)^2\xi^2 \sigma^2\epsilon^2\hat{\tau}_1^{\omega}}{64c(c+\kappa)V_{\max}^2}\!\leq\! \min&\left\{ \frac{\kappa^2\!\left( \frac{\Delta}{2+\Delta} \right)^2\!\beta^2 \epsilon^2\hat{\tau}_1^{\omega}}{16c(c+\kappa)V_{\max}^2}, \frac{(1\!-\!\kappa)^2\hat{\tau}_1^{\omega}}{\kappa}, \frac{\kappa^2\!\left( \frac{\Delta}{2+\Delta} \right)^2\!\xi^2 \sigma^2\epsilon^2\hat{\tau}_1^{\omega}}{64c(c+\kappa)V_{\max}^2}\right\}.
\end{align*}

By setting 
\begin{equation*}
    1- \frac{12cm^*}{\kappa}\left(1+\frac{2c}{\kappa}\right)|\mcs||\mca| \exp\left( -\frac{\kappa^2(1-\kappa)^2\left( \frac{\Delta}{2+\Delta} \right)^2\xi^2 \sigma^2\epsilon^2\hat{\tau}_1^{\omega}}{64c(c+\kappa)V_{\max}^2} \right) \geq 1-\delta,
\end{equation*}
we obtain
\begin{equation*}
    \hat{\tau}_1 \geq \left( \frac{64c(c+\kappa)V_{\max}^2}{\kappa^2(1-\kappa)^2\left( \frac{\Delta}{2+\Delta} \right)^2\xi^2 \sigma^2\epsilon^2}\ln \frac{12cm^*|\mcs||\mca|(2c+\kappa)}{\kappa^2\delta} \right)^{\frac{1}{\omega}}.
\end{equation*}

Considering the conditions on $\hat{\tau}_1$ in~\Cref{lem:Gq} and~\Cref{lem:conditionalBound}, we choose 
\begin{equation*}
    \hat{\tau}_1 = \Theta\left( \left( \frac{V_{\max}^2}{(1-\gamma)^4\epsilon^2}\ln \frac{m^*|\mcs||\mca|V_{\max}^2}{(1-\gamma)^4\epsilon^2\delta} \right)^{\frac{1}{\omega}} \right).
\end{equation*}

Finally, applying the number of iterations $m^*=\left\lceil\frac{4}{1-\gamma}\ln \frac{2\gamma V_{\max}}{\epsilon(1-\gamma)}\right\rceil$ and Lemma \ref{lem:iteration}, we conclude that it suffices to let
\begin{align*}
    T&=\Omega\left( \left( \frac{V_{\max}^2}{(1-\gamma)^4\epsilon^2}\ln \frac{m^*|\mcs||\mca|V_{\max}^2}{(1-\gamma)^4\epsilon^2\delta} \right)^{\frac{1}{\omega}} + \left(\frac{2c}{\kappa}\frac{1}{1-\gamma} \ln\frac{\gamma V_{\max}}{(1-\gamma)\epsilon} \right)^{\frac{1}{1-\omega}} \right)\\
    &=\Omega\left( \left( \frac{V_{\max}^2}{(1-\gamma)^4\epsilon^2}\ln \frac{|\mcs||\mca|V_{\max}^2\ln(\frac{V_{\max}}{(1-\gamma)\epsilon})}{(1-\gamma)^5\epsilon^2\delta} \right)^{\frac{1}{\omega}} + \left(\frac{1}{1-\gamma} \ln\frac{ V_{\max}}{(1-\gamma)\epsilon} \right)^{\frac{1}{1-\omega}} \right)\\
    &=\Omega\left( \left( \frac{V_{\max}^2}{(1-\gamma)^4\epsilon^2}\ln \frac{|\mcs||\mca|V_{\max}^2}{(1-\gamma)^5\epsilon^2\delta} \right)^{\frac{1}{\omega}} + \left(\frac{1}{1-\gamma} \ln\frac{ V_{\max}}{(1-\gamma)\epsilon} \right)^{\frac{1}{1-\omega}} \right),
\end{align*}
to attain an $\epsilon$-accurate Q-estimator.

\section{Proof of Theorem \ref{thm:asyncDQ} }\label{app:asyncThm}

The main idea of this proof is similar to that of Theorem \ref{thm:syncDQ} with further efforts to characterize the effects of asynchronous sampling. The proof also consists of three parts: (a) Part I which analyzes the stochastic error propagation between the two Q-estimators $\norm{Q^B_t  - Q^A_t }$; (b) Part II which analyzes the error dynamics between one Q-estimator and the optimum $\norm{Q^A_t -Q^* }$ conditioned on the error event in Part I; and (c) Part III which bounds the unconditional error $\norm{Q^A_t -Q^*}$.

To proceed the proof,  we first introduce the following notion of valid iterations for any fixed state-action pair $(s,a)$.

\begin{definition}\label{def:TAsa}
We define $T(s,a)$ as the collection of iterations if a state-action pair $(s,a)$ is used to update the Q-function $Q^A$ or $Q^B$, and $T^A(s, a)$ as the collection of iterations specifically updating $Q^A(s,a)$. In addition, we denote $T(s, a, t_1, t_2)$ and $T^A(s, a, t_1, t_2)$ as the set of iterations updating $(s,a)$ and $Q^A(s,a)$ between time $t_1$ and $t_2$, respectively. That is, 
\begin{align*}
    T(s, a, t_1, t_2) &= \left\{ t: t\in [t_1, t_2] \text{ and } t\in T(s,a) \right\},\\
    T^A(s, a, t_1, t_2) &= \left\{ t: t\in [t_1, t_2] \text{ and } t\in T^A(s,a) \right\}.
\end{align*}
Correspondingly, the number of iterations updating $(s,a)$ between time $t_1$ and $t_2$ equals the cardinality of $T(s, a, t_1, t_2)$ which is denoted as $|T(s, a, t_1, t_2)|$. Similarly, the number of iterations updating $Q^A(s,a)$ between time $t_1$ and $t_2$ is denoted as $|T^A(s, a, t_1, t_2)|$.
\end{definition}

Given Assumption \ref{asp:covering}, we can obtain some properties of the quantities defined above.
\begin{lemma}\label{prop:sa2L}
It always holds that $|T(s,a,t_1,t_2)|\leq t_2-t_1+1$ and $|T^A(s,a,t_1,t_2)|\leq t_2-t_1+1$. In addition, suppose that Assumption \ref{asp:covering} holds. Then we have $T(s,a,t,t+2kL-1)\geq k$ for any $t\geq 0$.
\end{lemma}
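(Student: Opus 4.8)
The plan is to dispatch the three claims in order; all are elementary, and only the last requires an actual argument.

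\textbf{The two upper bounds.} By Definition~\ref{def:TAsa}, both $T(s,a,t_1,t_2)$ and $T^A(s,a,t_1,t_2)$ are by construction subsets of the integer interval $\{t_1,t_1+1,\dots,t_2\}$, which has exactly $t_2-t_1+1$ elements; taking cardinalities immediately gives $|T(s,a,t_1,t_2)|\leq t_2-t_1+1$ and $|T^A(s,a,t_1,t_2)|\leq t_2-t_1+1$.

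\textbf{The lower bound $|T(s,a,t,t+2kL-1)|\geq k$.} First I would partition the window of $2kL$ consecutive iterations $\{t,t+1,\dots,t+2kL-1\}$ into $k$ disjoint blocks $B_j := \{t+2jL,\dots,t+2(j+1)L-1\}$, $j=0,1,\dots,k-1$, each of length $2L$. Since every iteration updates exactly one of $Q^A$ or $Q^B$, a pigeonhole argument shows that inside each $B_j$ at least one of the two estimators is updated at least $L$ times. Fix $j$ and suppose without loss of generality that $Q^A$ is updated at least $L$ times during $B_j$; let $u_1<u_2<\dots<u_L$ be the first $L$ iterations in $B_j$ at which $Q^A$ is updated. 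By construction these are $L$ consecutive updates of $Q^A$ (consecutive within the subsequence of $Q^A$-update times), so Assumption~\ref{asp:covering} forces every state-action pair of $Q^A$ — in particular $(s,a)$ — to be visited at least once among $u_1,\dots,u_L$, i.e.\ $(s,a)$ is used to update $Q^A$ at some iteration lying in $B_j$. Hence every block $B_j$ contains at least one iteration in $T(s,a)$, and since the $B_j$ are disjoint this yields $|T(s,a,t,t+2kL-1)|\geq \sum_{j=0}^{k-1}1 = k$.

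\textbf{Where the difficulty lies.} Honestly, there is no real obstacle here: the whole thing is pigeonhole plus one appeal to the covering assumption. The only subtlety worth stating carefully is the reading of ``consecutive $L$ updates'' in Assumption~\ref{asp:covering} — it means consecutive in the update stream of one fixed estimator, not consecutive in the global time index — together with the observation that the first $L$ updates of the majority estimator inside a block do form such a consecutive run and all fall within the block. I would remark in passing that the same argument in fact gives the slightly stronger statement that $(s,a)$ is visited during \emph{every} length-$2L$ window, but the form $|T(s,a,t,t+2kL-1)|\geq k$ recorded here is exactly what the subsequent asynchronous analysis uses.
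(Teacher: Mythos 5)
Your proof is correct and follows essentially the same route as the paper: the upper bounds are immediate from the definitions, and the lower bound is obtained by splitting the window into $k$ disjoint blocks of length $2L$, applying pigeonhole so that one estimator gets at least $L$ updates within each block, and invoking Assumption~\ref{asp:covering} to get one visit of $(s,a)$ per block. The only difference is that you spell out the "consecutive in the update stream of one estimator" reading of the assumption, which the paper's one-line proof leaves implicit; no gap either way.
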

\begin{proof}
Since in a consecutive $2L$ running iterations of Algorithm \ref{alg:doubleQ}, either $Q^A$ or $Q^B$ is updated at least $L$ times. Then following from Assumption \ref{asp:covering}, $(s,a)$ is visited at least once for each $2L$ running iterations of Algorithm \ref{alg:doubleQ}, which immediately implies this proposition.
\end{proof}

Now we proceed our proof by three parts.

\subsection{Part I: Bounding $\norm{Q^B_t-Q^A_t}$} \label{subsec:PartIThm2}

We upper bound $\norm{Q^B_t - Q^A_t}$ block-wisely using a decreasing sequence $\{G_q\}_{q\geq0}$ as defined in~\Cref{lem:GqAsy} below.
\begin{proposition}\label{lem:GqAsy}
Fix $\epsilon>0, \kappa\in(\ln 2,1)$ and $\Delta\in(0, e^{\kappa}-2)$. Consider asynchronous double Q-learning using a polynomial learning rate $\alpha_t = \frac{1}{t^\omega}$ with $\omega\in(0,1)$. Suppose that Assumption \ref{asp:covering} holds. Let $G_q = (1-\xi)^q G_0$ with $G_0 = V_{\max}$ and $\xi=\frac{1-\gamma}{4}$. Let $\hat{\tau}_{q+1} = \hat{\tau}_q + \frac{2cL}{\kappa}\hat{\tau}_q^\omega$ for $q \geq 1$ with $c\geq \frac{L\kappa(\ln(2+\Delta) + 1/\tau_1^\omega)}{2(\kappa-\ln(2+\Delta) - 1/\tau_1^\omega)}$ and $\hat{\tau}_1$ as the finishing time of the first block satisfying 
\begin{equation*}
    \hat{\tau}_1\geq  \max\left\{\left(\frac{1}{\kappa-\ln(2+\Delta)}\right)^{\frac{1}{\omega}}, \left( \frac{128cL(cL+\kappa)V_{\max}^2}{\kappa^2\left(\frac{\Delta}{2+\Delta}\right)^2\xi^2\sigma^2\epsilon^2 }\ln\left(\frac{64cL(cL+\kappa)V_{\max}^2}{\kappa^2\left(\frac{\Delta}{2+\Delta}\right)^2\xi^2\sigma^2\epsilon^2 }\right) \right)^{\frac{1}{\omega}} \right\}.
\end{equation*}
Then for any $n$ such that $G_n\geq\sigma\epsilon$, we have
\begin{align*}
    &\mP\left[ \forall q\in [0,n], \forall t\in[\hat{\tau}_{q+1},\hat{\tau}_{q+2}), \norm{Q^B_t - Q^A_t}\leq G_{q+1} \right]\\
    &\quad\geq 1- \frac{4cL(n+1)}{\kappa}\left(1+\frac{2cL}{\kappa}\right)|\mcs||\mca| \exp\left( -\frac{\kappa^2\left( \frac{\Delta}{2+\Delta} \right)^2\xi^2 \sigma^2\epsilon^2\hat{\tau}_1^{\omega}}{64cL(cL+\kappa)V_{\max}^2} \right).
\end{align*}
\end{proposition}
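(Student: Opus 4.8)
\textbf{Proof plan for~\Cref{lem:GqAsy}.}
The plan is to follow the four-step argument used for the synchronous counterpart~\Cref{lem:Gq}, adapting each step to the asynchronous update rule, in which at iteration $t$ only the single sampled pair $(s_t,a_t)$ of the randomly chosen estimator is touched. Throughout I use the notation $T(s,a)$, $T(s,a,t_1,t_2)$ of~\Cref{def:TAsa} and the covering consequence of~\Cref{asp:covering} recorded in~\Cref{prop:sa2L}.

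\emph{Steps 1--2 (dynamics and sandwich bounds).} First I would re-derive the analogue of~\Cref{lem:uBAdyn}: for a fixed $(s,a)$, $u^{BA}_{t+1}(s,a)=u^{BA}_t(s,a)$ when $t\notin T(s,a)$, and $u^{BA}_{t+1}(s,a)=(1-\alpha_t)u^{BA}_t(s,a)+\alpha_t F_t(s,a)$ with the same $F_t$ as in~\Cref{lem:uBAdyn} when $t\in T(s,a)$; the inequality $\norm{\mE[F_t|\mcF_t]}\le\frac{1+\gamma}{2}\norm{u^{BA}_t}$ transfers verbatim, since its proof only uses the form of a single update together with the $1/2{:}1/2$ choice of UPDATE(A)/UPDATE(B). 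Writing $F_t=h_t+z_t$ with $h_t=\mE[F_t|\mcF_t]$ and $z_t=F_t-h_t$, this is an SA recursion with contraction factor $\gamma'=\frac{1+\gamma}{2}$ about $u^*\equiv0$, now driven only on $t\in T(s,a)$. I would then define $X_{t;\hat\tau_q}(s,a)$ and $Z_{t;\hat\tau_q}(s,a)$ as in~\Cref{lem:uBAsanwich}, except that both are held constant on iterations $t\notin T(s,a)$, and reprove the two-sided bound $-X_{t;\hat\tau_q}+Z_{t;\hat\tau_q}\le u^{BA}_t\le X_{t;\hat\tau_q}+Z_{t;\hat\tau_q}$ by the same induction, the case $t\notin T(s,a)$ being trivial.

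\emph{Step 3 (block-wise bounds on $X$ and $Z$).} This is the core and the main obstacle. For the deterministic part, $X_{t;\hat\tau_q}(s,a)=\gamma'G_q+(1-\gamma')G_q\prod_{i\in T(s,a,\hat\tau_q,t-1)}(1-\alpha_i)$. The set of contributing indices is no longer an interval, so~\Cref{lem:prodHelp} cannot be invoked directly; instead I would bound each factor crudely by $1-\alpha_i\le 1-\hat\tau_{q+1}^{-\omega}$ and use~\Cref{prop:sa2L} to guarantee that the block $[\hat\tau_q,\hat\tau_{q+1})$ of length $\frac{2cL}{\kappa}\hat\tau_q^\omega$ contains at least $\lfloor\frac{c}{\kappa}\hat\tau_q^\omega\rfloor$ updates of $(s,a)$, whence the product is at most $\exp\!\big(-\frac{c}{\kappa}(\hat\tau_q/\hat\tau_{q+1})^\omega+\hat\tau_{q+1}^{-\omega}\big)$; the stated lower bounds on $c$ and $\hat\tau_1$ are chosen exactly so that this is $\le\frac{1}{2+\Delta}$, giving $X_{t;\hat\tau_q}(s,a)\le(\gamma'+\frac{2}{2+\Delta}\xi)G_q$ for all $t\ge\hat\tau_{q+1}$ (the analogue of~\Cref{lem:Xt}). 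For the stochastic part I would decompose $Z_{t;\hat\tau_q}(s,a)$ into the telescoping martingale $\sum_l\big(Z^l_{t;\hat\tau_q}(s,a)-Z^{l-1}_{t;\hat\tau_q}(s,a)\big)$ as in~\eqref{eq:ZZl}, where each increment vanishes if $\hat\tau_q+l-1\notin T(s,a)$ and is otherwise bounded by $\frac{2V_{\max}}{\hat\tau_q^\omega}$ (analogue of~\Cref{lem:ZlDiff}), with at most $\hat\tau_{q+2}-\hat\tau_q\le\frac{4cL(cL+\kappa)}{\kappa^2}\hat\tau_q^\omega$ nonzero terms; Azuma's inequality (\Cref{lem:azuma}) then yields $\mP[\,|Z_{t;\hat\tau_q}(s,a)|>\frac{\Delta}{2+\Delta}\xi G_q\mid t\in[\hat\tau_{q+1},\hat\tau_{q+2})\,]\le 2\exp\!\big(-\frac{\kappa^2(\Delta/(2+\Delta))^2\xi^2 G_q^2\hat\tau_q^\omega}{32cL(cL+\kappa)V_{\max}^2}\big)$. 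On the intersection of these events $\norm{u^{BA}_t}\le(\gamma'+\xi)G_q=G_{q+1}$ on block $q+1$, which closes the induction on $q$.

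\emph{Step 4 (union bound).} Finally I would union-bound over all $|\mcs||\mca|$ pairs and all blocks $q\in[0,n]$ via~\Cref{lem:unionBound}, use $G_q\ge G_n\ge\sigma\epsilon$ to make the exponents uniform, and apply~\Cref{lem:tauHelp} (with $b=1$ and $a=\frac{64cL(cL+\kappa)V_{\max}^2}{\kappa^2(\Delta/(2+\Delta))^2\xi^2\sigma^2\epsilon^2}$, which is precisely the second lower bound imposed on $\hat\tau_1$) to absorb the polynomial prefactors $\hat\tau_q^\omega$ into the exponential, replacing $32$ by $64$ in the rate; together with the fact that $X_{t;\hat\tau_q}$ is deterministic, this gives the stated high-probability bound, mirroring~\Cref{subsec:proofProp1}. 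The hard part, as noted, is Step 3: because the set $T(s,a,\hat\tau_q,t-1)$ of contributing iterations is neither contiguous nor a priori known, one loses the clean integral estimate of~\Cref{lem:prodHelp} and must instead pair the per-factor bound with the covering-number count from~\Cref{prop:sa2L}, and then track carefully how the resulting factor of $L$ propagates through both the deterministic decay rate and the martingale variance so that it lands in the denominators exactly as claimed.
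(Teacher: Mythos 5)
Your plan is correct and follows essentially the same route as the paper's proof: the same SA dynamics restricted to $t\in T(s,a)$, the same sandwich sequences $X_{t;\hat\tau_q},Z_{t;\hat\tau_q}$ frozen on $t\notin T(s,a)$, the covering count from~\Cref{prop:sa2L} guaranteeing at least $\tfrac{c}{\kappa}\hat\tau_q^\omega$ visits per block, Azuma's inequality with increment $\tfrac{2V_{\max}}{\hat\tau_q^\omega}$ and at most $\tfrac{4cL(cL+\kappa)}{\kappa^2}\hat\tau_q^\omega$ nonzero terms, and finally the union bound with~\Cref{lem:tauHelp} ($b=1$, the same $a$) to absorb the $\hat\tau_q^\omega$ prefactor. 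The only cosmetic deviation is in bounding the product over the non-contiguous visit set: you bound every factor by $1-\hat\tau_{q+1}^{-\omega}$, while the paper shifts the visit indices to the last $\lvert T(s,a,\hat\tau_q,\hat\tau_{q+1}-1)\rvert$ consecutive integers before $\hat\tau_{q+1}$ and invokes~\Cref{lem:prodHelp}; both give the same estimate $\exp\left(-\tfrac{c}{\kappa}(\hat\tau_q/\hat\tau_{q+1})^\omega+\hat\tau_{1}^{-\omega}\right)$, so the resulting constants coincide with the stated ones.
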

The proof of~\Cref{lem:GqAsy} consists of the following steps. Since the main idea of the proofs is similar to that of~\Cref{lem:Gq}, we will focus on pointing out the difference. We continue to use the notation $u^{BA}_t(s,a):=Q^B_t(s,a)-Q^A_t(s,a)$.

\textbf{Step 1: Characterizing the dynamics of $u^{BA}_t$}

First, we observe that when $(s,a)$ is visited at time $t$, i.e., $t\in T(s,a)$, Lemmas \ref{lem:uBAdyn} and \ref{lem:uBAsanwich} still apply. Otherwise, $u^{BA}$ is not updated. Thus, we have
\begin{equation*}
    u_{t+1}^{BA}(s,a) = \left\{\begin{aligned} 
    &u_t^{BA}(s,a),\quad t\notin T(s,a);\\
    &(1-\alpha_t)u_t^{BA}(s,a) + \alpha_t F_t(s,a),\quad t\in T(s,a),
    \end{aligned}\right.
\end{equation*}
where $F_t$ satisfies
\begin{equation*}
    \norm{\mE[F_t|\mcF_t]} \leq \frac{1+\gamma}{2} \norm{u_t^{BA}}.
\end{equation*}
For $t\in T(s,a)$, we rewrite the dynamics of $u_t^{BA}(s,a)$ as
\begin{equation*}
    u_{t+1}^{BA}(s,a) = (1-\alpha_t)u_t^{BA}(s,a) + \alpha_t F_t = (1-\alpha_t)u_t^{BA}(s,a) + \alpha_t (h_t(s,a) + z_t(s,a)),
\end{equation*}
where $h_t(s,a) = \mE[F_t(s,a)|\mcF_t]$ and $z_t(s,a) = F_t(s,a) - \mE[F_t(s,a)|\mcF_t]$. 

In the following steps, we use induction to proceed the proof of~\Cref{lem:GqAsy}. Given $G_q$ defined in~\Cref{lem:GqAsy}, since $\norm{u_t^{BA} }\leq G_0$ holds for all $t$, and thus it holds for $t\in [0,\hat{\tau}_1]$. Now suppose $\hat{\tau}_q$ satisfies that $\norm{u_t^{BA} }\leq G_q$ for any $t\geq \hat{\tau}_q$. Then we will show there exists $\hat{\tau}_{q+1} = \hat{\tau}_q + \frac{2cL}{\kappa}\hat{\tau}_q^\omega$ such that $\norm{u_t^{BA} }\leq G_{q+1}$ for any $t\geq \hat{\tau}_{q+1}$.

\textbf{Step 2: Constructing sandwich bounds}

We first observe that the following sandwich bound still holds for all $t\geq \hat{\tau}_q$.
\begin{equation*}
    -X_{t;\hat{\tau}_q}(s,a) + Z_{t;\hat{\tau}_q}(s,a) \leq u_t^{BA}(s,a) \leq X_{t;\hat{\tau}_q}(s,a) + Z_{t;\hat{\tau}_q}(s,a),
\end{equation*}
where $Z_{t;\hat{\tau}_q}(s,a)$ is defined as
\begin{equation*}
    Z_{t+1;\hat{\tau}_q}(s,a) = \left\{\begin{aligned}
    & Z_{t;\hat{\tau}_q}(s,a), \quad t\notin T(s,a)\\
    & (1-\alpha_t)Z_{t;\hat{\tau}_q}(s,a) + \alpha_t z_t(s,a), \quad t\in T(s,a),
    \end{aligned}
    \right.
\end{equation*}
with the initial condition $Z_{\hat{\tau}_q;\hat{\tau}_q}(s,a) = 0$, and $X_{t;\hat{\tau}_q}(s,a)$ is defined as
\begin{equation*}
    X_{t+1;\hat{\tau}_q}(s,a) = \left\{\begin{aligned}
    & X_{t;\hat{\tau}_q}(s,a), \quad t\notin T(s,a)\\
    & (1-\alpha_t)X_{t;\hat{\tau}_q}(s,a) + \alpha_t \gamma'G_q, \quad t\in T(s,a),
    \end{aligned}
    \right.
\end{equation*}
with $X_{\hat{\tau}_q;\hat{\tau}_q}(s,a) = G_q,\gamma'=\frac{1+\gamma}{2}$.

This claim can be shown by induction. This bound clearly holds for the initial case with $t=\hat{\tau}_q$. Assume that it still holds for iteration $t$. If $t\in T(s,a)$, the proof is the same as that of~\Cref{lem:uBAsanwich}. If $t\notin T(s,a)$, since all three sequences do not change from time $t$ to time $t+1$, the sandwich bound still holds. Thus we conclude this claim.

\textbf{Step 3: Bounding $X_{t;\hat{\tau}_q}(s,a)$}

Next, we bound the deterministic sequence $X_{t;\hat{\tau}_q}(s,a)$. Observe that $X_{t;\hat{\tau}_q}(s,a)\leq G_q$ for any $t\geq\hat{\tau}_q$. We will next show that $X_{t;\hat{\tau}_q}(s,a) \leq \left(\gamma' + \frac{2}{2+\Delta}\xi\right)G_q$ for any $t\in [\hat{\tau}_{q+1},\hat{\tau}_{q+2})$ where $\hat{\tau}_{q+1} = \hat{\tau}_q + \frac{2cL}{\kappa}\hat{\tau}_q^\omega$. 

Similarly to the proof of~\Cref{lem:Xt}, we still rewrite $X_{\hat{\tau}_q;\hat{\tau}_q}(s,a)$ as $X_{\hat{\tau}_q;\hat{\tau}_q}(s,a) = G_q = \gamma' G_q + (1-\gamma')G_q := \gamma' G_q + \rho_{\hat{\tau}_q} $. However, in this case the dynamics of $X_{t;\hat{\tau}_q}(s,a)$ is different, which is represented as
\begin{equation*}
    X_{t+1;\hat{\tau}_q}(s,a) =\left\{\begin{aligned} 
    & X_{t;\hat{\tau}_q}(s,a), \quad t\notin T(s,a)\\
    &(1-\alpha_t)X_{t;\hat{\tau}_q}(s,a) + \alpha_t \gamma'G_q = \gamma'G_q + (1-\alpha_t)\rho_t, \quad t\in T(s,a).
    \end{aligned}
    \right.
\end{equation*}
where $\rho_{t+1} = (1-\alpha_t)\rho_t$ when $t\in T(s,a)$. By the definition of $\rho_t$, we obtain
\begin{align*}
    \rho_t &= \rho_{\hat{\tau}_q}\underset{i\in T(s, a, \hat{\tau}_q, t-1)}{\Pi}(1-\alpha_i) 
    = (1-\gamma')G_q\underset{i\in T(s, a, \hat{\tau}_q, t-1)}{\Pi}(1-\alpha_i)\\
    &\leq (1-\gamma')G_q\underset{i\in T(s, a, \hat{\tau}_q, \hat{\tau}_{q+1}-1)}{\Pi}\left(1-\frac{1}{i^\omega}\right) 
    \leq (1-\gamma')G_q\prod_{i=\hat{\tau}_{q+1}-|T(s, a, \hat{\tau}_q, \hat{\tau}_{q+1}-1)|}^{\hat{\tau}_{q+1}-1}\left(1-\frac{1}{i^\omega}\right)\\
    &\overset{\text{(i)}}{\leq} (1-\gamma')G_q\prod_{i=\hat{\tau}_{q+1}-\frac{c}{\kappa}\hat{\tau}_q^\omega}^{\hat{\tau}_{q+1}-1}\left(1-\frac{1}{i^\omega}\right)
    \overset{\text{(ii)}}{\leq} (1-\gamma')G_q\exp\left( -\frac{\frac{c}{\kappa}\hat{\tau}_q^\omega-1}{(\hat{\tau}_{q+1}-1)^\omega} \right)\\
    &\leq (1-\gamma')G_q\exp\left( -\frac{\frac{c}{\kappa}\hat{\tau}_q^\omega-1 }{\hat{\tau}_{q+1}^\omega} \right)
    = (1-\gamma')G_q\exp\left( -\frac{c}{\kappa}\left(\frac{\hat{\tau}_q}{\hat{\tau}_{q+1}}\right)^\omega + \frac{1}{\hat{\tau}_{q+1}^\omega} \right)\\
    &\overset{\text{(iii)}}{\leq} (1-\gamma')G_q\exp\left( -\frac{c}{\kappa}\frac{1}{1+\frac{2cL}{\kappa}} + \frac{1}{\hat{\tau}_{1}^\omega} \right),
\end{align*}
where (i) follows from~\Cref{prop:sa2L}, (ii) follows~\Cref{lem:prodHelp}, and (iii) follows because $\hat{\tau}_q\geq\hat{\tau}_1$ and
\begin{equation*}
    \left(\frac{\hat{\tau}_q}{\hat{\tau}_{q+1}}\right)^\omega \geq \frac{\hat{\tau}_q}{\hat{\tau}_{q+1}} = \frac{\hat{\tau}_q}{\hat{\tau}_{q} + \frac{2cL}{\kappa}\hat{\tau}_q^\omega}\geq \frac{1}{1+\frac{2cL}{\kappa}}.
\end{equation*}

Since $\kappa\in(\ln 2,1)$ and $\Delta\in(0, e^{\kappa}-2)$, we have $\ln (2+\Delta)\in (0, \kappa)$. Further, observing $\hat{\tau_1}^\omega > \frac{1}{\kappa-\ln(2+\Delta)}$, we obtain $\ln(2+\Delta) + \frac{1}{\hat{\tau_1}^\omega} \in (0,\kappa)$. Last, since $c\geq\frac{L\kappa(\ln(2+\Delta) + 1/\hat{\tau_1}^\omega)}{2(\kappa-\ln(2+\Delta) - 1/\hat{\tau_1}^\omega)}$, we have $-\frac{c}{1+\frac{2c}{\kappa}} + \frac{1}{\hat{\tau_1}^\omega}\leq -\ln(2+\Delta)$.

Finally, combining the above observations with the fact $1-\gamma'=2\xi$, we conclude that for any $t\geq \hat{\tau}_{q+1} = \hat{\tau}_q + \frac{2cL}{\kappa}\hat{\tau}_q^\omega$,
\begin{equation*}
    X_{t;\hat{\tau}_q}(s,a) \leq \left(\gamma' + \frac{2}{2+\Delta}\xi\right)G_q.
\end{equation*}

\textbf{Step 4: Bounding $Z_{t;\hat{\tau}_q}(s,a)$}

It remains to bound the stochastic sequence $Z_{t;\hat{\tau}_q}(s,a)$ by $\frac{\Delta}{2+\Delta}\xi G_q$ at epoch $q+1$. We define an auxiliary sequence $\{Z_{t;\hat{\tau}_q}^l (s,a)\}$ (which is different from that in \eqref{eq:Zl}) as:
\begin{equation*}
    Z^l_{t;\hat{\tau}_q}(s,a) = \sum_{i\in T(s, a, \hat{\tau}_q, t-1)}\alpha_i\underset{j\in T(s, a, i+1, t-1)}{\Pi}(1-\alpha_j) z_i(s,a).
\end{equation*}
Following the same arguments as the proof of Lemma \ref{lem:ZlDiff}, we conclude that $\{Z_{t;\hat{\tau}_q}^l (s,a)\}$ is a martingale sequence and satisfies
\begin{equation*}
    \lvert Z^l_{t;\hat{\tau}_q}(s,a) - Z^{l-1}_{t;\hat{\tau}_q}(s,a) \rvert = \alpha_{\hat{\tau}_q+l} |z_{\hat{\tau}_q + l}(s,a)|\leq \frac{2V_{\max}}{\hat{\tau}_q^\omega}.
\end{equation*}

In addition, note that 
\begin{align*}
    Z_{t;\hat{\tau}_q}(s,a) &= Z_{t;\hat{\tau}_q}(s,a) - Z_{\hat{\tau}_q;\hat{\tau}_q}(s,a)\\
    &= \sum_{l:\hat{\tau}_q+l-1\in T(s, a, \hat{\tau}_q, t-1)} (Z^l_{t;\hat{\tau}_q}(s,a) - Z^{l-1}_{t;\hat{\tau}_q}(s,a)) + Z^{\min(T(s, a, \hat{\tau}_q, t-1))}_{t;\hat{\tau}_q}(s,a).
\end{align*}
Then we apply Azuma' inequality in Lemma \ref{lem:azuma} and obtain
\begin{align*}
    &\mP\left[ \lvert Z_{t;\hat{\tau}_q}(s,a)  \rvert > \hat{\epsilon}| t\in[\hat{\tau}_{q+1},\hat{\tau}_{q+2})  \right]\\
    &\quad\leq 2\exp\left( \frac{-\hat{\epsilon}^2}{2\underset{l:\hat{\tau}_q+l-1\in T(s, a, \hat{\tau}_q, t-1)}{\sum} (Z^l_{t;\hat{\tau}_q}(s,a) \!-\! Z^{l-1}_{t;\hat{\tau}_q}(s,a))^2 \!+\! 2\left(Z^{\min(T(s, a, \hat{\tau}_q, t-1))}_{t;\hat{\tau}_q}(s,a)\right)^2} \right)\\
    &\quad\leq 2\exp\left( -\frac{\hat{\epsilon}^2\hat{\tau}_q^{2\omega}}{8(|T(s, a, \hat{\tau}_q, t-1)|+1) V_{\max}^2} \right) 
    \overset{\text{(i)}}{\leq} 2\exp\left( -\frac{\hat{\epsilon}^2\hat{\tau}_q^{2\omega}}{8(t-\hat{\tau}_q)  V_{\max}^2} \right)\\
    &\quad\leq 2\exp\left( -\frac{\hat{\epsilon}^2\hat{\tau}_q^{2\omega}}{8(\hat{\tau}_{q+2}-\hat{\tau}_q)  V_{\max}^2} \right) 
    = 2\exp\left( -\frac{\hat{\epsilon}^2\hat{\tau}_q^{2\omega}}{8 \left(\frac{2cL}{\kappa}\hat{\tau}_{q+1}^\omega+\frac{2cL}{\kappa}\hat{\tau}_q^\omega\right) V_{\max}^2} \right)\\
    &\quad = 2\exp\left( -\frac{\hat{\epsilon}^2\hat{\tau}_q^{2\omega}}{8 \left(\frac{2cL}{\kappa}(\hat{\tau}_{q}+\frac{2cL}{\kappa}\hat{\tau}_q^\omega)^\omega+\frac{2cL}{\kappa}\hat{\tau}_q^\omega\right) V_{\max}^2} \right)\\
    &\quad\leq 2\exp\left( -\frac{\kappa^2\hat{\epsilon}^2\hat{\tau}_q^{\omega}}{32cL(cL+\kappa) V_{\max}^2} \right)
\end{align*}
where (i) follows from~\Cref{prop:sa2L}.

\textbf{Step 5: Taking union over all blocks}

Finally, using the union bound of~\Cref{lem:unionBound} yields
\begin{align*}
    &\mP\left[ \forall q\in [0,n], \forall t\in[\hat{\tau}_{q+1},\hat{\tau}_{q+2}), \norm{Q^B_t  - Q^A_t }\leq G_{q+1} \right]\\
    &\quad\geq \mP\left[ \forall(s,a), \forall q\in [0,n], \forall t\in[\hat{\tau}_{q+1},\hat{\tau}_{q+2}), \lvert Z_{t;\hat{\tau}_q}(s,a)  \rvert \leq \frac{\Delta}{2+\Delta}\xi G_q \right]\\
    &\quad\geq 1 - \sum_{q=0}^n |\mcs||\mca|(\hat{\tau}_{q+2} - \hat{\tau}_{q+1}) \cdot \mP\left[ \lvert Z_{t;\hat{\tau}_q}(s,a)  \rvert > \frac{\Delta}{2+\Delta}\xi G_q \Big\rvert t\in[\hat{\tau}_{q+1},\hat{\tau}_{q+2})  \right]\\
    &\quad\geq 1 - \sum_{q=0}^n |\mcs||\mca| \frac{2cL}{\kappa}\hat{\tau}_{q+1}^\omega \cdot 2\exp\left( -\frac{\kappa^2\left( \frac{\Delta}{2+\Delta} \right)^2\xi^2 G_q^2\hat{\tau}_q^{\omega}}{32cL(cL+\kappa)V_{\max}^2} \right)\\
    &\quad\geq 1 - \sum_{q=0}^n |\mcs||\mca| \frac{2cL}{\kappa}\left(1+\frac{2cL}{\kappa}\right)\hat{\tau}_{q}^\omega \cdot 2\exp\left( -\frac{\kappa^2\left( \frac{\Delta}{2+\Delta} \right)^2\xi^2 G_q^2\hat{\tau}_q^{\omega}}{32cL(cL+\kappa)V_{\max}^2} \right)\\
    &\quad\overset{\text{(i)}}{\geq} 1 - \sum_{q=0}^n |\mcs||\mca| \frac{2cL}{\kappa}\left(1+\frac{2cL}{\kappa}\right)\hat{\tau}_{q}^\omega \cdot 2\exp\left( -\frac{\kappa^2\left( \frac{\Delta}{2+\Delta} \right)^2\xi^2 \sigma^2\epsilon^2\hat{\tau}_q^{\omega}}{32cL(cL+\kappa)V_{\max}^2} \right)\\
    &\quad\overset{\text{(ii)}}{\geq} 1 - \frac{4cL}{\kappa}\left(1+\frac{2cL}{\kappa}\right)\sum_{q=0}^n |\mcs||\mca| \cdot \exp\left( -\frac{\kappa^2\left( \frac{\Delta}{2+\Delta} \right)^2\xi^2 
    \sigma^2\epsilon^2\hat{\tau}_q^{\omega}}{64cL(cL+\kappa)V_{\max}^2} \right)\\
    &\quad\overset{\text{(iii)}}{\geq} 1- \frac{4cL(n+1)}{\kappa}\left(1+\frac{2cL}{\kappa}\right)|\mcs||\mca| \exp\left( -\frac{\kappa^2\left( \frac{\Delta}{2+\Delta} \right)^2\xi^2 \sigma^2\epsilon^2\hat{\tau}_1^{\omega}}{64cL(cL+\kappa)V_{\max}^2} \right),
\end{align*}
where (i) follows from $G_q \geq G_n \geq \sigma\epsilon $, (ii) follows from~\Cref{lem:tauHelp} by substituting $a=\frac{64cL(cL+\kappa)V_{\max}^2}{\kappa^2\left(\frac{\Delta}{2+\Delta}\right)^2\xi^2\sigma^2\epsilon^2 }, b=1$ and observing that
\begin{align*}
    \hat{\tau}_q^{\omega}&\geq\hat{\tau}_1^{\omega}\geq \frac{128cL(cL+\kappa)V_{\max}^2}{\kappa^2\left(\frac{\Delta}{2+\Delta}\right)^2\xi^2\sigma^2\epsilon^2 }\ln\left(\frac{64cL(cL+\kappa)V_{\max}^2}{\kappa^2\left(\frac{\Delta}{2+\Delta}\right)^2\xi^2\sigma^2\epsilon^2 }\right)=2ab\ln ab,
\end{align*}
and (iii) follows from $\hat{\tau}_q \geq \hat{\tau}_1$.

\subsection{Part II: Conditionally bounding $\norm{Q^A_t-Q^*}$} \label{subsec:PartIIThm2}
We upper bound $\norm{Q^A_t-Q^*}$ block-wisely by a decreasing sequence $\{D_k\}_{k\geq0}$ conditioned on the following two events: fix a positive integer $m$, 
\begin{align}
    G & = \left\{ \forall(s,a), \forall k\in [0,m], \forall t\in[\tau_{k+1},\tau_{k+2}), \norm{Q^B_t  - Q^A_t }\leq \sigma D_{k+1} \right\}, \label{eq:eventE} \\
    H & = \{ \forall k\in [1,m+1], I^A_k\geq cL\tau_{k}^\omega \}, \label{eq:eventF}
\end{align}
where $I^A_k$ denotes the number of iterations updating $Q^A$ at epoch $k$, $\tau_k$ is the starting iteration index of the $k+1$th block, and $\omega$ is the parameter of the polynomial learning rate. Roughly, Event $G$ requires that the difference between the two Q-function estimators are bounded appropriately, and Event $H$ requires that $Q^A$ is sufficiently updated in each epoch. Again, we will design $\{D_k\}_{k\geq 0}$ in a way such that the occurrence of Event $G$ can be implied from the event that $\norm{u^{BA}_t}$ is bounded by $\{G_q\}_{q\geq 0}$ (see~\Cref{lem:coupleAsy} below). A lower bound of the probability for Event $H$ to hold is characterized in~\Cref{lem:halfQA} in Part III.
\begin{proposition}\label{lem:conditionalBoundAsy}
Fix $\epsilon>0, \kappa\in(\ln 2,1)$ and $\Delta\in(0, e^{\kappa}-2)$. Consider asynchronous double Q-learning using a polynomial learning rate $\alpha_t = \frac{1}{t^\omega}$ with $\omega\in(0,1)$. Let $\{G_q\}, \{\hat{\tau}_q\}$ be as defined in~\Cref{lem:GqAsy}. Define $D_k = (1-\beta)^k\frac{V_{\max}}{\sigma}$ with $\beta = \frac{1-\gamma(1+\sigma)}{2}$ and $\sigma = \frac{1-\gamma}{2\gamma}$. Let $\tau_k=\hat{\tau}_k$ for $k\geq0$.  Suppose that $c\geq \frac{L(\ln(2+\Delta) + 1/\tau_1^\omega)}{2(\kappa-\ln(2+\Delta) - 1/\tau_1^\omega)}$ and $\tau_1 = \hat{\tau}_1$ as the finishing time of the first epoch satisfies 
\begin{equation*}
    \tau_1\geq  \max\left\{\left(\frac{1}{\kappa-\ln(2+\Delta)}\right)^{\frac{1}{\omega}}, \left( \frac{32cL(cL+\kappa)V_{\max}^2}{\kappa^2\left(\frac{\Delta}{2+\Delta}\right)^2\beta^2\epsilon^2 }\ln \left(\frac{16cL(cL+\kappa)V_{\max}^2}{\kappa^2\left(\frac{\Delta}{2+\Delta}\right)^2\beta^2\epsilon^2 }\right) \right)^{\frac{1}{\omega}} \right\}.
\end{equation*}
Then for any $m$ such that $D_m\geq\epsilon$, we have
\begin{align*}
    &\mP\left[ \forall k\in [0,m], \forall t\in[\tau_{k+1},\tau_{k+2}), \norm{Q^A_t  - Q^* }\leq D_{k+1} |G,H \right]\\
    &\quad\geq 1 - \frac{4cL(m+1)}{\kappa}\left(1+\frac{2cL}{\kappa}\right)|\mcs||\mca| \exp\left( -\frac{\kappa^2\left( 1-\frac{2}{e} \right)^2\beta^2 \epsilon^2\tau_1^{\omega}}{16cL(cL+\kappa)V_{\max}^2} \right).
\end{align*}
\end{proposition}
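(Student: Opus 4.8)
The plan is to replay the four-step argument of \Cref{lem:conditionalBound} (synchronous Part~II), inserting the asynchronous bookkeeping of valid iterations exactly as was done in passing from \Cref{lem:Gq} to \Cref{lem:GqAsy}. Step~1 (coupling): I first establish the asynchronous counterpart of \Cref{lem:couple}. With $D_k = (1-\beta)^k\frac{V_{\max}}{\sigma}$, $\sigma = \frac{1-\gamma}{2\gamma}$ and $\beta = \frac{1-\gamma}{4}$ one checks $\gamma'':=\gamma(1+\sigma)=\frac{1+\gamma}{2}$ and $\beta = \xi$, so that, taking $\tau_k = \hat{\tau}_k$ as in the statement, the sequences $\{\sigma D_k\}$ and $\{G_q\}$ of \Cref{lem:GqAsy} have equal decay rates and equal starting values; hence $\{\sigma D_k\}$ dominates $\{G_q\}$ block-wise and the event $G$ of~\eqref{eq:eventE} is implied by the high-probability event of \Cref{lem:GqAsy}. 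From then on I work conditionally on $G$ and $H$.

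Step~2 (dynamics) and Step~3 (sandwich): By the argument of \Cref{lem:residualDynamics}, the residual $r_t(s,a):=Q^A_t(s,a)-Q^*(s,a)$ satisfies $r_{t+1}(s,a)=r_t(s,a)$ unless $t\in T^A(s,a)$ (cf.~\Cref{def:TAsa}), and when $t\in T^A(s,a)$, $r_{t+1}(s,a) = (1-\alpha_t)r_t(s,a) + \alpha_t(\mcT Q^A_t(s,a)-Q^*(s,a)) + \alpha_t w_t(s,a) + \alpha_t\gamma u^{BA}_t(s',a^*)$, where $w_t$ is a martingale-difference sequence w.r.t.\ $\mcF_t$ with $|w_t|\le V_{\max}$, and $u^{BA}_t$ is from Part~I. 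I then introduce sandwich sequences $Y_{t;\tau_k}(s,a)$ and $W_{t;\tau_k}(s,a)$ that freeze on $t\notin T^A(s,a)$ and otherwise obey $Y_{t+1}=(1-\alpha_t)Y_t+\alpha_t\gamma''D_k$ (with $Y_{\tau_k;\tau_k}=D_k$) and $W_{t+1}=(1-\alpha_t)W_t+\alpha_t w_t$ (with $W_{\tau_k;\tau_k}=0$). The induction of \Cref{lem:rtSandwich} --- trivial on frozen steps, and on active steps using the $\gamma$-contraction of $\mcT$ together with $\norm{u^{BA}_t}\le\sigma D_k$, which holds on $G$ --- yields $-Y_{t;\tau_k}+W_{t;\tau_k}\le r_t\le Y_{t;\tau_k}+W_{t;\tau_k}$ for all $t\ge\tau_k$.

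Step~4 (block-wise induction and union bound): Here the covering number enters. On event $H$ there are at least $cL\tau_k^\omega$ iterations updating $Q^A$ in block $k$; partitioning these into consecutive groups of $L$ and invoking \Cref{asp:covering}/\Cref{prop:sa2L} gives $|T^A(s,a,\tau_k,\tau_{k+1}-1)|\ge c\tau_k^\omega$ for every $(s,a)$. Repeating the estimate of \Cref{lem:Yt} with block increments $\frac{2cL}{\kappa}\tau_k^\omega$ --- the exponent being controlled by the stated conditions on $\tau_1$ and $c$ --- yields $Y_{t;\tau_k}(s,a)\le(\gamma''+\frac{2}{2+\Delta}\beta)D_k$ for $t\in[\tau_{k+1},\tau_{k+2})$. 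For the stochastic part I set $W^l_{t;\tau_k}(s,a)=\sum_{i\in T^A(s,a,\tau_k,\tau_k+l)}\alpha_i\prod_{j\in T^A(s,a,i+1,t-1)}(1-\alpha_j)w_i(s,a)$, which by the proof of \Cref{lem:WlDiff} is a martingale with increments at most $V_{\max}/\tau_k^\omega$; since $|T^A(s,a,\tau_k,t-1)|\le\tau_{k+2}-\tau_k\le\frac{4cL(cL+\kappa)}{\kappa^2}\tau_k^\omega$, Azuma's inequality (\Cref{lem:azuma}) conditioned on $G,H$ bounds $\mP[|W_{t;\tau_k}(s,a)|>\frac{\Delta}{2+\Delta}\beta D_k\mid t\in[\tau_{k+1},\tau_{k+2}),G,H]$ by $2\exp(-\frac{\kappa^2(\Delta/(2+\Delta))^2\beta^2 D_k^2\tau_k^\omega}{8cL(cL+\kappa)V_{\max}^2})$. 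A union bound over all $(s,a)$ and $k\le m$ (\Cref{lem:unionBound}), the fact $D_k\ge D_m\ge\epsilon$, and \Cref{lem:tauHelp} with $a=\frac{16cL(cL+\kappa)V_{\max}^2}{\kappa^2(\Delta/(2+\Delta))^2\beta^2\epsilon^2}$, $b=1$ absorbing the polynomial prefactor, give the stated probability bound; since $Y$ is deterministic and $\gamma''+\beta=1-\beta$, the two bounds combine into $\norm{r_t}\le D_{k+1}$.

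The main obstacle is the step that is genuinely new relative to the synchronous case: converting the global count $I^A_k\ge cL\tau_k^\omega$ in event $H$ into the per-$(s,a)$ count $|T^A(s,a,\tau_k,\tau_{k+1}-1)|\ge c\tau_k^\omega$ via \Cref{asp:covering}, and making sure this accounting remains consistent with the block boundaries $\tau_k=\hat{\tau}_k$ inherited from Part~I and with the exponent bookkeeping that produces the contraction factor in \Cref{lem:Yt}. A secondary point requiring care is that Azuma is applied under conditioning on $G$ and $H$, both path-dependent; this is harmless because $w_t=\mcT_t Q^A_t-\mcT Q^A_t$ has zero conditional mean given $\mcF_t$ for any realized update schedule, so one may condition on the (path-measurable) schedule first and then concentrate, but it must be phrased carefully to avoid circularity between the constraint on $u^{BA}$ carried by $G$ and the bound being proved.
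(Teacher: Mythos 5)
Your proposal follows essentially the same route as the paper's proof: the same coupling of $\{\sigma D_k\}$ with $\{G_q\}$ (via $\beta=\xi$, $\tau_k=\hat{\tau}_k$), the same frozen/active sandwich sequences $Y_{t;\tau_k},W_{t;\tau_k}$, the same conversion of event $H$ together with \Cref{asp:covering}/\Cref{prop:sa2L} into the per-$(s,a)$ count $|T^A(s,a,\tau_k,\tau_{k+1}-1)|\geq c\tau_k^\omega$, and the same Azuma--union-bound--\Cref{lem:tauHelp} bookkeeping with block length $\frac{2cL}{\kappa}\tau_k^\omega$ and $\tau_{k+2}-\tau_k\leq\frac{4cL(cL+\kappa)}{\kappa^2}\tau_k^\omega$. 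Your exponent carries the factor $\left(\frac{\Delta}{2+\Delta}\right)^2$, which is exactly what the paper's own computation yields; the $\left(1-\frac{2}{e}\right)^2$ appearing in the proposition statement is an inconsistency in the paper, not a gap in your argument.
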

Recall that in the proof of~\Cref{lem:conditionalBound}, $Q^A$ is not updated at each iteration and thus we introduced notations $T^A$ and $T^A(t_1,t_2)$ in Definition \ref{def:TA} to capture the convergence of the error $\norm{Q^A -Q^* }$. In this proof, the only difference is that when choosing to update $Q^A$, only one $(s,a)$-pair is visited. Therefore, the proof of~\Cref{lem:conditionalBoundAsy} is similar to that of~\Cref{lem:conditionalBound}, where most of the arguments simply substitute $T^A, T^A(t_1,t_2)$ in the proof of~\Cref{lem:conditionalBound} by $T^A(s,a), T^A(s,a,t_1,t_2)$ in Definition \ref{def:TAsa}, respectively. Certain bounds are affected by such substitutions. In the following, we proceed the proof of~\Cref{lem:conditionalBoundAsy} in five steps, and focus on pointing out the difference from the proof of~\Cref{lem:conditionalBound}. More details can be referred to~\Cref{subsec:PartII}.

\textbf{Step 1: Coupling $\{D_k\}_{k\geq0}$ and $\{G_q\}_{q\geq0}$}

We establish the relationship between $\{D_k\}_{k\geq0}$ and $\{G_q\}_{q\geq0}$ in the same way as Lemma \ref{lem:couple}. For the convenience of reference, we restate \Cref{lem:couple} in the following.
\begin{lemma}\label{lem:coupleAsy}
Let $\{G_q\}$ be defined in~\Cref{lem:GqAsy}, and let $D_k = (1-\beta)^k\frac{V_{\max}}{\sigma}$ with $\beta = \frac{1-\gamma(1+\sigma)}{2}$ and $\sigma = \frac{1-\gamma}{2\gamma}$. Then we have
\begin{align*}
    &\mP\left[ \forall(s,a), \forall q\in [0,m], \forall t\in[\hat{\tau}_{q+1},\hat{\tau}_{q+2}), \norm{Q^B_t  - Q^A_t }\leq G_{q+1} \right]\\
    &\quad\leq \mP\left[ \forall(s,a), \forall k\in [0,m], \forall t\in[\tau_{k+1},\tau_{k+2}), \norm{Q^B_t  - Q^A_t }\leq \sigma D_{k+1} \right],
\end{align*}
given that $\tau_k = \hat{\tau}_{k }$.
\end{lemma}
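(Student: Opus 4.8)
The plan is to transcribe the proof of \Cref{lem:couple} essentially verbatim, since that argument never used the synchronous update rule: it is purely an algebraic comparison of two geometric sequences under a common block partition, and therefore applies identically in the asynchronous setting once $\{G_q\}$ and $\{D_k\}$ are taken to be the sequences specified in \Cref{lem:GqAsy} and in the statement above.

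First I would substitute the prescribed value $\sigma = \frac{1-\gamma}{2\gamma}$ into $\beta = \frac{1-\gamma(1+\sigma)}{2}$ and verify the one-line identity $\beta = \frac{1-\gamma}{4} = \xi$; consequently $\{\sigma D_k\}$ and $\{G_q\}$ decay at the same rate $1-\xi$. Next I would record the matching initial values $G_0 = V_{\max}$ and $\sigma D_0 = \sigma\cdot\frac{V_{\max}}{\sigma} = V_{\max}$, so that $G_0 = \sigma D_0$; together with the equal decay rate this gives $G_q = \sigma D_q$ for every $q\ge 0$.

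Finally, because the statement fixes $\tau_k = \hat\tau_k$ for all $k$, the blocks $[\hat\tau_{q+1},\hat\tau_{q+2})$ and $[\tau_{k+1},\tau_{k+2})$ coincide index by index, so the bound $G_{q+1}$ imposed on block $q+1$ on the left-hand side is literally the bound $\sigma D_{k+1}$ imposed on block $k+1$ on the right-hand side. Hence the event inside the left-hand probability equals the event inside the right-hand probability, and the asserted inequality holds, in fact with equality.

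I do not anticipate a genuine obstacle: the only care needed is bookkeeping — ensuring the $\sigma$ appearing in \Cref{lem:GqAsy} (through the condition $G_n\ge\sigma\epsilon$) is the same $\sigma$ used to define $D_k$, and that $\tau_k=\hat\tau_k$ is precisely the identification under which the blocks of \Cref{lem:GqAsy} were defined. Given that, in the write-up I would simply point back to the proof of \Cref{lem:couple} rather than repeat the identical computation, perhaps adding one sentence to note that the redundant quantifier $\forall(s,a)$ in the present statement is subsumed by the supremum in $\norm{\cdot}$ and changes nothing.
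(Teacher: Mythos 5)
Your proposal is correct and matches the paper's own treatment: the paper proves \Cref{lem:coupleAsy} by simply restating \Cref{lem:couple}, whose proof is exactly your computation that $\beta=\xi=\frac{1-\gamma}{4}$ and $G_0=\sigma D_0$, hence $G_q=\sigma D_q$ block by block under $\tau_k=\hat{\tau}_k$. Your additional remarks (that the events in fact coincide so equality holds, and that the $\forall(s,a)$ quantifier is subsumed by the supremum norm) are accurate and harmless.
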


\textbf{Step 2: Constructing sandwich bounds}

Let $r_t(s,a)=Q^A(s,a)-Q^*(s,a)$ and $\tau_k$ be such that $\norm{r_t}\leq D_k$ for all $t\geq\tau_k$. The requirement of Event $G$ yields
\begin{equation*}
    -Y_{t;\tau_k}(s,a) + W_{t;\tau_k}(s,a) \leq r_t(s,a) \leq Y_{t;\tau_k}(s,a) + W_{t;\tau_k}(s,a),
\end{equation*}
where $W_{t;\tau_k}(s,a)$ is defined as
\begin{equation*}
    W_{t+1;\tau_k}(s,a) = \left\{\begin{aligned}
    &W_{t;\tau_k}(s,a), \quad t\notin T^A(s,a);\\
    &(1-\alpha_t)W_{t;\tau_k}(s,a) + \alpha_t w_t(s,a), \quad t\in T^A(s,a),
    \end{aligned}\right.
\end{equation*}
with $w_{t}(s,a) = \mcT_{t} Q_{t}^A(s,a) - \mcT Q_{t}^A(s,a)$ and $W_{\tau_k;\tau_k}(s,a) = 0$, and $Y_{t;\tau_k}(s,a)$ is given by
\begin{equation*}
    Y_{t+1;\tau_k}(s,a) = \left\{\begin{aligned}
    &Y_{t;\tau_k}(s,a), \quad t\notin T^A(s,a);\\
    &(1-\alpha_t)Y_{t;\tau_k}(s,a) + \alpha_t \gamma''D_k, \quad t\in T^A(s,a),
    \end{aligned}\right.
\end{equation*}
with $Y_{\tau_k;\tau_k}(s,a) = D_k$ and $\gamma''=\gamma(1+\sigma)$.

\textbf{Step 3: Bounding $Y_{t;\tau_k}(s,a)$}

Next, we first bound $Y_{t;\tau_k}(s,a)$. Observe that $Y_{t;\tau_k}(s,a) \leq D_k$ for any $t \geq \tau_k$. We will bound $Y_{t;\tau_k}(s,a)$ by $\left(\gamma'' + \frac{2}{2+\Delta}\beta\right)D_k$ for block $k+1$.

We use a similar representation of $Y_{t;\tau_k}(s,a)$ as in the proof of Lemma \ref{lem:Yt}, which is given by
\begin{equation*}
    Y_{t+1;\tau_k}(s,a) = \left\{\begin{aligned}
    & Y_{t;\tau_k}(s,a), \quad t\notin T^A(s,a)\\
    &(1-\alpha_t)Y_{t;\tau_k}(s,a) + \alpha_t \gamma''G_q = \gamma''G_q + (1-\alpha_t)\rho_t, \quad t\in T^A(s,a)
    \end{aligned}\right.
\end{equation*}
where $\rho_{t+1} = (1-\alpha_t)\rho_t$ for $t\in T^A(s,a)$. By the definition of $\rho_t$, we obtain
\begin{align*}
    \rho_t &= \rho_{\tau_k}\prod_{i\in T^A(s, a, \tau_k, t-1)}(1-\alpha_i) = (1-\gamma'')D_k\prod_{i\in T^A(s, a, \tau_k, t-1)}(1-\alpha_i)\\
    &= (1-\gamma'')D_k\prod_{i\in T^A(s, a, \tau_k, t-1)}\left(1-\frac{1}{i^\omega}\right) 
    \overset{\text{(i)}}{\leq} (1-\gamma'')D_k\prod_{i\in T^A(s, a, \tau_k, \tau_{k+1}-1)}\left(1-\frac{1}{i^\omega}\right)\\
    &\overset{\text{(ii)}}{\leq} (1-\gamma'')D_k\prod_{i=\tau_{k+1}-c\tau_k^\omega}^{\tau_{k+1}-1}\left(1-\frac{1}{i^\omega}\right)
    \overset{\text{(iii)}}{\leq} (1-\gamma'')D_k\exp\left( -\frac{c\tau_k^\omega-1}{(\tau_{k+1}-1)^\omega} \right) \nonumber\\
    &\leq (1-\gamma'')D_k\exp\left( -\frac{c\tau_k^\omega-1}{\tau_{k+1}^\omega} \right) 
    = (1-\gamma'')D_k\exp\left( -c\left(\frac{\tau_k}{\tau_{k+1}}\right)^\omega + \frac{1}{\tau_{k+1}^\omega} \right) \nonumber\\
    &\overset{\text{(iv)}}{\leq} (1-\gamma'')D_k\exp\left( -\frac{c}{1+\frac{2Lc}{\kappa}} + \frac{1}{\tau_{1}^\omega} \right), \nonumber
\end{align*}
where (i) follows because $\alpha_i<1$ and $t\geq \tau_{k+1}$, (ii) follows from Proposition \ref{prop:sa2L} and the requirement of event $H$, (iii) follows from Lemma \ref{lem:tauHelp}, and (iv) holds because $\tau+k\geq\tau_1$ and 
\begin{equation*}
    \left(\frac{\tau_k}{\tau_{k+1}}\right)^\omega \geq \frac{\tau_k}{\tau_{k+1}} = \frac{\tau_k}{\tau_{k} + \frac{2cL}{\kappa}\tau_k^\omega}\geq \frac{1}{1+\frac{2cL}{\kappa}}.
\end{equation*}
Since $\kappa\in(\ln 2,1)$ and $\Delta\in(0, e^{\kappa}-2)$, we have $\ln (2+\Delta)\in (0, \kappa)$. Further, observing $\hat{\tau_1}^\omega > \frac{1}{\kappa-\ln(2+\Delta)}$, we obtain $\ln(2+\Delta) + \frac{1}{\hat{\tau_1}^\omega} \in (0,\kappa)$. Last, since $c\geq\frac{L(\ln(2+\Delta) + 1/\hat{\tau_1}^\omega)}{2(\kappa-\ln(2+\Delta) - 1/\hat{\tau_1}^\omega)}$, we have $-\frac{c}{1+\frac{2c}{\kappa}} + \frac{1}{\hat{\tau_1}^\omega}\leq -\ln(2+\Delta)$.

Then, we have $\rho_t\leq \frac{1-\gamma''}{2+\Delta}D_k$. Thus we conclude that for any $t\in [\tau_{k+1}, \tau_{k+2}]$,
\begin{equation*}
    Y_{t;\tau_k}(s,a) \leq \left(\gamma'' + \frac{2}{2+\Delta}\beta\right)D_k.
\end{equation*}

\textbf{Step 4: Bounding $W_{t;\tau_k}(s,a)$}

It remains to bound $|W_{t;\tau_k}(s,a)|\leq \left(1-\frac{2}{2+\Delta}\right)\beta D_k$ for $t\in [\tau_{k+1},\tau_{k+2})$.

Similarly to \Cref{subsec:proofProp2}, we define a new sequence $\{W_{t;\tau_k}^l(s,a)\}$ as
\begin{equation*}
    W^l_{t;\tau_k}(s,a) = \sum_{i\in T^A(s, a, \tau_k, \tau_k+l)} \alpha_i\underset{j\in T^A(s, a, i+1, t-1)}{\Pi} (1-\alpha_j)w_i(s,a).
\end{equation*}
The same arguments as the proof of Lemma \ref{lem:WlDiff} yields
\begin{equation*}
    \lvert W^l_{t;\tau_k}(s,a) - W^{l-1}_{t;\tau_k}(s,a) \rvert \leq  \frac{V_{\max}}{\tau_k^\omega}.
\end{equation*}

If we fix $\tilde{\epsilon}>0$, then for any $t\in[\tau_{k+1},\tau_{k+2})$ we have
\begin{align*}
    &\mP\left[ |W_{t;\tau_k}(s,a)|>\tilde{\epsilon} | t\in[\tau_{k+1},\tau_{k+2}),G,H \right]\\
    &\leq 2\exp\left( \frac{-\tilde{\epsilon}^2}{2\underset{l:\tau_k+l-1\in T^A(s,a,\tau_k, t\!-\!1)}{\sum}\!\left( W^l_{t;\tau_k}(s,a) \!-\! W^{l-1}_{t;\tau_k}(s,a) \right)^2 \!+\! 2(W^{\min(T^A(s,a,\tau_k, t\!-\!1))}_{t;\tau_k}(s,a))^2  } \right)\\
    &\leq 2\exp\left( -\frac{\hat{\epsilon}^2\tau_k^{2\omega}}{2(|T^A(s,a,\tau_k,t-1)|+1)V_{\max}^2} \right) \overset{\text{(i)}}{\leq} 2\exp\left( -\frac{\tilde{\epsilon}^2\tau_k^{2\omega}}{2(t-\tau_k)V_{\max}^2} \right)\\
    &\leq 2\exp\left( -\frac{\tilde{\epsilon}^2\tau_k^{2\omega}}{2(\tau_{k+2}-\tau_k)V_{\max}^2} \right) 
    \overset{\text{(ii)}}{\leq} 2\exp\left( -\frac{\kappa^2\tilde{\epsilon}^2\tau_k^{\omega}}{8cL(cL+\kappa)V_{\max}^2} \right)\\
    &= 2\exp\left( -\frac{\kappa^2\tilde{\epsilon}^2\tau_k^{\omega}}{8cL(cL+\kappa)V_{\max}^2} \right),
\end{align*}
where (i) follows from Proposition \ref{prop:sa2L} and (ii) holds because
\begin{equation*}
    \tau_{k+2} - \tau_k = \frac{2cL}{\kappa}\tau_{k+1}^\omega + \frac{2cL}{\kappa}\tau_k^\omega = \frac{2cL}{\kappa}\left( \tau_k + \frac{2cL}{\kappa}\tau_k^\omega \right)^\omega + \frac{2cL}{\kappa}\tau_k^\omega \leq \frac{4cL(cL+\kappa)}{\kappa^2}\tau_k^\omega.
\end{equation*}

\textbf{Step 5: Taking union over all blocks }

Applying the union bound in Lemma \ref{lem:unionBound}, we obtain
\begin{align*}
    &\mP\left[ \forall k\in [0,m], \forall t\in[\tau_{k+1},\tau_{k+2}), \norm{Q^A_t  - Q^* }\leq D_{k+1}  | G,H\right]\\
    &\quad\geq \mP\left[ \forall(s,a), \forall k\in [0,m], \forall t\in[\tau_{k+1},\tau_{k+2}), \lvert W_{t;\tau_k}(s,a)  \rvert \leq \frac{\Delta}{2+\Delta}\beta D_k|G,H \right]\\
    &\quad\geq 1 - \sum_{k=0}^m |\mcs||\mca|(\tau_{k+2}-\tau_{k+1}) \cdot \mP\left[ \lvert W_{t;\tau_k}(s,a)  \rvert > \frac{\Delta}{2+\Delta}\beta D_k \Big\rvert t\in[\tau_{k+1},\tau_{k+2}),G,H  \right]\\
    &\quad\geq 1 - \sum_{k=0}^m |\mcs||\mca| \frac{2cL}{\kappa}\tau_{k+1}^\omega \cdot 2\exp\left( -\frac{\kappa^2\left( \frac{\Delta}{2+\Delta} \right)^2\beta^2 D_k^2\tau_k^{\omega}}{8cL(cL+\kappa)V_{\max}^2} \right)\\
    &\quad\geq 1 - \sum_{k=0}^m |\mcs||\mca| \frac{2cL}{\kappa}\left(1+\frac{2cL}{\kappa}\right)\tau_{k}^\omega \cdot 2\exp\left( -\frac{\kappa^2\left( \frac{\Delta}{2+\Delta} \right)^2\beta^2 D_k^2\tau_k^{\omega}}{8cL(cL+\kappa)V_{\max}^2} \right)\\
    &\quad\overset{\text{(i)}}{\geq} 1 - \sum_{k=0}^m |\mcs||\mca| \frac{2cL}{\kappa}\left(1+\frac{2cL}{\kappa}\right)\tau_{k}^\omega \cdot 2\exp\left( -\frac{\kappa^2\left( \frac{\Delta}{2+\Delta} \right)^2\beta^2 \epsilon^2\tau_k^{\omega}}{8cL(cL+\kappa)V_{\max}^2} \right)\\
    &\quad\overset{\text{(ii)}}{\geq} 1 - \frac{4cL}{\kappa}\left(1+\frac{2cL}{\kappa}\right)\sum_{k=0}^m |\mcs||\mca| \cdot \exp\left( -\frac{\kappa^2\left( \frac{\Delta}{2+\Delta} \right)^2\beta^2 \epsilon^2\tau_k^{\omega}}{16cL(cL+\kappa)V_{\max}^2} \right)\\
    &\quad\geq 1 - \frac{4cL(m+1)}{\kappa}\left(1+\frac{2cL}{\kappa}\right)|\mcs||\mca| \exp\left( -\frac{\kappa^2\left( \frac{\Delta}{2+\Delta} \right)^2\beta^2 \epsilon^2\tau_1^{\omega}}{16cL(cL+\kappa)V_{\max}^2} \right),
\end{align*}
where (i) follows because $D_k\geq D_m\geq \epsilon$, and (ii) follows from Lemma \ref{lem:tauHelp} by substituting $a=\frac{16cL(cL+\kappa)V_{\max}^2}{\kappa^2\left(\frac{\Delta}{2+\Delta}\right)^2\beta^2\epsilon^2 }, b=1$ and observing that
\begin{align*}
    \tau_k^{\omega}&\geq\hat{\tau}_1^{\omega}\geq \frac{32cL(cL+\kappa)V_{\max}^2}{\kappa^2\left(\frac{\Delta}{2+\Delta}\right)^2\beta^2\epsilon^2 }\ln\left(\frac{64cL(cL+\kappa)V_{\max}^2}{\kappa^2\left(\frac{\Delta}{2+\Delta}\right)^2\beta^2\epsilon^2 }\right)
    = 2ab\ln ab.
\end{align*}

\subsection{Part III: Bound $\norm{Q^A_t-Q^*}$}
In order to obtain the unconditional high-probability bound on $\norm{Q^A_t-Q^*}$, we first characterize a lower bound on the probability of Event $H$. Note that the probability of Event $G$ is lower bounded in~\Cref{lem:GqAsy}. 
\begin{lemma}\label{lem:halfQAAsy}
Let the sequence $\tau_k$ be the same as given in Lemma \ref{lem:coupleAsy}, i.e. $\tau_{k+1} = \tau_k + \frac{2cL}{\kappa}\tau_k^\omega$ for $k\geq 1$. Define $I^A_k$ as the number of iterations updating $Q^A$ at epoch $k$. Then we have
\begin{equation*}
    \mP\left[\forall k\in [1,m], I^A_k\geq cL\tau_{k}^\omega \right] \geq 1- m \exp\left( -\frac{(1-\kappa)^2cL\tau_1^\omega}{\kappa} \right).
\end{equation*}
\end{lemma}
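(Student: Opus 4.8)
The plan is to mimic the proof of~\Cref{lem:halfQA} almost verbatim, since the only structural change in the asynchronous setting is that each block $[\tau_k,\tau_{k+1})$ now has length $\frac{2cL}{\kappa}\tau_k^\omega$ instead of $\frac{2c}{\kappa}\tau_k^\omega$, whereas the mechanism deciding which Q-table is updated is unchanged: at every iteration either UPDATE(A) or UPDATE(B) is chosen with probability $1/2$, independently of everything else. The asynchrony only affects \emph{which} state--action pair is touched once a table has been chosen, not how often $Q^A$ versus $Q^B$ is chosen, so the relevant randomness is exactly the same Bernoulli process as in the synchronous case.

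First I would introduce, for each iteration $t$, the indicator $J^A_t$ equal to $1$ if $Q^A$ is selected for update at iteration $t$ and $0$ otherwise; these are i.i.d.\ $\mathrm{Bernoulli}(1/2)$. Then $I^A_k=\sum_{t=\tau_k}^{\tau_{k+1}-1}J^A_t$ is a $\mathrm{Binomial}(\tau_{k+1}-\tau_k,1/2)$ random variable, and for $k\geq 1$ we have $\tau_{k+1}-\tau_k=\frac{2cL}{\kappa}\tau_k^\omega$, so $I^A_k\sim\mathrm{Binomial}\!\big(\frac{2cL}{\kappa}\tau_k^\omega,1/2\big)$. Next I would invoke the same Hoeffding-type lower-tail estimate used in~\Cref{lem:halfQA}: if $X\sim\mathrm{Binomial}(n,p)$ then $\mP(X\leq \kappa np)\leq \exp(-2np^2(1-\kappa)^2)$ for $\kappa\in(0,1)$. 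Taking $n=\frac{2cL}{\kappa}\tau_k^\omega$ and $p=1/2$ gives $\mP[I^A_k\leq cL\tau_k^\omega]=\mP[I^A_k\leq \tfrac{\kappa}{2}\cdot\tfrac{2cL}{\kappa}\tau_k^\omega]\leq\exp(-\tfrac{(1-\kappa)^2 cL\tau_k^\omega}{\kappa})$.

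Finally I would take a union bound over $k=1,\dots,m$ and use the monotonicity $\tau_k\geq\tau_1$ (hence $\tau_k^\omega\geq\tau_1^\omega$) to conclude $\mP[\forall k\in[1,m],\,I^A_k\geq cL\tau_k^\omega]\geq 1-\sum_{k=1}^m\exp(-\tfrac{(1-\kappa)^2 cL\tau_k^\omega}{\kappa})\geq 1-m\exp(-\tfrac{(1-\kappa)^2 cL\tau_1^\omega}{\kappa})$, which is exactly the claim.

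There is essentially no hard part here. The only points requiring care are: keeping the extra factor $L$ from the block length throughout (it enters the exponent linearly, matching the stated bound), and justifying that the i.i.d.\ structure of $\{J^A_t\}$ is genuinely untouched by asynchronous sampling, as noted above. The statement is asserted only for $k\in[1,m]$, so — unlike~\Cref{lem:halfQA} — no separate treatment of the initial block $[0,\tau_1)$ is needed.
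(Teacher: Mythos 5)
Your proposal is correct and follows essentially the same route as the paper's proof: identify $I^A_k\sim\mathrm{Binomial}\bigl(\tfrac{2cL}{\kappa}\tau_k^\omega,\tfrac12\bigr)$, apply the Hoeffding lower-tail bound to get $\exp\bigl(-\tfrac{(1-\kappa)^2cL\tau_k^\omega}{\kappa}\bigr)$ per block, and finish with a union bound and $\tau_k\geq\tau_1$. Your explicit remark that the Bernoulli table-selection process is unaffected by asynchronous sampling is a useful clarification but not a departure from the paper's argument.
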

\begin{proof}
We use the same idea as the proof of Lemma \ref{lem:halfQA}. Since we only focus on the blocks with $k\geq 1$, $I^A_k\sim Binomial\left( \frac{2cL}{\kappa}\tau_k^\omega, 0.5 \right)$ in such a case. Thus the tail bound of a binomial random variable gives
\begin{equation*}
    \mP \left[I^A_k\leq \frac{\kappa}{2}\cdot \frac{2cL}{\kappa}\tau_k^\omega\right] \leq \exp\left( -\frac{(1-\kappa)^2 cL\tau_k^\omega}{\kappa} \right).
\end{equation*}
Then by the union bound, we have
\begin{align*}
    \mP \left[\forall k\in[1,m], I^A_k\geq cL\tau_k^\omega\right]
    &=\mP \left[\forall k\in[1,m], I^A_k\geq \frac{\kappa}{2}\cdot \frac{2cL}{\kappa}\tau_k^\omega\right]\\
    &\geq 1 - \sum_{k=1}^m\exp\left( -\frac{(1-\kappa)^2 cL\tau_k^\omega}{\kappa} \right)\\
    &\geq 1- m \exp\left( -\frac{(1-\kappa)^2 cL\tau_1^\omega}{\kappa} \right).
\end{align*}
\end{proof}

Following from Lemma \ref{lem:totalIter}, it suffices to determine the starting time at epoch $m^*=\left\lceil \frac{4}{1-\gamma}\ln \frac{2\gamma V_{\max}}{\epsilon(1-\gamma)}\right\rceil$. This can be done by using Lemma \ref{lem:iteration} if we have $\hat{\tau}_1$.

Now we are ready to prove the main result of~\Cref{thm:asyncDQ}. By the definition of $m^*$, we know $D_{m^*-1}\geq\epsilon, G_{m^*-1}\geq \sigma\epsilon$. Then we obtain
\begin{align*}
    &\mP(\norm{Q^A_{\tau_{m^*}} - Q^* } \leq \epsilon)\\
    & \geq\mP\left[\forall k\in [0,m^* -1], \forall t\in[\tau_{k+1},\tau_{k+2}), \norm{Q^A_t  - Q^* }\leq D_{k+1} \right]\\
    & = \mP\left[ \forall k\in [0,m^* -1], \forall t\in[\tau_{k+1},\tau_{k+2}), \norm{Q^A_t  - Q^* }\leq D_{k+1} |G,H \right]\cdot\mP(G\cap H)\\
    & \geq \mP\left[ \forall k\in [0,m^* -1], \forall t\in[\tau_{k+1},\tau_{k+2}), \norm{Q^A_t  - Q^* }\leq D_{k+1} |G,H \right]\\
    &\quad \cdot (\mP(G)+\mP(H)-1)\\
    &\overset{\text{(i)}}{\geq}\mP\left[ \forall k\in [0,m^* -1], \forall t\in[\tau_{k+1},\tau_{k+2}), \norm{Q^A_t  - Q^* }\leq D_{k+1} |G,H \right]\\
    &\quad\cdot \left(\mP\left[ \forall q\in [0, m^* -1], \forall t\in[\hat{\tau}_{q+1},\hat{\tau}_{q+2}), \norm{Q^B_t  - Q^A_t }\leq G_{q+1} \right] + \mP(H) - 1\right)\\
    &\overset{\text{(ii)}}{\geq}\left[ 1 - \frac{4cLm^*}{\kappa}\left(1+\frac{2cL}{\kappa}\right)|\mcs||\mca| \exp\left( -\frac{\kappa^2\left( \frac{\Delta}{2+\Delta} \right)^2\beta^2 \epsilon^2\tau_1^{\omega}}{16cL(cL+\kappa)V_{\max}^2} \right) \right]\\
    &\quad\cdot\left[ 1- \frac{4cLm^*}{\kappa}\left(1+\frac{2cL}{\kappa}\right)|\mcs||\mca| \exp\left( -\frac{\kappa^2\left( \frac{\Delta}{2+\Delta} \right)^2\xi^2 \sigma^2\epsilon^2\hat{\tau}_1^{\omega}}{64cL(cL+\kappa)V_{\max}^2} \right)\right.\\
    &\quad\quad \left.- m^*\! \exp\!\left( -\frac{(1 -\kappa)^2 cL\hat{\tau}_1^{\omega}}{\kappa} \right) \right]\\
    &\geq 1- \frac{4cLm^*}{\kappa}\left(1+\frac{2cL}{\kappa}\right)|\mcs||\mca| \exp\left( -\frac{\kappa^2\left( \frac{\Delta}{2+\Delta} \right)^2\beta^2 \epsilon^2\tau_1^{\omega}}{16cL(cL+\kappa)V_{\max}^2} \right)\\
    &\quad - \!\frac{4cLm^*}{\kappa}\left(1+\frac{2cL}{\kappa}\right)|\mcs||\mca| \exp\left( -\frac{\kappa^2\left( \frac{\Delta}{2+\Delta} \right)^2\xi^2 \sigma^2\epsilon^2\hat{\tau}_1^{\omega}}{64cL(cL+\kappa)V_{\max}^2} \right)\! -\! m^* \exp\left( -\frac{(1\!-\!\kappa)^2 cL\hat{\tau}_1^{\omega}}{\kappa} \right)\\
    &\overset{\text{(iii)}}{\geq} 1- \frac{12cLm^*}{\kappa}\left(1+\frac{2cL}{\kappa}\right)|\mcs||\mca| \exp\left( -\frac{\kappa^2(1-\kappa)^2\left( \frac{\Delta}{2+\Delta} \right)^2\xi^2 \sigma^2\epsilon^2\hat{\tau}_1^{\omega}}{64 cL(cL+\kappa)V_{\max}^2} \right),
\end{align*}
where (i) follows from~\Cref{lem:coupleAsy}, (ii) follows from Propositions \ref{lem:GqAsy} and \ref{lem:conditionalBoundAsy} and (iii) holds due to the fact that
\begin{align*}
    \frac{4cLm^*}{\kappa}\left(1+\frac{2cL}{\kappa}\right)|\mcs||\mca| = \max&\left\{ \frac{4cLm^*}{\kappa}\left(1+\frac{2cL}{\kappa}\right)|\mcs||\mca|, m^* \right\},\\
    \frac{\kappa^2(1\!-\!\kappa)^2\left( \frac{\Delta}{2+\Delta} \right)^2\xi^2 \sigma^2\epsilon^2\hat{\tau}_1^{\omega}}{64cL(cL+\kappa)V_{\max}^2}\!\leq\! \min&\left\{ \frac{\kappa^2\!\left( \frac{\Delta}{2+\Delta} \right)^2\!\beta^2 \epsilon^2\hat{\tau}_1^{\omega}}{16cL(cL+\kappa)V_{\max}^2}, \frac{(1\!-\!\kappa)^2\hat{\tau}_1^{\omega}}{\kappa}, \frac{\kappa^2\!\left( \frac{\Delta}{2+\Delta} \right)^2\!\xi^2 \sigma^2\epsilon^2\hat{\tau}_1^{\omega}}{64cL(cL+\kappa)V_{\max}^2}\right\}.
\end{align*}

By setting 
\begin{equation*}
    1- \frac{12cLm^*}{\kappa}\left(1+\frac{2cL}{\kappa}\right)|\mcs||\mca| \exp\left( -\frac{\kappa^2(1-\kappa)^2\left( \frac{\Delta}{2+\Delta} \right)^2\xi^2 \sigma^2\epsilon^2\hat{\tau}_1^{\omega}}{64 cL(cL+\kappa)V_{\max}^2} \right) \geq 1-\delta,
\end{equation*}
we obtain
\begin{equation*}
    \hat{\tau}_1 \geq \left( \frac{64 cL(cL+\kappa)V_{\max}^2}{\kappa^2(1-\kappa)^2\left( \frac{\Delta}{2+\Delta} \right)^2\xi^2 \sigma^2\epsilon^2}\ln \frac{12m^*|\mcs||\mca|cL(2cL+\kappa)}{\kappa^2\delta} \right)^{\frac{1}{\omega}}.
\end{equation*}

Combining with the requirement of $\hat{\tau}_1$ in Propositions \ref{lem:GqAsy} and \ref{lem:conditionalBoundAsy}, we can choose 
\begin{equation*}
    \hat{\tau}_1 = \Theta\left( \left(\frac{ L^4V_{\max}^2}{(1-\gamma)^4\epsilon^2}\ln \frac{ m^*|\mcs||\mca|L^4V_{\max}^2}{(1-\gamma)^4\epsilon^2\delta} \right)^{\frac{1}{\omega}} \right).
\end{equation*}

Finally, applying $m^*=\left\lceil\frac{4}{1-\gamma}\ln \frac{2\gamma V_{\max}}{\epsilon(1-\gamma)}\right\rceil$ and Lemma \ref{lem:iteration}, we conclude that it suffices to let
\begin{align*}
    T&=\Omega\left( \left(\frac{ L^4V_{\max}^2}{(1-\gamma)^4\epsilon^2}\ln \frac{m^*|\mcs||\mca|L^4V_{\max}^2}{(1-\gamma)^4\epsilon^2\delta} \right)^{\frac{1}{\omega}} + \left(\frac{2cL}{\kappa}\frac{1}{1-\gamma} \ln\frac{\gamma V_{\max}}{(1-\gamma)\epsilon} \right)^{\frac{1}{1-\omega}} \right)\\
    &=\Omega\left( \left(\frac{ L^4V_{\max}^2}{(1-\gamma)^4\epsilon^2}\ln \frac{|\mcs||\mca|L^4V_{\max}^2 \ln \frac{2\gamma V_{\max}}{\epsilon(1-\gamma)}}{(1-\gamma)^5\epsilon^2\delta} \right)^{\frac{1}{\omega}} + \left(\frac{2cL}{\kappa}\frac{1}{1-\gamma} \ln\frac{\gamma V_{\max}}{(1-\gamma)\epsilon} \right)^{\frac{1}{1-\omega}} \right)\\
    &=\Omega\left( \left(\frac{ L^4V_{\max}^2}{(1-\gamma)^4\epsilon^2}\ln \frac{|\mcs||\mca|L^4V_{\max}^2 }{(1-\gamma)^5\epsilon^2\delta} \right)^{\frac{1}{\omega}} + \left(\frac{L^2}{1-\gamma} \ln\frac{\gamma V_{\max}}{(1-\gamma)\epsilon} \right)^{\frac{1}{1-\omega}} \right).
\end{align*}
to attain an $\epsilon$-accurate Q-estimator.

\end{document}